\newcommand{\red}[1]{\textcolor{black}{#1}}
\DeclareMathOperator*{\argmin}{arg\,min}
\DeclareMathOperator*{\trace}{tr}
\DeclareMathOperator*{\sign}{sign}
\begin{document}

\title{Enhanced Feature Learning via Regularisation: \\
Integrating Neural Networks and Kernel Methods}

\author{\name Bertille Follain \email bertille.follain@inria.fr \\
       \addr Inria, Département d’Informatique de l’Ecole Normale Supérieure, PSL Research University \\
       48 rue Barrault, 75013, Paris, France
       \AND
       \name Francis Bach \email francis.bach@inria.fr \\
       \addr Inria, Département d’Informatique de l’Ecole Normale Supérieure, PSL Research University \\
       48 rue Barrault, 75013, Paris, France}

\editor{}

\maketitle

\begin{abstract}%   <- trailing '%' for backward compatibility of .sty file
We propose a new method for feature learning and function estimation in supervised learning via regularised empirical risk minimisation. Our approach considers functions as expectations of Sobolev functions over all possible one-dimensional projections of the data. This framework is similar to kernel ridge regression, where the kernel is $\mathbb{E}_w ( k^{(B)}(w^\top x,w^\top x^\prime))$, with $k^{(B)}(a,b) := \min(|a|, |b|)\mathds{1}_{ab>0}$ the Brownian kernel, and the distribution of the projections $w$ is learnt. This can also be viewed as an infinite-width one-hidden layer neural network, optimising the first layer's weights through gradient descent and explicitly adjusting the non-linearity and weights of the second layer. We introduce a gradient-based computational method for the estimator, called \textsc{Brownian Kernel Neural Network} (\textsc{BKerNN}), using particles to approximate the expectation, where the positive homogeneity of the Brownian kernel \red{leads to improved robustness to local minima}. Using Rademacher complexity, we show that \textsc{BKerNN}'s expected risk converges to the minimal risk with explicit high-probability rates of $O( \min((d/n)^{1/2}, n^{-1/6}))$ (up to logarithmic factors). Numerical experiments confirm our optimisation intuitions, and \textsc{BKerNN} outperforms kernel ridge regression, and favourably compares to a one-hidden layer neural network with ReLU activations in various settings and real data sets.
\end{abstract}

\begin{keywords}
feature learning, neural network, reproducing kernel Hilbert space, regularised empirical risk minimisation, Rademacher complexity
\end{keywords}

\section{Introduction}
\label{sec:introduction}
In the era of high-dimensional data, effective feature selection methods are crucial. Representation learning aims to automate this process, extracting meaningful information from complex data sets. Non-parametric methods often struggle in high-dimensional settings, making the multi-index model, which assumes a few relevant linear features explain the relationship between response and factors, an attractive alternative. Formally, the multiple index model \citep{multi_index_model} is expressed as $Y = f^*(X) + {\rm noise} = g^*(P^\top X) + {\rm noise}$, with $Y$ the response, $X$ the $d$-dimensional covariates, $g^*$ the unknown link function, $P \in \mathbb{R}^{d \times k}$ the features and $k \leq d$, the number of such relevant linear features. The components $P^\top X$ are linear features of the data that need to be learnt, reducing the dimensionality of the problem, which may allow to escape the curse of dimensionality, while the more general function $g$ increases the capacity of the model.

Multiple index models have been extensively studied, leading to various methods for estimating the feature space. \citet{brillinger} introduced the method of moments for Gaussian data and one feature, by using specific moments to eliminate the unknown function. For features of any dimension, several methods have been proposed. Sliced inverse regression (\textsc{SIR}) \citep{Li1991} uses second-order moments to identify effective dimensions by slicing the response variable and finding linear combinations of predictors, while improvements have been proposed \citep{balasubramanian}, these methods heavily rely on assumptions about the covariate distribution shape and prior knowledge of the distribution. Iterative improvements have been an interesting line of work \citep{dalalyan:hal-00128129}, while optimisation-based methods like local averaging minimise an objective function to estimate the subspace \citep{kernel_dimension_reduction_francis, Xia2002}. Despite their practical performance, particularly the \textsc{MAVE} method \citep{Xia2002}, the theoretical guarantees show exponential dependence on the original data dimension, making them less suitable for high-dimensional settings.
\red{More recently, neural networks have also been applied to this problem, as demonstrated by \cite{mousavihosseini2024learningmultiindexmodelsneural} and \cite{loucas} (for the single-index model), who both focused on the continuous limit of the optimisation process. Furthermore, \cite{bach2017breaking} has shown that one-hidden-layer neural networks of infinite width with ReLU activation are adaptive to hidden linear features. Unlike earlier methods, these neural network approaches simultaneously learn both the feature
space and the prediction function, a key difference from the other presented methods. Extensions have also been proposed to tackle non-linear feature learning as in the introductory work by \cite{two_layer_feature_learning}.}

In this work, we tackle feature learning and function estimation jointly through the paradigm of empirical risk minimisation. We consider a classical supervised learning problem. We have i.i.d.~samples $(x_i, y_i)_{i \in [n]}$ from a random variable $(X,Y) \in \mathcal{X} \times \mathcal{Y} \subset \mathbb{R}^d \times \mathbb{R}$. Our goal is to minimise the expected risk, which is defined as~$\mathcal{R}(f) := \mathbb{E}_{X,Y}[\ell(Y, f(X))]$ over some class of functions $\mathcal{F}$, where $\ell$ is a loss function mapping from $\mathbb{R} \times \mathbb{R}$ to $\mathbb{R}$. This can be achieved through the framework of regularised empirical risk minimisation, where the empirical risk is defined as $ \widehat{\mathcal{R}}(f) := \frac{1}{n} \sum_{i=1}^n \ell(y_i, f(x_i))$. Our interest in regularised empirical risk minimisation stems from its flexibility, allowing it to be applied to a wide range of problems as long as the objective can be defined as the optimisation of an expected loss. Our primary objective is to achieve the lowest possible risk, which we study in theory and in practice, while we explore the recovery of underlying features in numerical experiments. Our method draws inspiration from several lines of work, namely positive definite kernels and neural networks with their mean field limit, which we briefly review, together with the main limitations we aim to alleviate.

\paragraph{Kernel methods and multiple kernel learning.} A well-known method in supervised learning is kernel ridge regression \citep[\textsc{KRR},][]{kernelridge}, which implicitly maps data into high-dimensional feature spaces using kernels. It benefits from dimension-independent rates of convergence if the model is well-specified, i.e., if the target function belongs to the related Hilbert space. However, \textsc{KRR} does not benefit from the existence of linear features in terms of convergence rates of the risk when the model is misspecified \citep[Section 9.3.5]{francis_book}, as it relies on pre-specified features. To address the limitations of single-kernel methods, multiple kernel learning (\textsc{MKL}) optimally combines multiple kernels to capture different data aspects \citep{bach2004multiple,gonen2011multiple}. However, \textsc{MKL} suffers from significant computational complexity and the critical choice of base kernels, which can introduce biases if not selected properly. Furthermore, \textsc{MKL} does not resolve the issue of leveraging hidden linear features effectively.

\paragraph{Neural networks.}
Now consider another type of supervised learning methods, namely neural networks with an input layer of size $d$, a hidden layer with $m$ neurons, an activation function $\sigma$, followed by an output layer of size 1. Functions which can be represented are of the form $f(x) = \sum_{j=1}^m \eta_j \sigma(w_j^\top x + b_j)$, where $\sigma$ can be the ReLU, $\sigma(z) = \max(0, z)$ or the step function $\sigma(z) = \mathds{1}_{z>0}$. Neural networks benefit from hidden linear features, achieving favourable rates dependent on $k$, the number of relevant features, rather than on the data dimension \citep[Section~9.3.5]{francis_book}. However, this formulation requires multiple $b$ values to fit a function with the same $w$, particularly in single-index models $f^*(x) = g^*(w^\top x)$, which is inefficient.

\red{Regularising the empirical risk minimization objective with a penalty term helps guide estimator behaviour (towards few relevant features for example) and can often be interpreted as a form of function space regularisation. Recent advances in norm-based capacity measures in neural networks have deepened the understanding of how such regularisation influences generalisation. In particular, the path norm, originally introduced for ReLU networks, provides strong generalization guarantees due to its scale-invariant properties \citep{liu}. This notion has been refined through the basis-path norm, which disentangles independent basis paths, offering a more precise characterization of neural network complexity \citep{zheng2018capacity}. Building on these ideas, \citet{parhi2020banach} extend the analysis from a variational perspective, linking path norm regularisation to Banach space representations of neural networks, while \citet{norm-based_bounds} focus on norm-based bounds for sparse networks.}

Now, in the specific context of feature learning \red{(and not necessarily neural networks)}, \citet{JMLR:v14:rosasco13a} used derivatives for regularisation in nonparametric models, focusing on variable selection. While their method reduces to classical regularisation techniques for linear functions, it faces limitations: functions depending on a single variable do not belong in the chosen RKHS, using derivatives at data points limits the exploitation of regularity, and there is no benefit from hidden variables in the misspecified case. An improvement over this method was studied for both feature learning and variable selection by \cite{follain2024nonparametriclinearfeaturelearning}, where a trace norm penalty \citep{koltchinskii2011nuclear,christophe_giraud} on the derivatives was used for the feature learning case. However, the dependency on the dimension of the rate did not allow high-dimensional learning. We can justify the use of trace norm penalties by considering the structure of neural networks. Under the multiple-index model, the weights~$w_1, \ldots, w_m $ of the first layer are expected to lie in a low-rank subspace of rank at most $k$. However, directly enforcing a rank constraint is not practical for optimisation. Therefore, we could use a relaxation such as $ \Omega(f) = \trace\big(\big(\sum_{j=1}^m w_j w_j^\top\big)^{1/2}\big)$, which is the trace norm of a matrix containing the weights, to approximate the rank constraint effectively. However, there is still the issue of multiple constant terms \red{(often referred to as ``biases'')} for a single weight. We will see specialised penalties for feature learning for a different family of functions.  

\paragraph{Mean-field limit.} To apply a similar framework to our future estimator, we introduce the mean-field limit of an over-parameterised one-hidden layer neural network~\citep{nitanda2017stochastic,mei2019mean,chizat_bach_optim,sirignano2020mean}. When the number of neurons $m$  is very large, the network can be rescaled as follows
\begin{equation}
\label{eq:description_neural_net}
   f(x) =  \frac{1}{m} \sum_{j=1}^m \eta_j \sigma(w_j^\top x + b_j), \text{ which approximates }  \int \eta \sigma(w^\top x + b) \, {\rm d} \mu{(\eta, w, b)},
\end{equation}
where $\mu$ is a probability distribution, and we can take the weights and constant terms $(w,b)$ to be constrained when the activation is 1-homogeneous,\footnote{A function $\Phi$ is positively $1$-homogeneous if, for any $\kappa > 0$, $\Phi(\kappa w) = \kappa \Phi(w)$.} such as the ReLU or step function.  This approach is valuable because, as noted by \citet{chizat_bach_optim}, under certain conditions (convexity of the loss and penalty functions, homogeneity of the activation function), the regularised empirical risk problem optimised via gradient descent in the infinitely small step-size limit converges to the minimiser of the corresponding problem with infinitely many particles. This allows us to use a finite number of particles $ m $ in practice while still leveraging the theoretical benefits derived from the continuous framework.

\subsection{Plan of the Paper and Notations}
In this paper, we introduce the Brownian kernel neural network (\textsc{BKerNN}), a novel model for feature learning and function estimation. Our approach combines kernel methods and neural networks using regularised empirical risk minimisation. Section~\ref{sec:fusion} presents the theoretical foundations and formulation of \textsc{BKerNN}. Section~\ref{sec:computing_the_estimator} details the practical implementation, including the optimisation algorithm and convergence insights. Section~\ref{sec:stat_analysis} provides a statistical analysis using Rademacher complexity to show high-probability convergence to the minimal risk with explicit rates. Section~\ref{sec:experiments} evaluates \textsc{BKerNN} through experiments on simulated and real data sets, comparing it with neural networks and kernel methods. Finally, Section~\ref{sec:conclusion} summarises the findings and suggests future research directions.

We use the following notations. For a positive integer $ m $, we define $[m] := \{1, \ldots, m\}$. For a $ d $-dimensional vector $ \alpha $ and $ i \in [d] $, $ \alpha_i $ denotes its $ i $-th element. For a matrix $ A $, $ \trace A $ denotes its trace when $ A $ is square, $A^{-1}$ its inverse when well defined, while $ A_{i,j} $ the element in its $ i $-th row and $ j $-th column, and $ A^\top $ its transpose. $ I_d $ is the $ d \times d $ identity matrix. We use $ \mathcal{S}^{d-1} $ to denote the unit sphere in $ \mathbb{R}^d $ for $ \|\cdot\| $ a generic norm and $ \|\cdot\|_* $ its dual norm. The $ \ell_2 $, $ \ell_1 $, and $ \ell_\infty $ norms are denoted as $ \|\cdot\|_2 $, $ \|\cdot\|_1 $, and $ \|\cdot\|_\infty $ respectively. We use $ O(\cdot) $ to denote the asymptotic behaviour of functions, indicating the order of growth. The set of probability measures on a given space $S$ is denoted by $ \mathcal{P}(S) $. A normal random variable is denoted as following the law $ \mathcal{N}(\text{mean}, \text{variance}) $. $ \mathds{1} $ is the indicator function. For two spaces~$S_1, S_2$, $S_1^{S_2}$ is the set of functions from $S_2$ to $S_1$.

\section{Neural Networks and Kernel Methods Fusion}
\label{sec:fusion}

Building on the limitations of current methods discussed in the introduction, we propose a novel architecture that integrates neural networks with kernel methods. This approach can be interpreted in two ways: as learning with a kernel that is itself learned during training, or as employing a one-hidden layer neural network where the weights from the input layer to the hidden layer are learned through gradient descent, while the weights and non-linearity from the hidden layer to the output are optimised explicitly. In this section, we introduce the custom function space we propose, revisit key properties of reproducing kernel Hilbert spaces (RKHS), and explore the connections between \textsc{BKerNN} model, kernel methods, and neural networks. Additionally, we present the various regularisation penalties we consider throughout our analysis.

\subsection{Custom Space of Functions}
\label{sec:space_of_functions}

We begin by considering the continuous setting, which mirrors the mean-field limit of over-parameterised one-hidden layer neural networks discussed in Section~\ref{sec:introduction}. 
\red{\begin{definition}[Infinite-Width Function Space]
\label{def:description_functions_f_infty}
Let
\begin{equation*}
  \mathcal{F}_\infty :=\left\{ f \mid f(\cdot) = c + \int_{\mathcal{H} \times \mathcal{S}^{d-1}} g(w^\top \cdot) {\rm d }\mu(g,w), \quad \Omega_0(f) < \infty \right\},
\end{equation*}
where $c$ is a constant in $\mathbb{R}$, $\mathcal{S}^{d-1}$ is the unit sphere for some norm $\|\cdot\|$ on $\mathbb{R}^d$ (typically either $\ell_2$ or $\ell_1$), $\mathcal{H}$ is a space of functions, and $\mu$ is a probability measure on $\mathcal{H} \times \mathcal{S}^{d-1}$. We define $\mathcal{H}$ as $\{ g: \mathbb{R} \to \mathbb{R} \mid g(0)=0, \ g $ has a weak derivative $ g^\prime, \ \int_{\mathbb{R}} (g^\prime)^2 < \infty \}$. $\mathcal{H}$ is a Hilbert space and a Sobolev space, with the inner product defined as $\langle \tilde{g}, g \rangle_\mathcal{H} = \int \tilde{g}^\prime g^\prime $. We define $\Omega_0$ as the infimum over all measures which can be used to define $f$, i.e.
\begin{equation}
\label{eq:basic_penalty}
    \Omega_0(f) := \inf_{\mu \in \mathcal{P}(\mathcal{H}\times \mathcal{S}^{d-1})} \int_{\mathcal{H}\times \mathcal{S}^{d-1}} \|g\|_{\mathcal{H}} \, {\rm d}\mu(g,w),
\end{equation}
with the infimum over $\mu$ such that $f= c + \int_{\mathcal{H} \times \mathcal{S}^{d-1}} g(w^\top \cdot) \, {\rm d}\mu(g,w)$, where $\|g\|_{\mathcal{H}}^2 := \int_{-\infty}^{+\infty} (g^\prime)^2$.
\end{definition}
\red{Note that $c$ corresponds to the value of the function $f$ evaluated at the null vector and is therefore unique, which is not necessarily the case of the measure $\mu$.} $\mathcal{F}_\infty$ is well-defined using a ``variation norm'' on couples $(g,w)$ integrated w.r.t a Borel measure on $\mathcal{H}\times \mathcal{S}^{d-1}$. This is the variation norm associated to the map $(g,w) \in \mathcal{H} \times \mathcal{S}^{d-1} \to g(w^\top \cdot)$ (which is a function from $\mathbb{R}^d$ to $\mathbb{R}$, see \citet{kurkova} and \citet[Section~9.3.2]{francis_book}.}

The function space $\mathcal{F}_\infty$ is inspired by infinite-width single hidden layer neural networks: with the addition of the intercept $c$, each function in this space can be seen as the integral of a linear part $w$ and a non-linearity $\red{g}$ over some probability distribution, as in Equation~\eqref{eq:description_neural_net} where the non-linearity is $\eta \sigma( \cdot)$. Thus here the activation functions are learnt. \\

The approximation of $\mathcal{F}_\infty$ with $m$ particles can then be obtained as follows.
\begin{definition}[Finite-Width Function Space]
Let
\label{def:final_space_of_functions}
    \begin{equation*}
    \mathcal{F}_m := \left\{f \mid  f(\cdot) = c + \frac{1}{m} \sum_{j=1}^m g_j(w_j^\top \cdot), w_j \in \mathcal{S}^{d-1}, g_j \in \mathcal{H}, c \in \mathbb{R} \right\}.
\end{equation*}
\end{definition}
Remark that $\forall m \in \mathbb{N}^*, \mathcal{F}_m \subset \mathcal{F}_\infty$, by taking the discrete probability measure uniformly supported by the particles $w_1, \ldots, w_m$.

\red{We now consider regularised empirical risk minimisation starting with the basic penalty~$\Omega_0$.  This penalty enforces the regularity of the function and, because we use penalisation with non-squared norms, limits the cardinality of the support of $\mu$ on the space $\mathcal{H}$. While this penalty is not specifically aimed at feature learning, by limiting the number of non-zero particles, it indirectly promotes feature learning to some extent. This serves as a starting point, and we introduce more targeted penalties in Section~\ref{sec:other_penalties} with a stronger feature learning behaviour. For $f \in \mathcal{F}_m$ written as in Definition~\ref{def:final_space_of_functions}, the penalty simplifies to $\Omega_0(f) = \frac{1}{m} \sum_{j=1}^m \|g_j\|_{\mathcal{H}}$. The learning objective is thus defined~as}

\begin{equation}
\label{eq:basic_ERM}
    \hat{f}_\lambda := \argmin_{f \in \mathcal{F}} \ \widehat{\mathcal{R}}(f) + \lambda \Omega(f),
\end{equation}
where $\lambda > 0$ is a regularisation parameter and $\Omega$ is currently $\Omega_0$ from Equation~\eqref{eq:basic_penalty}. The function space $\mathcal{F}$ is either $\mathcal{F}_\infty$ or $\mathcal{F}_m$. For statistical analysis in Section~\ref{sec:stat_analysis}, we consider $\mathcal{F}_\infty$, while in practice, we compute the estimator using $\mathcal{F}_m$ as discussed in Section~\ref{sec:computing_the_estimator}. The rationale for using $\mathcal{F}_m$ and expecting the statistical properties of $\mathcal{F}_\infty$ is elaborated in Section~\ref{sec:optim_guarantees}.

In the continuous setting, Equation~\eqref{eq:basic_ERM} corresponds to 
\begin{equation}
\label{eq:original_optim_infinite}
    \min_{c \in \mathbb{R}, \red{\mu \in \mathcal{P}(\mathcal{H} \times \mathcal{S}^{d-1})}} \frac{1}{n} \sum_{i=1}^n \ell\bigg(y_i, c + \int_{\red{\mathcal{H}\times \mathcal{S}^{d-1}}} g(w^\top x_i) {\rm d}\red{\mu(g,w)} \bigg) + \lambda \int_{\red{\mathcal{H} \times \mathcal{S}^{d-1}}} \|g\|_\mathcal{H} {\rm d}\red{\mu(g,w)},
\end{equation}
while in the $m$-particles setting, Equation~\eqref{eq:basic_ERM} becomes
\begin{equation}
\label{eq:original_optim}
    \min_{c \in \mathbb{R}, w_1, \ldots, w_m \in \mathcal{S}^{d-1}, g_1, \ldots, g_m \in \mathcal{H}} \frac{1}{n} \sum_{i=1}^n \ell\bigg(y_i, c + \frac{1}{m} \sum_{j=1}^m g_j(w_j^\top x_i)\bigg) + \lambda\frac{1}{m}\sum_{j=1}^m \|g_j\|_\mathcal{H},
    \end{equation}
where we clearly see a sparsity-inducing ``grouped'' penalty~\citep{yuan2006model}.

\subsection{Properties of Reproducing Kernel Hilbert Space \texorpdfstring{$\mathcal{H}$}{H} and Kernel \texorpdfstring{$k$}{k}}
In this subsection, we succinctly present some properties of reproducing kernel Hilbert spaces (RKHS) that are essential for our analysis. See \cite{aronszajn50reproducing,berlinet2011reproducing} for an introduction to RKHS. Recall that we defined the Hilbert space $\mathcal{H}$ as
\[
\mathcal{H} := \left\{ g : \mathbb{R} \to \mathbb{R} \mid g(0)=0, \int_{\mathbb{R}} (g^\prime)^2 < +\infty \right\}, 
\]
with the inner product $\langle \tilde{g}, g \rangle_{\mathcal{H}} = \int_{\mathbb{R}} \tilde{g}^\prime g^\prime$. This space is a reproducing kernel Hilbert space with the reproducing kernel  $k^{(B)}(a,b)= (|a| + |b| - |a-b|)/2 = \min(|a|, |b|)\mathds{1}_{ab>0}$. This kernel, which can be referred to as the ``Brownian'' kernel, corresponds to the covariance of the Brownian motion at times $a$ and $b$ \citep{Brownian_motion}. \red{To see that $k^{(B)}$ is the reproducing kernel of $\mathcal{H}$, it suffices to check that we have the reproducing property, i.e. that}
\[
\forall g \in \mathcal{H}, \forall a \in \mathbb{R}, g(a) = \langle g, k^{(B)}_a \rangle_{\mathcal{H}},
\]
where $k^{(B)}_a : b \in \mathbb{R} \to k^{(B)}(a,b) \in \mathbb{R}$.\footnote{\red{Indeed, we have for $g \in \mathcal{H}$, and without loss of generality $a\in \mathbb{R}^+$ that $\langle g, k^{(B)}_a \rangle_{\mathcal{H}} = \int_{\mathbb{R}} g^\prime(b) (k^{(B)}_a)^\prime(b) \ {\rm d} b= \int_0^a g^\prime(b)  \ {\rm d} b = g(a) -g(0) = g(a)$}} As a reproducing kernel, it is positive definite, meaning that for any $n \in \mathbb{N}$, $\alpha \in \mathbb{R}^n$, and $a \in \mathbb{R}^n$, we have $\sum_{i,j=1}^n \alpha_i k^{(B)}(a_i,a_j) \alpha_j \geq 0$. Additionally, we observe that \red{$\|k^{(B)}_a\|^2_{\mathcal{H}} = \langle k^{(B)}_a, k^{(B)}_a \rangle = k^{(B)}(a,a) =  |a|$} and we can obtain $\|k^{(B)}_a - k^{(B)}_b\|^2_\mathcal{H} = |a-b|$ in a similar manner. It is also noteworthy that by definition, the functions in $\mathcal{H}$ are necessarily continuous, in fact even $1/2$-Hölder continuous as we see in Lemma~\ref{lemma:caracterisation_f_infty}.

The usual Hilbert/Sobolev space is $W^{1,2}(\mathbb{R})$ (also written as $\mathcal{H}^1$) with inner product equal to $\langle f, g \rangle  = \int fg + \int f^\prime g^\prime$. This space is also an RKHS for the reproducing kernel $k^{{ \rm exp}}(a,b)= \exp(-|a-b|)$~\citep[see, e.g.,][]{williams2006gaussian}. We demonstrate that for optimisation purposes, the Brownian kernel is more advantageous due to its positive homogeneity in Section~\ref{sec:optim_guarantees}.

\subsection{Characterisation of \texorpdfstring{$\mathcal{F}_\infty$}{F infinite}}
\label{sec:charact_f_infty}
In this subsection, we discuss the properties of the function space $\mathcal{F}_\infty$ and its relationship to other relevant spaces, such as the space of functions of one-hidden-layer neural networks presented in Section~\ref{sec:introduction}. We first present the following lemma.
\begin{lemma}[Properties of Functions in $\mathcal{F}_\infty$]
\label{lemma:caracterisation_f_infty}
\red{$\mathcal{F}_\infty$ is a vector space and $\max(f(0), \Omega_0(f))$ is a norm on $ \mathcal{F}_\infty $. For $f \in \mathcal{F}_\infty$, the function $f$ is $1/2$-Hölder continuous with constant $\Omega_0(f)$, i.e., $|f(x) - f(x^\prime)| \leq \Omega_0(f) \sqrt{\|x - x^\prime\|^*}$.}
%$\mathcal{F}_\infty$ is a vector space, $\max(f(0), \Omega_0(f))$ is a norm on $\{f \in \mathcal{F}_\infty \mid \Omega_0(f) < \infty \}$. For $f \in \mathcal{F}_\infty$ with $\Omega_0(f) < \infty$, the function $f$ is $1/2$-Hölder continuous with constant $\Omega_0(f)$, i.e., $|f(x) - f(x^\prime)| \leq \Omega_0(f) \sqrt{\|x - x^\prime\|^*}$. Additionally, $f$ has weak derivatives belonging to $L_2(\rho)$ for any probability measure $\rho$ on $\mathbb{R}^d$.
\end{lemma}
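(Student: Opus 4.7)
The plan is to unpack the four claims in the order they are stated, since the Hölder bound drives nearly everything else.

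For the vector-space structure and the triangle inequality of $\Omega_0$, the only subtle point is closure under addition: given $f_i = c_i + \int g_w^{(i)}(w^\top \cdot)\, d\mu_i(w)$ for $i=1,2$, I introduce the common base measure $\mu := \tfrac12(\mu_1+\mu_2)$, note that each $\mu_i \ll \mu$ with density bounded by $2$, and rewrite
\[
f_1 + f_2 = (c_1+c_2) + \int h_w(w^\top\cdot)\, d\mu(w), \quad h_w := g_w^{(1)} \tfrac{d\mu_1}{d\mu}(w) + g_w^{(2)} \tfrac{d\mu_2}{d\mu}(w).
\]
The same identity yields the triangle inequality for $\Omega_0$: the $\mathcal{H}$-triangle inequality gives $\|h_w\|_\mathcal{H} \le \|g_w^{(1)}\|_\mathcal{H}\, \tfrac{d\mu_1}{d\mu} + \|g_w^{(2)}\|_\mathcal{H}\,\tfrac{d\mu_2}{d\mu}$, and integrating against $\mu$ recovers $\int \|g_w^{(1)}\|_\mathcal{H}\, d\mu_1 + \int \|g_w^{(2)}\|_\mathcal{H}\, d\mu_2$; taking infima over the two input representations gives $\Omega_0(f_1+f_2) \le \Omega_0(f_1) + \Omega_0(f_2)$. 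Positive homogeneity of $\Omega_0$ follows by rescaling $g_w$, and together with the obvious properties of $|f(0)|$ this makes $\max(|f(0)|, \Omega_0(f))$ a seminorm.

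The Hölder bound is the workhorse. For any $g \in \mathcal{H}$, Cauchy--Schwarz on $g(a) - g(b) = \int_b^a g'(t)\, dt$ yields $|g(a) - g(b)| \le \sqrt{|a-b|}\,\|g\|_\mathcal{H}$. Applying this inside the defining integral with $a = w^\top x$, $b = w^\top x'$, and using $|w^\top(x-x')| \le \|w\|\,\|x-x'\|^* = \|x-x'\|^*$ since $w \in \mathcal{S}^{d-1}$, gives
\[
|f(x) - f(x')| \le \sqrt{\|x-x'\|^*}\, \int \|g_w\|_\mathcal{H}\, d\mu(w);
\]
passing to the infimum over representations replaces the integral by $\Omega_0(f)$. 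This also closes the separation axiom of the norm: $\Omega_0(f) = 0$ forces $f$ constant by the Hölder bound, and $f(0) = 0$ then gives $f \equiv 0$.

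For the weak-derivative claim, each $g_w \in \mathcal{H}$ admits $g_w' \in L_2(\mathbb{R})$, so by the weak chain rule the distributional gradient of $x \mapsto g_w(w^\top x)$ is $g_w'(w^\top x)\, w$. Pairing $f$ with a test function $\phi \in C_c^\infty(\mathbb{R}^d)$ and applying Fubini (justified by $\int \|g_w\|_\mathcal{H}\, d\mu < \infty$ together with the Hölder continuity of each $g_w$) gives the distributional identity $\nabla f(x) = \int g_w'(w^\top x)\, w\, d\mu(w)$. Cauchy--Schwarz in $\mu$ (a probability measure) then yields $\|\nabla f(x)\|_2^2 \le \int g_w'(w^\top x)^2\, d\mu(w)$, and integrating against $\rho$ and swapping via Fubini produces an $L_2(\rho)$ bound expressed through the one-dimensional pushforwards $w_\#\rho$, finite as soon as these marginals admit densities controlled uniformly in $w$. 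The main obstacle I anticipate is keeping the Radon--Nikodym gluing compatible with the infimum defining $\Omega_0$: every estimate must be derived from a \emph{generic} representation of $f$ so that minimising over representations transfers the bound to $\Omega_0$; the last step is the finickiest because the Fubini swap has to be checked against arbitrary $\rho$.
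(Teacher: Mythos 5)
Your treatment of the first three claims (vector space, norm, H\"older bound) is correct and follows essentially the same route as the paper: the Radon--Nikodym gluing against $\mu = \tfrac12(\mu_1 + \mu_2)$ is exactly how the paper closes $\mathcal{F}_\infty$ under addition and proves the triangle inequality for $\Omega_0$; and your direct Cauchy--Schwarz $|g(a)-g(b)| \le \sqrt{|a-b|}\,\|g\|_\mathcal{H}$ is the same bound the paper gets from $\|k^{(B)}_a - k^{(B)}_b\|_\mathcal{H} = \sqrt{|a-b|}$ and the reproducing property. The passage to $\Omega_0$ via the infimum over representations is handled the same way.

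On the weak-derivative claim, your proof is incomplete, but the gap you flag is real and worth dwelling on. You arrive, after Fubini, at the quantity $\int_{\mathcal{S}^{d-1}} \|w\|_2^2 \bigl(\int_{\mathbb{R}^d} g_w'(w^\top x)^2\, d\rho(x)\bigr)\, d\mu(w)$ and correctly observe that the inner integral is the squared $L_2$-norm of $g_w'$ against the pushforward $w_\#\rho$, which is not controlled by $\|g_w\|_\mathcal{H}^2 = \int_\mathbb{R} (g_w')^2\,dt$ without some regularity of $w_\#\rho$. The paper's proof asserts $\int_{\mathbb{R}^d} g_w'(w^\top x)^2\, d\rho(x) \le \|g_w\|_\mathcal{H}$ and moves on, but that inequality is not dimensionally consistent ($\int (g')^2\, dm$ versus $\bigl(\int (g')^2\, dt\bigr)^{1/2}$) and fails in general: take $d=1$, $\rho = \delta_0$, and $g \in \mathcal{H}$ with $g'$ unbounded near $0$ but square-integrable. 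So rather than a flaw in your approach, this is a point where being cautious exposed a step that the paper glosses over. If you want to complete the argument in a defensible form, you need either an additional assumption (as you propose, e.g., the pushforwards $w_\#\rho$ have densities bounded by a constant uniformly in $w$), or a reinterpretation of ``weak derivatives in $L_2(\rho)$'' that does not rely on the asserted pointwise comparison; as written, neither the paper nor your proposal closes this fourth claim for arbitrary $\rho$.

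Two small technical remarks on your write-up. First, your bound $\|\nabla f(x)\|_2^2 \le \int g_w'(w^\top x)^2\, d\mu(w)$ implicitly uses $\|w\|_2 = 1$; if $\mathcal{S}^{d-1}$ is the unit sphere for a norm other than $\ell_2$ you pick up a factor $\sup_{w\in\mathcal{S}^{d-1}}\|w\|_2^2$, finite by equivalence of norms. The paper sidesteps this by arguing component-wise with $|w_a|$, which amounts to the same thing. Second, in the norm claim the quantity should be read as $\max(|f(0)|,\Omega_0(f))$ for absolute homogeneity, as you correctly use; the statement's $\max(f(0),\Omega_0(f))$ appears to be a typo in the paper.
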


The proof can be found in Appendix~\ref{proof:lemma:caracterisation_f_infty}. \red{This lemma indicates that the space of functions $\mathcal{F}_\infty$ is contained within the space of $1/2$-Hölder continuous functions.} Recall that on a compact, all Lipschitz functions are Hölder continuous functions, indicating that the Hölder condition is less restrictive.

Now, we consider the relationship of $\mathcal{F}_\infty$ to other function spaces. Starting with the one-dimensional case, \textsc{BKerNN} reduces to kernel ridge regression with the Brownian kernel, which is also equivalent to learning with natural cubic splines  \citep[for an introduction to splines, see][]{wahba}. \red{Thus, in dimension one, we can compare precisely our function space (square-integrable derivative), to the function space associated with neural networks~\citep[][Theorem 1]{heeringa2024embeddings}, with rectified linear units (integrable second-derivative), and with sign activation function (integrable first-derivative).}

For multi-dimensional data, we use the Fourier decomposition of functions to bound the defining norms of function spaces, enabling us to make comparisons.
\begin{lemma}[Functions Spaces Included in $\mathcal{F}_\infty$]
\label{lemma:caracterisation_f_infty_2}
\red{We consider a function $f$ with support on the ball centred at $0$ with radius $R$ and norm $\|\cdot\|^*$ (Note that we do not assume that $f\in \mathcal{F}_\infty$). Assume $f$ has a Fourier transform and can be written using the inverse transform\footnote{A sufficient condition is that both $f$ and $\hat{f}$ belong to $L^1(\mathbb{R}^d)$. } as 
}
%Assume we only consider functions $f$ with support on the ball centred at $0$ with radius $R$ and norm $\|\cdot\|^*$. Assume $f$ has a Fourier decomposition,
\[
f(x) = \frac{1}{(2\pi)^d} \int_{\mathbb{R}^d} \hat{f}(\omega) e^{i \omega^\top x} \, {\rm d} \omega,
\]
then, it follows that
\[
\Omega_0(f) \leq \frac{\sqrt{2R}}{(2\pi)^d } \int_{\mathbb{R}^d} |\hat{f}(\omega)| \cdot \|\omega\| \, {\rm d} \omega.
\]
Hence, if $\int_{\mathbb{R}^d} |\hat{f}(\omega)| \cdot \|\omega\| \, {\rm d} \omega < \infty$, then $f$ belongs to $\mathcal{F}_\infty$.
\end{lemma}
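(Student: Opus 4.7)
The plan is to turn the Fourier representation of $f$ into an explicit representation of the form $c + \int_{\mathcal{S}^{d-1}} g_w(w^\top \cdot)\,{\rm d}\mu(w)$ by splitting the Fourier measure into radial directions. Since $f$ is real, I first take the real part of the Fourier inversion and subtract $c := f(0)$; writing $\hat{f}(\omega) = |\hat{f}(\omega)|\,e^{i\theta(\omega)}$ in polar form yields
\[
f(x) - f(0) = \frac{1}{(2\pi)^d}\int_{\mathbb{R}^d}|\hat{f}(\omega)|\big[\cos(\omega^\top x + \theta(\omega)) - \cos\theta(\omega)\big]\,{\rm d}\omega.
\]
For $\omega \neq 0$, set $w(\omega) := \omega/\|\omega\| \in \mathcal{S}^{d-1}$ and $r(\omega) := \|\omega\|$, so that the integrand depends on $x$ only through $w(\omega)^\top x$.

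Next I build a one-dimensional Sobolev profile for each frequency by setting $\phi_\omega(t) := \cos(r(\omega)t + \theta(\omega)) - \cos\theta(\omega)$ on $[-R, R]$ and extending $\phi_\omega$ outside this interval as the constant boundary values. This preserves $\phi_\omega(0) = 0$ and makes $\phi_\omega'(t) = -r(\omega)\sin(r(\omega)t + \theta(\omega))\,\mathds{1}_{|t|\leq R}$, so
\[
\|\phi_\omega\|_\mathcal{H}^2 = \int_{-R}^R r(\omega)^2 \sin^2(r(\omega)t + \theta(\omega))\,{\rm d}t \leq 2R\,\|\omega\|^2,
\]
giving $\|\phi_\omega\|_\mathcal{H} \leq \sqrt{2R}\,\|\omega\|$. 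On the support of $f$, the bound $|w(\omega)^\top x|\leq \|w(\omega)\|\cdot\|x\|^*\leq R$ makes the truncation invisible, so $\phi_\omega(w(\omega)^\top x)$ agrees there with $\cos(\omega^\top x + \theta(\omega)) - \cos\theta(\omega)$.

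To recast the $\mathbb{R}^d$-integral as an integral on $\mathcal{S}^{d-1}$, let $M := \int_{\mathbb{R}^d}|\hat{f}(\omega)|\,\|\omega\|\,{\rm d}\omega$, introduce the probability measure ${\rm d}\tilde\mu(\omega) := M^{-1}|\hat{f}(\omega)|\,\|\omega\|\,{\rm d}\omega$ on $\mathbb{R}^d$, and set $\tilde g_\omega(t) := \frac{M}{(2\pi)^d\|\omega\|}\phi_\omega(t)$. Direct substitution yields $f - f(0) = \int_{\mathbb{R}^d}\tilde g_\omega(w(\omega)^\top\cdot)\,{\rm d}\tilde\mu(\omega)$ on the support. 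I then take $\mu \in \mathcal{P}(\mathcal{S}^{d-1})$ to be the pushforward of $\tilde\mu$ under $\omega\mapsto w(\omega)$ and define $g_w(t) := \mathbb{E}_{\tilde\mu}[\tilde g_\omega(t)\mid w(\omega) = w]$ by disintegration. This preserves the representation and gives $g_w \in \mathcal{H}$ with $g_w(0) = 0$, while Jensen's inequality for the convex norm $\|\cdot\|_\mathcal{H}$ produces
\[
\int_{\mathcal{S}^{d-1}}\|g_w\|_\mathcal{H}\,{\rm d}\mu(w) \leq \int_{\mathbb{R}^d}\|\tilde g_\omega\|_\mathcal{H}\,{\rm d}\tilde\mu(\omega) \leq \int_{\mathbb{R}^d}\frac{M\sqrt{2R}\,\|\omega\|}{(2\pi)^d\,\|\omega\|}\,{\rm d}\tilde\mu(\omega) = \frac{\sqrt{2R}\,M}{(2\pi)^d},
\]
which is exactly the bound on $\Omega_0(f)$ claimed.

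The main obstacle I foresee is the tension between truncating $\phi_\omega$ to land in $\mathcal{H}$ (so that $\phi_\omega'\in L_2(\mathbb{R})$) and preserving the Fourier identity on all of $\mathbb{R}^d$. Without truncation the cosine profiles have only bounded derivatives and fail to be in $\mathcal{H}$; with truncation the identity holds only where $|w^\top x|\leq R$, which is precisely where $f$ is supported. This is why the compact-support hypothesis cannot be dropped, and it is the single delicate step in the argument — Fubini, the disintegration, and the Bochner-type Jensen inequality for the $\mathcal{H}$-norm are all routine once the construction above is in place.
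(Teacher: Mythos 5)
Your proof is correct and follows the same core approach as the paper: both decompose the Fourier integral into ridge profiles $g_\omega(w(\omega)^\top\cdot)$ and rely on the elementary bound $\|g_\omega\|_{\mathcal{H}} \leq \sqrt{2R}\,\|\omega\|$ obtained after restricting to (or truncating outside) $[-R,R]$. Where the paper argues tersely — invoking subadditivity of $\Omega_0$ over the Fourier integral and using a complex profile $t\mapsto e^{it\|\omega\|}$ (which has $g_\omega(0)=1\neq 0$ and absorbs the constant only informally) — you build an explicit admissible triple $(c,\mu,(g_w))$ with real cosine profiles satisfying $\phi_\omega(0)=0$, a pushforward measure onto $\mathcal{S}^{d-1}$, and a Bochner--Jensen step for the disintegration. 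Your version is a slightly more rigorous rendering of the same argument: it replaces the paper's appeal to subadditivity for a continuum of summands with a concrete witness for the infimum defining $\Omega_0$, and it handles the $f(0)$ bookkeeping cleanly.
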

The proof is given in Appendix~\ref{proof:lemma:caracterisation_f_infty_2}. We remark that the condition $\int_{\mathbb{R}^d} |\hat{f}(\omega)| \cdot \|\omega\| \, {\rm d} \omega < \infty$ is a form of constraint on the regularity of the first-order derivatives \red{(as the Fourier transform of the gradient of $f$ is equal to $i \omega$ times the Fourier transform of $f$)}. 

According to \citet[Section~9.3.4]{francis_book}, the space of one-hidden-layer neural networks with ReLU activations in the mean-field limit with $\|w\|_2 =1, |b| \leq R$ can be equipped with the Banach norm $\gamma_1(f) = \int |\eta| {\rm d}\mu(\eta, w, b)$, which can be then bounded as in \red{Lemma~\ref{lemma:caracterisation_f_infty_2}} by 
\begin{equation}
\label{eq:bound_gamma_1}
    \frac{2}{(2\pi)^d R} \int_{\mathbb{R}^d} |\hat{f}(\omega)| (1 + 2R^2 \|\omega\|_2^2) \, {\rm d} \omega.
\end{equation} 
Now remark that the bound on $\Omega_0$ contains a factor $\|\red{\omega}\|$ in the integral, whereas for ReLU neural networks with $\gamma_1$ norm it is $1+2R^2\|\red{\omega}\|^2_2$. Hence, the constraint is stronger on the neural network space space, no matter what norm $\|\cdot\|$ corresponds to, suggesting that $\mathcal{F}_\infty$ is a larger space of functions. 

Also note that the bound from Equation~\eqref{eq:bound_gamma_1} can be shown to be smaller (up to a constant) than the norm defining the Sobolev space penalising derivatives up to order $s:=d/2 + 5/2$, which is $\int_{\mathbb{R}^d} |\hat{f}(\omega)|^2 (1 + 2R^2 \|\omega\|_2^2)^{s} \, {\rm d} \omega.$ \citep[Section 9.3.5]{francis_book}. This space is an RKHS because $s > d/2$, and the inequality on norms yields that the space of neural networks with ReLU activations equipped with the norm $\gamma_1$ (which is a Banach space) contains this RKHS. Another interesting remark is that if we used the norm $\gamma_2(f) = \int \eta^2 {\rm d}\mu(\eta, w, b)$ instead of~$\gamma_1$, the space that we would obtain is an RKHS and is strictly included in the one defined by~$\gamma_1$ \citep[Section 9.5.1]{francis_book}

For neural networks with step activations, i.e., $\sigma(z) = \mathds{1}_{z>0}$ in the mean-field limit, a similar bound holds for the $\gamma_1$ norm
\begin{equation}
\label{eq:step_bound}
\frac{1}{(2\pi)^d } \int_{\mathbb{R}^d} |\hat{f}(\omega)| (1 + R \|\omega\|_2) \, {\rm d} \omega. 
\end{equation}
This can be seen by applying the same proof technique as for Equation~\eqref{eq:bound_gamma_1} from \citet[Section~9.3.4]{francis_book}\footnote{The only difference being that we use $e^{iu\|w\|_2} = 1 + \int_0^R i\|w\|_2 e^{it\|w\|_2}\mathds{1}_{t\leq u}{\rm d}t $ instead of Taylor's formula, yielding $\gamma_1(x \to e^{i\omega^\top x}) \leq  1 + R\|w\|_2$.}. Learning with this space is not practically feasible due to optimisation issues as the step function is incompatible with gradient descent methods. However, the bound from Equation~\eqref{eq:step_bound} is similar to the one on $\Omega_0$, hinting that $\mathcal{F}_\infty$ is comparably large even though learning is possible with $\mathcal{F}_\infty$, as discussed in Section~\ref{sec:computing_the_estimator}. For a discussion on this topic, see \citet[Chapter 9]{francis_book} and \citet{liu}.

\subsection{Learning the Kernel or Training a Neural Network?}
\label{sec:learning_the_kernel_or_NN}
We first transform the optimisation problem before considering our setup from two different perspectives: one through kernel learning and the other through neural networks. To transform the optimisation problem, we use the representer theorem, a well-known result in RKHS theory that allows us to replace the optimisation over functions in the RKHS with optimisation over a finite weighted sum of the kernel at the data points.
\begin{lemma}[Kernel Formulation of Finite-Width]
\label{lemma:rewrite_problem}
Equation~\eqref{eq:original_optim} is equivalent to
\begin{equation}
\label{eq:final_kernel}
     \min_{w_1, \ldots, w_m \in \mathbb{R}^d, c \in \mathbb{R}, \alpha \in \mathbb{R}^n} \frac{1}{n} \sum_{i=1}^n \ell(y_i, (K\alpha)_i + c) + \frac{\lambda}{2} \alpha^\top K\alpha + \frac{\lambda}{2} \frac{1}{m} \sum_{j=1}^m  \|w_j\|,
\end{equation}
where $K = \frac{1}{m} \sum_{j=1}^m K^{(w_j)}$, and $K^{(w_j)}\in \mathbb{R}^{n \times n}$ is the kernel matrix for kernel $k^{(B)}$ and projected data $(w_j^\top x_1, \ldots, w_j^\top x_n)$, i.e., $K^{(w_j)}_{i, i^\prime} =  (|w_j^\top x_i| + |w_j^\top x_{i^\prime}| -|w_j^\top(x_i - x_{i^\prime})|)/2$. Notice that there are no constraints on the particles $(w_j)_{j \in [m]}$ to belong to the unit sphere anymore.
\end{lemma}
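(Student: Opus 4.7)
The strategy is a three-step reduction: (i) the representer theorem applied to each $g_j$ separately turns the infinite-dimensional function into a finite-dimensional vector $\beta_j \in \mathbb{R}^n$; (ii) a variational identity combined with a change of variables that uses the $1$-homogeneity of the Brownian kernel matrix, $K^{(tw)} = t K^{(w)}$, absorbs the unit-sphere constraint on $w_j$ into an additive $\|w_j\|$ penalty; and (iii) a Lagrangian argument collapses the $m$ separate vectors $\beta_j$ into a single coefficient $\alpha \in \mathbb{R}^n$.

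\emph{Steps (i) and (ii).} For each fixed $(c,w_1,\ldots,w_m)$, the objective in Equation~\eqref{eq:original_optim} depends on each $g_j$ only through the $n$ evaluations $g_j(w_j^\top x_i)$ and through the monotonic penalty $\|g_j\|_\mathcal{H}$, so the classical representer theorem for $\mathcal{H}$ lets me assume $g_j = \sum_{i=1}^n \beta_{j,i}\, k^{(B)}_{w_j^\top x_i}$, giving $g_j(w_j^\top x_i) = (K^{(w_j)}\beta_j)_i$ and $\|g_j\|_\mathcal{H}^2 = \beta_j^\top K^{(w_j)}\beta_j$. Next, I apply the identity $\sqrt{a} = \inf_{t>0}\tfrac{1}{2}(t + a/t)$ to $a = \beta_j^\top K^{(w_j)}\beta_j$ (introducing slack variables $t_j > 0$) and change variables $\tilde w_j := t_j w_j$, $\tilde\beta_j := \beta_j/t_j$. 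Homogeneity gives $K^{(\tilde w_j)} = t_j K^{(w_j)}$, hence $K^{(\tilde w_j)}\tilde\beta_j = K^{(w_j)}\beta_j$ (so the loss is unaffected), $\|\tilde w_j\| = t_j$, and $\tilde\beta_j^\top K^{(\tilde w_j)}\tilde\beta_j = \beta_j^\top K^{(w_j)}\beta_j / t_j$. Dropping tildes, the problem becomes
\[
\min_{c,\, w_j \in \mathbb{R}^d,\, \beta_j \in \mathbb{R}^n} \frac{1}{n}\sum_{i=1}^n \ell\!\Big(y_i,\, c + \tfrac{1}{m}\sum_{j=1}^m (K^{(w_j)}\beta_j)_i\Big) + \frac{\lambda}{2m}\sum_{j=1}^m \bigl(\|w_j\| + \beta_j^\top K^{(w_j)}\beta_j\bigr),
\]
with the unit-sphere constraint removed.

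\emph{Step (iii): the main obstacle.} Writing $L(u) := \tfrac{1}{n}\sum_i \ell(y_i, c + u_i)$ and $K = \tfrac{1}{m}\sum_j K^{(w_j)}$, it remains to prove that, for each fixed $(c,w_1,\ldots,w_m)$,
\[
\min_{\beta_1,\ldots,\beta_m} L\!\bigl(\tfrac{1}{m}\sum_j K^{(w_j)}\beta_j\bigr) + \tfrac{\lambda}{2m}\sum_j \beta_j^\top K^{(w_j)}\beta_j \;=\; \min_\alpha L(K\alpha) + \tfrac{\lambda}{2}\alpha^\top K\alpha.
\]
The direction $\leq$ is immediate from the substitution $\beta_j = \alpha$ for all $j$, which preserves both $K\alpha$ and $\alpha^\top K\alpha$. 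For $\geq$, I freeze $v := \tfrac{1}{m}\sum_j K^{(w_j)}\beta_j$ and minimise the quadratic $\sum_j \beta_j^\top K^{(w_j)}\beta_j$ subject to this linear constraint. The Lagrangian KKT stationarity condition reads $K^{(w_j)}(\beta_j - \mu) = 0$ for a common dual multiplier $\mu$, so the common value $\beta_j = \mu$ is feasible and optimal; the constraint then reduces to $K\mu = v$, and $\tfrac{1}{m}\sum_j \mu^\top K^{(w_j)}\mu = \mu^\top K\mu$. Taking $\alpha := \mu$ produces the required minimiser. When $K$ is singular, the same argument runs on $\mathrm{range}(K)$ using pseudo-inverses, since only the values of the function at the data enter the loss. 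Combining Steps (i)--(iii) rewrites Equation~\eqref{eq:original_optim} in the form of Equation~\eqref{eq:final_kernel}.
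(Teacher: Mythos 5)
Your proof is correct and uses the same essential ingredients as the paper's: representer theorem, the variational identity $\sqrt{a}=\inf_{t>0}\tfrac12(t+a/t)$, positive $1$-homogeneity of the Brownian kernel matrix to absorb the slack variable into the weights, and a Lagrangian argument to collapse the $m$ coefficient vectors to a single $\alpha$. The only difference is ordering: the paper introduces the slack variables first, collapses to $\alpha^{(j)}=\beta_j\alpha$ via the Lagrangian, and performs the homogeneity change of variables last, whereas you do the homogeneity change first and then run the Lagrangian collapse with all slacks already normalised to one — a mild rearrangement of the same argument.
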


The proof is provided in Appendix~\ref{proof:lemma:rewrite_problem}. This lemma shows that we only need to solve a problem over finite-dimensional quantities. For computational complexity considerations, see Section~\ref{sec:optim_procedure}. We can view Equation~\eqref{eq:final_kernel} using kernels. In a classical kernel supervised learning problem with an unregularised intercept, we would have a fixed kernel matrix $K$ and consider
\begin{equation*}
    \min_{c \in \mathbb{R}, \alpha \in \mathbb{R}^n} \frac{1}{n} \sum_{i=1}^n \ell(y_i, (K\alpha)_i + c) + \frac{\lambda}{2} \alpha^\top K\alpha.
\end{equation*}

For infinitely many particles, the analogue of Lemma~\ref{lemma:rewrite_problem} is Lemma~\ref{lemma:rewrite_problem_infinite}.
\begin{lemma}[Kernel Formulation of Infinite-Width]
\label{lemma:rewrite_problem_infinite}
Equation~\eqref{eq:original_optim_infinite} is equivalent to:
\begin{equation}
\label{eq:final_kernel_infinite}
     \min_{\nu \in  \mathcal{P}(\mathbb{R}^d), c \in \mathbb{R}, \alpha \in \mathbb{R}^n} \frac{1}{n} \sum_{i=1}^n \ell(y_i, (K\alpha)_i + c) + \frac{\lambda}{2} \alpha^\top K\alpha + \frac{\lambda}{2} \int_{\mathbb{R}^d} \|w\| \, {\rm d}\nu(w),
\end{equation}
with $K = \int_{\mathbb{R}^d} K^{(w)} \, {\rm d}\nu(w)$ and and $K^{(w)}\in \mathbb{R}^{n \times n}$ is the kernel matrix for kernel $k^{(B)}$ and data $(w^\top x_1, \ldots, w^\top x_n)$. %At the optimum, the support of $\mu$ from Equation~\eqref{eq:original_optim_infinite} \red{in the space $\mathcal{S}^{d-1}$} can be obtained from that of $\nu$ from Equation~\eqref{eq:final_kernel_infinite} by normalising all vectors.
\end{lemma}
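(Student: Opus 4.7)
The plan is to prove Lemma~\ref{lemma:rewrite_problem_infinite} by transposing the proof of the finite-width Lemma~\ref{lemma:rewrite_problem} to the measure-valued setting, with finite averages $\frac{1}{m}\sum_{j=1}^m$ replaced by integrals $\int \cdot\,{\rm d}\mu(w)$ and the finite collection of coefficient vectors $(\beta_j)_{j\in[m]}$ replaced by a measurable map $w\mapsto\beta(w)$. Two ingredients drive the reduction: the representer theorem applied pointwise in $w$, and positive $1$-homogeneity of the Brownian kernel, $k^{(B)}(\kappa a,\kappa b)=\kappa k^{(B)}(a,b)$ for $\kappa>0$, which yields $K^{(\kappa w)}=\kappa K^{(w)}$ and $\|\kappa w\|=\kappa\|w\|$. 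Combined with the variational identity $\sqrt{xy}=\inf_{t>0}\tfrac12(tx+y/t)$, these ingredients let one simultaneously relax the sphere constraint $w\in\mathcal{S}^{d-1}$ into $w\in\mathbb{R}^d$ and convert the coupled penalty $\int\|g_w\|_{\mathcal{H}}\,{\rm d}\mu(w)$ into an additive form in $\alpha^\top K\alpha$ and $\int\|w\|\,{\rm d}\nu(w)$.

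\textbf{The two inequalities.} For $\eqref{eq:original_optim_infinite}\geq\eqref{eq:final_kernel_infinite}$, I would start from a feasible $(\mu,(g_w)_w,c)$ and apply the representer theorem at each $w$ to restrict $g_w\in\operatorname{span}\{k^{(B)}(w^\top x_i,\cdot)\}_{i=1}^n$, writing $g_w=\sum_i\beta_i(w)k^{(B)}(w^\top x_i,\cdot)$, so that $g_w(w^\top x_i)=(K^{(w)}\beta(w))_i$ and $\|g_w\|_{\mathcal{H}}^2=\beta(w)^\top K^{(w)}\beta(w)$. Applying the variational identity pointwise with scalar $\kappa(w)>0$ and using homogeneity to absorb $\kappa(w)$ into a new point $w':=\kappa(w)w\in\mathbb{R}^d$, I would take $\nu$ to be the push-forward of $\mu$ under $w\mapsto\kappa(w)w$ and $\alpha$ to be a common coefficient vector obtained by the same variational trick aligning all $w$-dependent $\beta(w)$. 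The data-fit term is preserved because $(K^{(\kappa(w)w)}\alpha)_i=\kappa(w)(K^{(w)}\alpha)_i$, while the penalty rewrites as $\tfrac{\lambda}{2}\alpha^\top K\alpha+\tfrac{\lambda}{2}\int\|w\|\,{\rm d}\nu(w)$ with $K=\int K^{(w)}\,{\rm d}\nu(w)$. For the reverse direction $\eqref{eq:final_kernel_infinite}\geq\eqref{eq:original_optim_infinite}$, given $(\nu,c,\alpha)$ I would define $\mu$ as the push-forward of the restriction of $\nu$ to $\mathbb{R}^d\setminus\{0\}$ under $w\mapsto w/\|w\|$ (the atom at $0$ contributes nothing) and build an explicit $g_{\tilde w}\in\mathcal{H}$ for each $\tilde w\in\mathcal{S}^{d-1}$ so that $2\sqrt{xy}\leq x+y$ recombines the two penalty terms into $\lambda\int\|g_w\|_{\mathcal{H}}\,{\rm d}\mu(w)$ with the correct data-fit term; the statement about recovering $\operatorname{supp}\mu$ by normalising $\operatorname{supp}\nu$ then follows from this construction.

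\textbf{Main obstacle.} The core difficulty is measure-theoretic bookkeeping rather than any new idea: establishing measurability of the selector $w\mapsto\beta(w)$ (via a standard measurable-selection argument, using that the representer-theorem minimiser is unique), justifying the exchange of the pointwise infimum in the variational identity with the integral against $\mu$ by choosing a measurable $\kappa(w)$, and carefully handling possible mass at $w=0$ when passing from $\nu$ back to $\mu$. Once these regularity points are settled, the computation reduces to exactly the homogeneity-plus-variational-identity argument used in the finite-width proof of Lemma~\ref{lemma:rewrite_problem}.
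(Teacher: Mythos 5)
Your proposal reconstructs essentially the same argument the paper uses: the paper's own proof literally says it "follows the same steps as the proof of Lemma~\ref{lemma:rewrite_problem}, systematically replacing any $\frac{1}{m}\sum_{j=1}^m$ with the appropriate integral," invokes the representer theorem, the variational form of the norm, the positive $1$-homogeneity $\beta_w K^{(w)}=K^{(\beta_w w)}$, and the change of variable $\beta_w w=\tilde w$ (your push-forward), exactly as you describe. The only observable difference is that you are more explicit about measure-theoretic hygiene (measurable selection for $w\mapsto\beta(w)$, the atom at $0$, exchanging infima with the integral) that the paper treats as implicit; this is a faithful filling-in rather than a different route.
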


Notice that there is a shift in spaces, as $\nu$ is a probability distribution on $\mathbb{R}^d$, whereas $\mu$ was a probability distribution on $\red{\mathcal{H}\times\mathcal{S}^{d-1}}$. The proof is provided in Appendix~\ref{proof:lemma:rewrite_problem_infinite}.

\red{In summary, we aim to perform multiple kernel learning with a kernel parametrized by a probability measure (Lemma~\ref{lemma:rewrite_problem_infinite}), and approximate it using particles (Lemma~\ref{lemma:rewrite_problem}).}

\subsubsection{Kernel Perspective}
Lemma~\ref{lemma:rewrite_problem} shows that we are solving a regularised kernel ridge regression problem where the kernel  $\frac{1}{m}\sum_{j=1}^m (|w_j^\top x| + |w_j^\top x^\prime| - |w_j^\top(x-x^\prime)|)/2$ is also learnt through the weights $(w_j)_{j \in [m]}$, and the third term $\frac{\lambda}{2} \frac{1}{m} \sum_{j=1}^m \|w_j\|$ serves as a penalty to improve kernel learning.

The homogeneity of the kernel $k^{(B)}$ leads to well-behaved optimisation, as we discuss in Section~\ref{sec:optim_guarantees} and see in Experiment~1 in Section~\ref{sec:exp1}. The kernel matrix $K$ is indeed positively $1$-homogeneous in the particles $(w_j)_{j \in [m]}$. If we had chosen $\mathcal{H}$ to be the RKHS corresponding to the exponential kernel (or the Gaussian kernel), we would have faced the challenge of learning  the kernel $\sum_{j=1}^{m} e^{-|w_j^\top(x-x^\prime)|}$, which exhibits a complex and non-homogeneous dependency on the weights $(w_j)_{j \in [m]}$. By using the Brownian kernel instead of the exponential kernel, we only slightly change the regularisation, regularising with $\int_{\mathbb{R}} (g^\prime)^2$ instead of $\int_{\mathbb{R}} g^2 + \int_{\mathbb{R}} (g^\prime)^2$ while making the optimisation more tractable.

Compared to multiple kernel learning, \textsc{BKerNN} offers notable advantages. Multiple kernel learning (MKL) involves combining several predefined kernels, which is prone to overfitting as the number of kernels increases. Additionally, selecting the optimal kernel combination is challenging and often requires sophisticated algorithms. In contrast, \textsc{BKerNN} adapts the kernel through the learned weights $(w_j)_{j \in [m]}$, making the optimisation process simpler and more efficient, as discussed in Section~\ref{sec:computing_the_estimator}.

\subsubsection{Neural Network Perspective}
Our architecture can also be interpreted as a special type of neural network with one hidden layer. Recall that $\mathcal{F}_\infty$ is inspired by neural networks as it involves linear components $w$ followed by a non-linear part. In neural networks, this non-linear part is typically $\eta\sigma(\cdot)$, which we replaced with \red{$g(\cdot) \in \mathcal{H}$} in our setting. The functions in $\mathcal{F}_m$ are expressed similarly with the number of particles $m$ equivalent to the number of neurons in the hidden layer.

As we discuss in Section~\ref{sec:optim_procedure}, we learn the weights $(w_j)_{j \in [m]}$ through gradient descent, while the functions $(g_j)_{j \in [m]}$ are learned explicitly, leveraging a closed-form solution. This approach resonates with the work of \citet{marion2023leveraging} and \citet{loucas}. \citet{marion2023leveraging} examine a one-hidden layer neural network where the step-sizes for the inner layer are much smaller than those for the outer layer. They prove that the gradient flow converges to the optimum of the non-convex optimisation problem in a simple univariate setting and that the number of neurons does not need to be asymptotically large, which is a stronger result than the usual study of mean-field regimes or neural tangent kernel. \citet{loucas} consider learning the link function in a non-parametric way infinitely faster than the low-rank projection subspace, which resonates with our method, although they focus on Gaussian data.

We have also established that the function space $\mathcal{F}_\infty$ is more extensive than the space of neural networks with ReLU activations in Section~\ref{sec:charact_f_infty}. In Section~\ref{sec:optim_guarantees}, we demonstrate that this enlargement is compatible with efficient optimisation.

\subsection{Other Penalties}
\label{sec:other_penalties}

We now present other penalties designed to achieve different effects. The three terms in Equation~\eqref{eq:final_kernel} correspond to the empirical risk, the standard penalty from \textsc{KRR} on the RKHS norm of the function, and an extra regularisation term on the learnt kernel weights. This additional term, $\frac{\lambda}{2m}\sum_{j=1}^m\|w_j\|$, originates from the penalty $\Omega_0(f)$ in Equation~\eqref{eq:basic_penalty}. However, we can explore other penalties on $w_1, \ldots, w_m$ that induce various additional sparsity effects, even if they do not directly correspond to penalties on $f \in \mathcal{F}_m$. Let $W \in \mathbb{R}^{d \times m}$ be the matrix with $(w_1, \ldots, w_m)$ as columns, denote by $W^{(a)}$ the $a$-th row of $W$, and let $W=USV^\top$ be its singular value decomposition, with $S$ a diagonal matrix composed of $S_1,  \ldots, S_{\min(m,d)}$. Recall that $\nu$ is a probability distribution on $\mathbb{R}^d$. 
\begin{enumerate}
     \item \textbf{Basic penalty:} $\Omega_{{ \rm basic}}(w_1, \ldots, w_m) = \frac{1}{2m} \sum_{j=1}^m \|w_j\|$, which we discussed in Section~\ref{sec:space_of_functions}. In the continuous setting, it corresponds to $\frac{1}{2} \int_{\mathbb{R}^d} \|w\|{\rm  d}\nu(w)$. This penalty, which does not target any specific pattern in the data-generating mechanism, is the one for which we provide theoretical results in Section~\ref{sec:stat_analysis}. However, it does not work as well in practice as the following penalties.
    \item \textbf{Variable penalty:} $\Omega_{{ \rm variable}}(w_1, \ldots, w_m) = \frac{1}{2} \sum_{a=1}^{\red{d}} \big( \frac{1}{m}\sum_{j=1}^m (w_j)_a^2\big)^{1/2}$, which is also equal to $ \frac{1}{2\sqrt{m}} \sum_{a=1}^{\min(m,d)} \|W^{(a)}\|_2$. This penalty, inspired by the group Lasso \citep{yuan2006model}, is designed for variable selection, pushing quantities $\|W^{(a)}\|_2$ towards zero, thus encouraging dependence on a few variables. In the continuous setting, it corresponds to $ \frac{1}{2} \sum_{a=1}^{\min(m,d)} \big(\int_{\mathbb{R}^d} |w_a|^2 {\rm d}\nu(w) \big)^{1/2} $.  
    \item \textbf{Feature penalty:} $\Omega_{{ \rm feature}}(w_1, \ldots, w_m) = \frac{1}{2} \trace\big( \big( \frac{1}{m} \sum_{j=1}^m w_j w_j^\top\big)^{1/2}\big)$, which is also equal to $\frac{1}{2} \sum_{a=1}^{\min(m,d)} \frac{S_a}{\sqrt{m}}$ and to the nuclear norm of  $W$ divided by $2\sqrt{m}$. It is used for feature learning as it is a convex relaxation of the rank, encouraging $W$ to have low rank and thus dependence on only a few linear transformations of the data. Regularisation using the nuclear norm in the context of feature learning is well-established in the literature, as demonstrated by \citet{argyriou2008convex}. It corresponds to $ \frac{1}{2} \trace\big( \big(\int_{\mathbb{R}^d} w w^\top {\rm d}\nu(w) \big)^{1/2}\big) $ in the continuous setting. 
    \item \textbf{Concave variable penalty:} The concave version of the penalty for variable selection, $\Omega_{{ \rm concave \ variable}}(w_1, \ldots, w_m) = \frac{1}{2s} \sum_{a=1}^{\red{d}} \log\big( 1 + \frac{s}{\sqrt{m}} \|W^{(a)}\|_2 \big)$, with $s\geq0$. The appeal of the added concavity is discussed below. In the continuous setting, it corresponds to $\frac{1}{2s} \sum_{a=1}^d  \log\big(1+ s\int_{\mathbb{R}^d} (w_a)^2{\rm d}\nu(w)\big)^{1/2} \big)$.
    \item  \textbf{Concave feature penalty:} The concave version of the penalty intended for feature learning, $\Omega_{{ \rm concave \ feature}}(w_1, \ldots, w_m) = \frac{1}{2s} \sum_{a=1}^{\min(m,d)} \log\big( 1 + \frac{s}{\sqrt{m}} S_a \big)$ for feature selection, with $s\geq0$. The appeal of the added concavity is discussed below. In the continuous setting it corresponds to $\frac{1}{2s} \sum_{a=1}^{d} \log\big( 1 + s \big( \big(\int_{\mathbb{R}^d} w w^\top {\rm d}\nu(w) \big)^{1/2}\big)_{a,a} \big)$.
\end{enumerate}

The first penalty is convex in both $\nu$ and $W$, making it straightforward to optimise. The second and third penalties, while not convex in $\nu$, are convex in $W$ due to the presence of squared and square root terms on the components of $W$, easing optimisation in the $m$ particles setting. The fourth and fifth penalties are neither convex in $\nu$ nor $W$, instead, they are concave in $W$. As $s$ approaches zero, these penalties revert to their non-concave versions. Convex penalties, while easier to handle, can be detrimental by diminishing relevant variables or features to achieve sparsity. Mitigating this effect can involve retraining with the selected variables/features or employing concave penalties, which is the choice we made here. Although concave penalties are more complex to analyse, they often yield better performance because they drive the solution towards the boundary, promoting sparsity \citep{fan2001variable,livreviolet}. We discuss the impact of the choice of regularisation in Experiment~3 in Section~\ref{sec:exp2&3}. 

\red{Note that the link with infinite-dimensional normed function spaces and the theoretical analysis of Section~\ref{sec:stat_analysis}. only applies to the first ``basic'' penalty. Extending the analysis of concave penalties from finite dimension~\citep{fan2001variable,zhang2010nearly} to infinite dimension remains an open problem.}

\section{Computing the Estimator}
\label{sec:computing_the_estimator}
In this section, we detail the process of computing the estimator for each of the penalties presented in Section~\ref{sec:other_penalties}. We then discuss the importance of the homogeneity of the Brownian kernel and how the optimisation with particles relates to the continuous setting.

\subsection{Optimisation Procedure}
\label{sec:optim_procedure}
In this section, we focus on the square loss $\ell(y, y^\prime) = \frac{1}{2} (y - y^\prime)^2$, which allows for explicit computations. However, the method can be extended to other loss functions using gradient-based techniques, \citep[see][Chapter 5]{francis_book}. Recalling Equation~\eqref{eq:final_kernel} and the penalties described in Section~\ref{sec:other_penalties}, the optimisation problem we aim to solve is
\begin{equation}
\label{eq:optim_square_loss}
     \min_{w_1, \ldots, w_m \in \mathbb{R}^d, c \in \mathbb{R}, \alpha \in \mathbb{R}^n} \frac{1}{2n} \|Y - K\alpha - c\mathds{1}_n\|_2^2 + \frac{\lambda}{2} \alpha^\top K\alpha + \lambda \Omega_{{ \rm weights}}(w_1, \ldots, w_m),
\end{equation}
where $K = \frac{1}{m} \sum_{j=1}^m K^{(w_j)}$ and $\Omega_{{ \rm weights}}$ represents any of the penalties from Section~\ref{sec:other_penalties}.

To solve this problem, we alternate between \red{exact} minimisation with respect to $\alpha$ and $c$, which is done in closed-form, and \red{one improvement step for the} minimisation with respect to $w_1, \ldots, w_m$ which is done using one step of proximal gradient descent. \red{This corresponds to minimizing with respect to the $w_1, \ldots, w_m$ the infimum with respect to $c$ and $\alpha$.}

\subsubsection{Fixed Particles \texorpdfstring{$w_1, \ldots, w_m$}{w1, ..., wm}}
When the weights $w_1, \ldots, w_m$ are fixed, the kernel matrix $K$ is also fixed, allowing us to find the solution for the constant $c$ and the coefficients $\alpha$ in closed-form. By centring both the kernel matrix and the response $Y$, we transform the problem into a classical kernel ridge regression problem, for which explicit solutions are well-known.
\begin{lemma}[Optimisation for Fixed Particles]
\label{lemma:fixed_K}
For fixed $w_1, \ldots, w_m$ and hence a fixed~$K$, define
\begin{align*}
    G(w_1, \ldots, w_m) := \min_{\alpha \in \mathbb{R}^n, c \in \mathbb{R}} \frac{1}{2n} \| Y - K\alpha - c\mathds{1}_n \|_2^2 + \frac{\lambda}{2} \alpha^\top K\alpha.
\end{align*}
The optimisation problem defining $G$ is solved by
\[
\alpha = (\tilde{K} + n\lambda I)^{-1} \tilde{Y} \quad { \rm and} \quad c = \frac{\mathds{1}^\top Y}{n} - \frac{\mathds{1}^\top K\alpha}{n},
\]
where $\tilde{K} := \Pi K \Pi$ and $\tilde{Y} := Y - \frac{\mathds{1}\mathds{1}^\top Y}{n}$, with $\Pi = I - \frac{\mathds{1}\mathds{1}^\top}{n}$ being the centring matrix. The objective value is then
\[
G(w_1, \ldots, w_m) = \frac{\lambda}{2} \tilde{Y}^\top (\tilde{K} + \lambda n I)^{-1} \tilde{Y}.
\]
\end{lemma}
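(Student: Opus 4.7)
The plan is to solve the optimisation sequentially: first eliminate $c$, then $\alpha$. Since the objective is jointly convex and quadratic in $(\alpha, c)$ (recall $K$ is PSD), first-order stationarity conditions are both necessary and sufficient. Setting $\partial_c = -\frac{1}{n}\mathbf{1}_n^\top(Y - K\alpha - c\mathbf{1}_n) = 0$ gives immediately $c = \mathbf{1}^\top Y/n - \mathbf{1}^\top K\alpha/n$, which matches the claimed expression. Substituting back, the residual becomes $Y - K\alpha - c\mathbf{1}_n = \Pi(Y - K\alpha) = \tilde{Y} - \Pi K\alpha$, where $\Pi$ is the centring projector and $\tilde{Y} = \Pi Y$.

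The reduced problem is then $\min_\alpha \frac{1}{2n}\|\tilde{Y} - \Pi K\alpha\|_2^2 + \frac{\lambda}{2}\alpha^\top K\alpha$. Differentiating, using $\Pi^2 = \Pi$ and the symmetry of $K$, yields the stationarity condition $K\bigl[\Pi K\alpha + n\lambda\alpha - \tilde{Y}\bigr] = 0$. I would then propose the candidate $\alpha^* = (\tilde{K} + n\lambda I)^{-1}\tilde{Y}$ and verify optimality via a key observation: since $\Pi \mathbf{1} = 0$, the vector $\mathbf{1}$ lies in $\ker \tilde{K}$, so it is an eigenvector of $(\tilde{K} + n\lambda I)^{-1}$ with eigenvalue $1/(n\lambda)$; hence $\mathbf{1}^\top \alpha^* = \frac{1}{n\lambda}\mathbf{1}^\top \tilde{Y} = 0$, i.e.\ $\Pi \alpha^* = \alpha^*$. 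Consequently $\Pi K \alpha^* = \Pi K \Pi \alpha^* = \tilde{K}\alpha^* = \tilde{Y} - n\lambda \alpha^*$, so the bracketed quantity vanishes exactly, not merely modulo $\ker K$, confirming that $(\alpha^*, c^*)$ is a minimiser.

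To evaluate $G$, I would substitute $\alpha^*$ back into the objective. From the identities just derived, $\tilde{Y} - \Pi K\alpha^* = n\lambda \alpha^*$, so the squared-residual term contributes $\frac{1}{2n}\cdot n^2\lambda^2 \|\alpha^*\|_2^2$. For the quadratic penalty, $\alpha^{*\top} K \alpha^* = \alpha^{*\top} \tilde{K} \alpha^* = \alpha^{*\top}(\tilde{Y} - n\lambda \alpha^*)$ using $\Pi \alpha^* = \alpha^*$. Adding the two pieces, the $n\lambda^2\|\alpha^*\|_2^2/2$ contributions cancel and one is left with $G = \frac{\lambda}{2}\alpha^{*\top}\tilde{Y} = \frac{\lambda}{2}\tilde{Y}^\top(\tilde{K} + n\lambda I)^{-1}\tilde{Y}$.

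The main obstacle is the algebraic subtlety in the stationarity step: the gradient condition only forces the bracketed quantity into $\ker K$, so a naive inversion would produce $(\Pi K + n\lambda I)^{-1}\tilde{Y}$ rather than the stated $(\Pi K \Pi + n\lambda I)^{-1}\tilde{Y}$. The orthogonality $\mathbf{1}^\top \alpha^* = 0$ is precisely what reconciles the two formulas and, by turning $\Pi K \alpha^*$ into $\tilde{K}\alpha^*$, also drives the clean cancellation that produces the compact expression for $G$.
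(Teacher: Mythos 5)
Your proof is correct and follows essentially the same route as the paper: eliminate $c$ at the optimum to get the reduced problem $\min_\alpha \frac{1}{2n}\|\tilde Y - \Pi K\alpha\|_2^2 + \frac{\lambda}{2}\alpha^\top K\alpha$, then use the fact that $\mathds{1}\in\ker\tilde K$ (so $\mathds{1}$ is an eigenvector of $(\tilde K + n\lambda I)^{-1}$ and $\mathds{1}^\top\alpha^*=0$) to conclude $\Pi\alpha^*=\alpha^*$, which makes the proposed $\alpha^*$ satisfy the stationarity condition exactly. The only stylistic difference is that you propose-and-verify and then plug back in to compute $G$, whereas the paper changes variables to a standard centred KRR problem and cites its known solution and value — the underlying observations (in particular $\mathds{1}^\top\tilde Y=0$ reconciling $\Pi K$ with $\tilde K=\Pi K\Pi$) are identical.
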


The proof is provided in Appendix~\ref{proof:lemma:fixed_K}. Lemma~\ref{lemma:fixed_K} allows us to optimise $\alpha$ and $c$ explicitly during the optimisation process. The complexity of this step is $O(n^3 + n^2d)$, which can be challenging when the sample size $n$ is large, a common drawback of kernel methods. However, techniques like the Nyström method \citep{nystrom}, which approximates the kernel matrix, can help mitigate this issue. Alternatively, we could use gradient descent techniques, but as shown in \citet{marion2023leveraging}, it may be beneficial to learn the weights from the hidden layer to the output layer (corresponding to learning $g_1, \ldots, g_m$ and hence $\alpha$) with a much larger step-size than the weights from the input layer to the hidden layer (corresponding to learning $w_1, \ldots, w_m$). Learning $\alpha$ and $c$ explicitly represents the limit of this two-timescale regime.

\subsubsection{Proximal Step to Optimise the Weights \texorpdfstring{$w_1, \ldots, w_m$}{w1, ..., wm}}
Next, we focus on optimising $w_1, \ldots, w_m$ while keeping $c$ and $\alpha$ fixed. The goal is to solve
\begin{equation}
\label{eq:fixed_c_alpha}
    \min_{w_1, \ldots, w_m \in \mathbb{R}^{d}} G(w_1, \ldots, w_m) + \lambda \Omega_{{ \rm weights}}(w_1, \ldots, w_m),
\end{equation}
where the dependence on $(w_j)_{j \in [m]}$ in the first term is through the kernel matrix $K$. Note that $G$ is convex in $K$ but not in $w_1, \ldots, w_m$. Additionally, $G$ is differentiable almost everywhere, except where $w_j^\top(x_i - x_{i^\prime})$ for some $j \in [m], i \neq i^\prime \in [n]$. However, standard practice assumes that these non-differentiabilities average out with many data points. Meanwhile, the penalties $\Omega_{{ \rm weights}}$ are not differentiable at certain fixed points, independently of the data, similarly to the Lasso penalty. Therefore, we use proximal gradient descent to solve Equation~\eqref{eq:fixed_c_alpha}. With a step-size $\gamma > 0$, this involves minimising
\begin{equation*}
    \sum_{j=1}^m \frac{\partial G}{\partial w_j}(w^{\rm old})^\top(w_j - w_j^{\rm old}) + \frac{1}{2\gamma} \sum_{j=1}^m \|w_j - w_j^{\rm old}\|_2^2 + \lambda \Omega_{{ \rm weights}}(w_1, \ldots, w_m), 
\end{equation*}
over $w_1, \ldots, w_m \in \mathbb{R}^d$. This corresponds to the simultaneous proximal gradient descent steps $w_j \gets {\rm prox}_{\lambda \gamma \Omega}(w_j - \gamma \frac{\partial G}{\partial w_j})$. We therefore compute the gradient and the proximal operator. For the gradient, we have the following lemma.
\begin{lemma}[Gradient of $G$]
\label{lemma:derivative_of_G}
Let $j \in [m]$, then 
\begin{equation*}
\frac{\partial G}{\partial w_j} = \frac{\lambda}{4} \frac{1}{m} \sum_{i, i^\prime = 1}^n z_i z_{i^\prime} \sign(w_j^\top(x_i - x_{i^\prime}))(x_i - x_{i^\prime}),
\end{equation*}
where $z = (\tilde{K} + n\lambda I)^{-1} \tilde{Y}$.
\end{lemma}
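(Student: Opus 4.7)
The plan is to apply the chain rule to $G(w_1,\ldots,w_m) = \tfrac{\lambda}{2}\,\tilde{Y}^\top(\tilde{K}+n\lambda I)^{-1}\tilde{Y}$ through three successive layers: first as a function of $\tilde{K}$, then through $\tilde{K}=\Pi K\Pi$, then through $K=\tfrac{1}{m}\sum_{j}K^{(w_j)}$, and finally through the entrywise dependence of $K^{(w_j)}$ on $w_j$.

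First, using the standard matrix-calculus identity $\partial_{M}\bigl[a^{\top}(M+cI)^{-1}a\bigr]=-(M+cI)^{-1}aa^{\top}(M+cI)^{-1}$, I obtain $\partial G/\partial\tilde{K}=-\tfrac{\lambda}{2}\,zz^{\top}$ with $z=(\tilde{K}+n\lambda I)^{-1}\tilde{Y}$. Then, since $\tilde{K}_{a,b}=\sum_{p,q}\Pi_{a,p}K_{p,q}\Pi_{q,b}$ with $\Pi$ symmetric, the chain rule yields $\partial G/\partial K=-\tfrac{\lambda}{2}(\Pi z)(\Pi z)^{\top}$.

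The key auxiliary observation is that $\Pi z=z$, equivalently $\mathbf{1}^{\top}z=0$. Indeed, $\Pi\mathbf{1}=0$ implies $\tilde{K}\mathbf{1}=0$, so $\mathbf{1}$ lies in the $n\lambda$-eigenspace of $\tilde{K}+n\lambda I$. Decomposing $z=z_{\perp}+c\,\mathbf{1}$ with $\mathbf{1}^{\top}z_{\perp}=0$ and applying $\tilde{K}+n\lambda I$, the component of $(\tilde{K}+n\lambda I)z$ along $\mathbf{1}$ equals $cn\lambda\mathbf{1}$. Since $\tilde{Y}=\Pi Y$ is centred, this forces $c=0$. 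Hence $\partial G/\partial K=-\tfrac{\lambda}{2}zz^{\top}$.

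Next, from $K=\tfrac{1}{m}\sum_{j}K^{(w_j)}$ and $K^{(w_j)}_{i,i'}=\tfrac{1}{2}\bigl(|w_j^{\top}x_i|+|w_j^{\top}x_{i'}|-|w_j^{\top}(x_i-x_{i'})|\bigr)$, subdifferentiating the absolute values (at points of differentiability, as flagged in the main text) gives
\[
\frac{\partial K^{(w_j)}_{i,i'}}{\partial w_j}=\tfrac{1}{2}\bigl[\sign(w_j^{\top}x_i)\,x_i+\sign(w_j^{\top}x_{i'})\,x_{i'}-\sign(w_j^{\top}(x_i-x_{i'}))\,(x_i-x_{i'})\bigr].
\]
Combining all the pieces,
\[
\frac{\partial G}{\partial w_j}=-\frac{\lambda}{4m}\sum_{i,i'=1}^{n}z_iz_{i'}\bigl[\sign(w_j^{\top}x_i)\,x_i+\sign(w_j^{\top}x_{i'})\,x_{i'}-\sign(w_j^{\top}(x_i-x_{i'}))\,(x_i-x_{i'})\bigr].
\]
The first two contributions factor as $\bigl(\mathbf{1}^{\top}z\bigr)\sum_{i}z_i\sign(w_j^{\top}x_i)\,x_i$ and its symmetric twin, both of which vanish because $\mathbf{1}^{\top}z=0$. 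Only the third term survives, producing exactly the claimed expression (the sign flip on the final summand absorbs the overall minus sign).

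The main obstacle is the centring step $\Pi z=z$: without it the first two pieces of $\partial K^{(w_j)}_{i,i'}/\partial w_j$ do not disappear, and the compact formula in the lemma would not hold. Everything else is a bookkeeping exercise in the chain rule together with ordinary subdifferentiation of $|\cdot|$.
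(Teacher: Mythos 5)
Your proof is correct and follows the same chain-rule route as the paper: differentiate $G$ with respect to $\tilde{K}$, pull back through $\Pi$, then through $K^{(w_j)}$. The only cosmetic difference lies in how the two ``single-index'' sign terms are eliminated: you establish $\mathbf{1}^\top z = 0$ (equivalently $\Pi z = z$) and factor them as $(\mathbf{1}^\top z)\cdot(\cdot)$, whereas the paper observes that $\Pi S_j \Pi$ annihilates constant-row and constant-column matrices — these are the same underlying cancellation, and your version has the small advantage of making explicit why the final formula may be written with $z_i z_{i'}$ rather than $(\Pi z)_i(\Pi z)_{i'}$.
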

The proof is in Appendix~\ref{proof:lemma:derivative_of_G}. Note that $G$ is not differentiable around $0$, which is also the case of common activation functions in neural networks such as the ReLU, but this is not an issue in practice.

Next, we compute the proximal operator for the described penalties. Recall the definition of the proximal operator
\begin{equation*}
    {\rm prox}_{\Omega}(W) = \argmin_{(u_1, \ldots, u_m) \in \mathbb{R}^{d \times m}} \frac{1}{2} \sum_{j=1}^m \|w_j - u_j\|_2^2 + \Omega(u_1, \ldots, u_m).
\end{equation*}
We use $W \in \mathbb{R}^{d\times m}$ and $(w_1, \ldots, w_m)$ interchangeably, with $W = USV^\top$ (SVD). We denote the rows of $W$ by $W^{(a)}$ as before. The following lemma provides the proximal operators.
\newline
\begin{lemma}[Proximal Operators] We describe the proximal operators.
\label{lemma:prox}
\begin{enumerate}
\item For $\Omega_{\rm basic}(W) = \frac{1}{2m} \sum_{j=1}^m \|w_j\|$, then  $\big({\rm prox}_{\lambda \gamma \Omega}(W)\big)_j = \big( 1 - \frac{\lambda \gamma}{2m} \frac{1}{\|w_j\|} \big)_+ w_j$.
\item For $\Omega_{{ \rm variable}}(W) = \frac{1}{2\sqrt{m}} \sum_{a=1}^d \|W^{(a)}\|_2$, $({\rm prox}_{\lambda \gamma \Omega}(W))^{(a)} = \big( 1 - \frac{\lambda \gamma}{2\sqrt{m}} \frac{1}{\|W^{(a)}\|_2} \big)_+ W^{(a)}$.
\item For $\Omega_{{ \rm feature}}(W) = \frac{1}{2} \mathrm{trace}\big( \big( \frac{1}{m} \sum_{j=1}^m w_j w_j^\top \big)^{1/2} \big)$, then we have ${\rm prox}_{\lambda \gamma \Omega}(W) = U\tilde{S}V^\top$ with $\tilde{S} = \big(1 - \frac{\lambda \gamma}{2\sqrt{m} |S|}\big)_+ S$.
\item For $\Omega_{{ \rm concave \ variable}}(W) = \frac{1}{2s} \sum_{a=1}^d \log\big( 1 + \frac{s}{\sqrt{m}} \|W^{(a)}\|_2\big)$, then  with $c$  obtained from $(\|W^{(a)}\|_2)_{a \in [d]}$ by an explicit (albeit lengthy) formula $({\rm prox}_{\lambda \gamma \Omega}(W))^{(a)} = c W^{(a)}$.
\item For $\Omega_{{ \rm concave \ feature}}(W) = \frac{1}{2s} \sum_{a=1}^d \log\big( 1 + \frac{s}{\sqrt{m}} S_a \big)$, then with $c$ which obtained from $S$ by an explicit (albeit lengthy) formula ${\rm prox}_{\lambda \gamma \Omega}(W) = U\tilde{S}V^\top$ with $\tilde{S} = c S$.
\end{enumerate}
\end{lemma}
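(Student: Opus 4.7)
The plan is to exploit the structural invariance of each penalty so that computing $\mathrm{prox}_{\lambda\gamma\Omega}$ reduces either to a known closed-form operator or to a one-dimensional subproblem solvable by elementary algebra. Three distinct structures appear: column separability (part 1), row separability (parts 2 and 4), and orthogonal invariance via the singular value decomposition (parts 3 and 5).

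For $\Omega_{\rm basic}$, the objective $\tfrac{1}{2}\sum_j \|u_j - w_j\|_2^2 + \tfrac{\lambda\gamma}{2m}\sum_j \|u_j\|$ splits across columns, so I would solve each scalar subproblem $\min_u \tfrac{1}{2}\|u - w_j\|_2^2 + \tfrac{\lambda\gamma}{2m}\|u\|$ separately; since the quadratic term is minimised by aligning $u$ with $w_j$, the optimiser takes the form $u = c\,w_j$ for $c \in [0,1]$, and optimising over $c$ gives the standard soft-thresholding formula. The variable penalty $\Omega_{\rm variable}$ separates over rows with an identical structure, yielding the $\ell_2$ group-soft-thresholding formula row by row. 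For $\Omega_{\rm feature}$, I would rewrite $\trace\bigl((\tfrac{1}{m}\sum_j w_j w_j^\top)^{1/2}\bigr) = \tfrac{1}{\sqrt{m}}\sum_a S_a = \tfrac{1}{\sqrt{m}}\|W\|_*$, identifying the penalty with a scaled nuclear norm, and invoke the classical singular-value soft-thresholding theorem: the proximal operator preserves the singular vectors $U, V$ of $W$ and shrinks each $S_a$ by $\lambda\gamma/(2\sqrt{m})$ thresholded at zero.

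For $\Omega_{\rm concave \ variable}$ the objective again separates across rows, and each row's subproblem
\begin{equation*}
\min_{u \in \mathbb{R}^m}\ \tfrac{1}{2}\|u - W^{(a)}\|_2^2 + \tfrac{\lambda\gamma}{2s}\log\bigl(1 + \tfrac{s}{\sqrt{m}}\|u\|_2\bigr)
\end{equation*}
is rotationally symmetric, so any minimiser has the form $u = c\,W^{(a)}$ for some $c \geq 0$. Substituting $r = \|W^{(a)}\|_2$ and differentiating the resulting scalar objective in $c$ yields, after clearing denominators, a quadratic equation in $c$ whose positive root can be written down explicitly via the quadratic formula. For $\Omega_{\rm concave \ feature}$, the penalty depends on $W$ only through its singular values, so I would argue that any minimiser $\tilde W$ must share its singular vectors with $W$: given any candidate singular-value vector $\tilde S$, the quadratic $\|W - \tilde W\|_F^2$ is minimised (subject to $\tilde W$ having singular values $\tilde S$) by $\tilde W = U\tilde S V^\top$, via von Neumann's trace inequality. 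This reduces the problem to $d$ decoupled scalar subproblems on the singular values, each of exactly the same form as in part 4, giving the stated shrinkage factor.

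The main obstacle lies in parts 4 and 5: because $\log$ is concave, each scalar subproblem is non-convex, and the stationary point extracted from the quadratic equation is not automatically a global minimiser. I would need to compare its objective value against that of the boundary candidate $c = 0$ (the all-zero solution), and the resulting case-split between the quadratic root and zero is precisely what makes the explicit formula for $c$ ``lengthy''. A secondary subtlety in part 5 is that the von Neumann reduction to singular values works without any convexity assumption on the spectral function, so it remains valid here even though the penalty is concave in the singular values; no Lewis-type convex analysis is needed.
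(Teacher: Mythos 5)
Your proposal follows essentially the same route as the paper: separability across columns (part 1), rows (parts 2 and 4), and reduction to singular values via orthogonal invariance (parts 3 and 5), giving soft-thresholding in the convex cases and a quadratic stationarity equation with a boundary-comparison in the concave ones. The paper handles parts 1--3 by citing standard group-Lasso / nuclear-norm results rather than re-deriving them, but the underlying argument is the one you give.

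One place where you are actually more careful than the paper: in part 5, the paper simply states that the result is obtained by ``combining parts 3 and 4'' and replacing row norms by singular values, without justifying the reduction to $\tilde W = U\tilde S V^\top$. Your von Neumann trace inequality argument supplies exactly the missing justification, and your observation that it holds \emph{without} any convexity assumption on the spectral penalty is the correct and nontrivial reason the reduction survives the concave case. Likewise, your remark that the quadratic root must be compared against the $c=0$ candidate because the scalar subproblem is nonconvex matches the paper's discussion of the discriminant $\Delta$ and the case $\Delta \leq 0$; this is precisely the source of the ``lengthy formula.'' So the proof is correct and tracks the paper's, with a genuinely useful extra level of rigor in the concave-spectral step.
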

The proof is in Appendix~\ref{proof:lemma:prox}. Each proximal step is easy to compute using the explicit formulas above, with complexities $O(md)$ for the basic, variable, and concave variable cases, and $O(md\min(m,d))$ for the feature and concave feature cases, due to the SVD computation.

\subsubsection{Algorithm Pseudocode}
\label{sec:pseudocode}
We now have all the components necessary to provide the pseudocode (Algorithm~\ref{algo:one_and_only}) of the proposed method \textsc{BKerNN}, specifically for the square loss. For other losses, the main difference is that $\alpha$ and $c$ might not be solvable in closed-form and would need to be computed through alternative methods such as gradient descent.

\begin{algorithm}
\label{algo:one_and_only}
\caption{\textsc{BKerNN} pseudocode.}
\KwData{$X, Y, m, \lambda, \gamma, \Omega_{{ \rm weights}}$}
\KwResult{$w_1, \ldots, w_m, c, \alpha$}
$W=(w_1, \ldots, w_m) \in \mathbb{R}^{d \times m} \gets \big(\mathcal{N}(0, 1/d)\big)^{d\times m}$\;
\For{$i \in [n_{\rm iter}]$}{
    Compute $K$\;
    $\alpha \gets  (\Tilde{K} + n\lambda I)^{-1}\Tilde{Y}, c \gets \frac{\mathds{1}^\top Y}{n} - \frac{\mathds{1}^\top}{n}K\alpha$\;
    Compute $\frac{\partial G}{\partial W}$\;
    $\gamma \gets \gamma \times 1.5$\;
    \While{$G({\rm prox}_{\lambda\gamma\Omega}(W - \gamma \frac{\partial G}{\partial W}) > G(W) - \gamma \frac{\partial G}{\partial W} \cdot G_\gamma(W) + \frac{\gamma}{2}\|G_\gamma(W)\|_2^2 $}{
        $\gamma \gets \gamma / 2$\;
    }
    $W \gets {\rm prox}_{\lambda\gamma\Omega}(W - \gamma \frac{\partial G}{\partial W})$\;
}
\end{algorithm}

To select the step-size $\gamma$ for the proximal gradient descent step appropriately, we use a backtracking line search, assuming $G$ is locally Lipschitz. Starting with the previous step-size, we multiply it by 1.5. If the backtracking condition is not satisfied, we divide $\gamma$ by 2 and repeat. The backtracking condition is that $G({\rm prox}_{\lambda\gamma\Omega}(W - \gamma \frac{\partial G}{\partial W}))$ should be smaller than $G(W) - \gamma \frac{\partial G}{\partial W} \cdot G_\gamma(W) + \frac{\gamma}{2}\|G_\gamma(W)\|_2^2$, where $G_\gamma(W) = \big(W - {\rm prox}_{\lambda\gamma\Omega}(W - \gamma \frac{\partial G}{\partial W})\big) / \gamma$. This method was taken from \citet{beck2017first}.

With the outputted $w_1, \ldots, w_m$, $c$, and $\alpha$ from the algorithm, the estimator is the function $\hat{f}_\lambda$ defined as $\hat{f}_\lambda(x) = c + \sum_{i=1}^n \alpha_i \sum_{j=1}^m \frac{1}{m} (|w_j^\top x_i| + |w_j^\top x| - |w_j^\top (x - x_i)|) / 2$. This formulation enables us to perform predictions on new data points and facilitates the extraction of meaningful linear features through the learned weights $(w_j)_{j \in [m]}$. Remark that we do not take into account the optimisation error in the rest of the paper.

\subsection{Convergence Guarantees on Optimisation Procedure}
\label{sec:optim_guarantees}
In this section, we discuss the convergence properties of the optimisation procedure. Although we do not provide a formal proof due to differentiability issues, we highlight the importance of the homogeneity of the Brownian kernel and present arguments suggesting the robustness of the optimisation process. \red{However, no formal results are presented because of the lack of differentiability.}

We aim to apply the insights from \citet{chizat_bach_optim} and \citet{chizat2018global}, which state that under certain assumptions, in the limit of infinitely many particles and an infinitely small step-size, gradient descent optimisation converges to the global optimum of the infinitely-many particles problem. Key assumptions include convexity with respect to the probability distribution in the and homogeneity of a specific quantity $\Psi$, which we define below. We reformulate our problem in line with \citet{chizat_bach_optim}.

Considering the square loss with the basic penalty $\Omega_{{ \rm basic}}$, the optimisation problem with~$m$ particles from Equation~\eqref{eq:optim_square_loss} can be rewritten as
\begin{align*}
    &\min_{w_1, \ldots, w_m \in \mathbb{R}^d} \bigg\{ \inf_{\alpha \in \mathbb{R}^n, c \in \mathbb{R}} \frac{1}{2n} \|Y - K\alpha - c\mathds{1}_n\|_2^2 + \frac{\lambda}{2} \alpha^\top K\alpha + \frac{\lambda}{2m} \sum_{j=1}^m \|w_j\| \bigg\} \\
    &= \min_{w_1, \ldots, w_m \in \mathbb{R}^d} \bigg\{ \frac{\lambda}{2} \Tilde{Y}^\top \big( \Tilde{K} + \lambda n I \big)^{-1} \Tilde{Y} + \frac{\lambda}{2m} \sum_{j=1}^m \|w_j\| \bigg\},
\end{align*}
where $K = \frac{1}{m} \sum_{j=1}^m K^{(w_j)}$ is the final kernel matrix, $K^{(w)} \in \mathbb{R}^{n \times n}$ is the kernel matrix for kernel $k^{(B)}$ with projected data $(w^\top x_1, \ldots, w^\top x_n)$, $\Pi = I_n - \mathds{1}_n \mathds{1}_n^\top$ is the centring matrix, while $\Tilde{Y} = \Pi Y$ is the centred output, and $\Tilde{K} = \Pi K \Pi$ is the centred kernel matrix. We solve this using proximal gradient descent. For the continuous case, the problem is
\begin{align*}
    &\min_{\nu \in \mathcal{P}(\mathbb{R}^d)} \left( \inf_{\alpha \in \mathbb{R}^n, c \in \mathbb{R}} \frac{1}{2n} \|Y - K\alpha - c\mathds{1}_n\|_2^2 + \frac{\lambda}{2} \alpha^\top K\alpha + \frac{\lambda}{2} \int_{\mathbb{R}^d} \|w\| \, {\rm d}\nu(w) \right) \\
    &= \min_{\nu \in \mathcal{P}(\mathbb{R}^d)} \left( \frac{\lambda}{2} \Tilde{Y}^\top \big( \Tilde{K} + \lambda n I \big)^{-1} \Tilde{Y} + \frac{\lambda}{2} \int_{\mathbb{R}^d} \|w\| \, {\rm d}\nu(w) \right),
\end{align*}
where $K = \int_{\mathbb{R}^d} K^{(w)} \, {\rm d}\nu(w)$ and $\nu$ is a probability measure on $\mathbb{R}^d$. 

In both cases we minimise $F(\nu)$ (defined right below) over $\mathcal{P}(\mathbb{R}^d)$ for the continuous case and over $\mathcal{P}_m(\mathbb{R}^d)$, which is the set of probability distributions anchored at $m$ points on $\mathbb{R}^d$, in the $m$-particles case. $F$ is defined as
\begin{equation*}
    F(\nu) := Q\left(\int_{\mathbb{R}^d} \Psi(w) \, {\rm d}\nu(w)\right),
\end{equation*}
where $Q: \mathbb{R}^{n\times n} \times \mathbb{R} \to \mathbb{R}$, $Q(K, c^\prime) = \frac{\lambda}{2} \Tilde{Y}^\top \big(\Tilde{K} + \lambda n I \big)^{-1} \Tilde{Y} + \frac{\lambda}{2} c^\prime$, and $\Psi: \mathbb{R}^d \to \mathbb{R}^{n\times n} \times \mathbb{R}$, $\Psi(w) = (K^{(w)}, \|w\|)$. Note that $\Psi$ is indeed positively $1$-homogeneous, as the necessary condition  $\forall w \in \mathbb{R}^d, \forall \kappa > 0$, $\Psi(\kappa w) = \kappa \Psi(w)$ is verified. Moreover, $Q$ is convex in $\nu$, indicating the optimisation is well-posed (while we perform computations for the square loss, we would also obtain a convex function for any convex loss).

Our method employs proximal gradient descent instead of basic gradient descent, which is acceptable as both methods approximate the differential equation arising in the infinitesimal step-size limit. Gradient descent is an explicit method, whereas proximal gradient descent combines implicit and explicit updates \citep{Süli_Mayers_2003}. Moreover, it allows to deal efficiently with the non-smoothness of the sparsity-inducing penalties (no additional cost and improved convergence behaviour).

While our framework aligns with that of \citet{chizat_bach_optim}, we cannot directly apply their results due to the non-differentiability of $\Psi$ around zero, a common issue in such analyses. Despite this, our setup meets the crucial assumptions of convexity in $Q$ and the homogeneity of $\Psi$. See Experiment~1 in Section~\ref{sec:exp1} for a numerical evaluation of the practical significance of the homogeneity assumption. \red{Obtaining formal results despite the lack of differentiability remains an open problem.}

\section{Statistical Analysis}
\label{sec:stat_analysis}
In this section our objective is to obtain high-probability bounds on the expected risk of the \textsc{BKerNN} estimator to understand its generalisation capabilities. To achieve this, we bound the Gaussian complexity (a similar concept to the Rademacher complexity,) of the sets $\{ f \in \mathcal{F}_\infty \mid \max(f(0), \Omega_0(f)) \leq D \}$ for $D > 0$. Recall that $\mathcal{F}_\infty$ is defined in Definition~\ref{def:description_functions_f_infty}. We begin by introducing the  Gaussian complexity in Definition~\ref{def:gaussian_complexity}, followed by Lemma~\ref{lemma:rademacher_without_integral}, which is used to simplify the quantities for subsequent bounding. We then bound the Gaussian complexities using two distinct techniques in Sections~\ref{sec:dim_dependant} and~\ref{sec:dim_independant}. The first technique yields a dimension-dependent bound with better complexity in sample size, while the second provides a dimension-independent bound. Finally, in Section~\ref{sec:bound_true_risk}, we derive the high-probability bound on the expected risk of \textsc{BKerNN} with explicit rates for data with subgaussian square-rooted norm, using an extension of McDiarmid's inequality from \citet{meir_zhang}, before detailing the data-dependent quantities of the rates. All of these results require few assumptions on the problem, and on the data-generating mechanism in particular.

While our method resembles multiple kernel learning, the theoretical results from \textsc{MKL}, which are often related to Rademacher chaos \citep[e.g., ][]{lanckriet, rademacher_chaos} are not directly applicable. This is because, in our approach, the learned weights are multi-dimensional and embedded within the kernel, rather than being simple scalar weights used to combine predefined kernels. Thus, the unique structure of our model requires different theoretical considerations.

\subsection{Gaussian Complexity}
\label{sec:gaussian_complex}
Recall that the estimator \textsc{BKerNN} is defined as
\begin{equation*}
 \hat{f}_\lambda := \argmin_{f \in \mathcal{F}} \frac{1}{n} \sum_{i=1}^n \ell(y_i, f(x_i)) + \lambda \Omega(f)
\end{equation*}
where $\mathcal{F}$ is $\mathcal{F}_m := \{f \mid f(x)= c + \frac{1}{m}\sum_{j=1}^m g_j(w_j^\top x), w_j \in \mathcal{S}^{d-1}, g_j \in \mathcal{H}, c \in \mathbb{R}\}$ in practice for optimisation and \red{$\mathcal{F}_\infty := \{f \mid f(x) = c + \int_{\mathcal{H} \times \mathcal{S}^{d-1}} g(w^\top x) \, {\rm d} \mu(g,w), \mu \in \mathcal{P}(\mathcal{H} \times \mathcal{S}^{d-1}), c \in \mathbb{R}, \red{\Omega_0(f) < \infty}\}$} for statistical analysis. Although we considered various penalties in Section~\ref{sec:other_penalties}, here we focus on $f \in \mathcal{F}_\infty$ with $\Omega(f) = \max(\Omega_0(f), c)$, where $\Omega_0(f)$ was defined as
\begin{equation*}
    \Omega_0(f) = \inf_{\red{\mu \in \mathcal{P}(\mathcal{H}\times \mathcal{S}^{d-1})}} \red{\int_{\mathcal{H}\times \mathcal{S}^{d-1}} \|g\|_{\mathcal{H}} \, {\rm d}\mu(g,w),}
\end{equation*}
such that $f(\cdot) = c + \red{\int_{\mathcal{H}\times \mathcal{S}^{d-1}} g(w^\top \cdot){\rm d}\mu(g,w)}$ and corresponds to the basic penalty for  $\Omega_{\rm weights}$. This is made possible through a well-defined mean-field limit; we leave the other penalties for future work.

We now introduce the concept of Gaussian complexity \citep[for more details, see][]{Bartlett}.
\begin{definition}[Gaussian Complexity]
\label{def:gaussian_complexity}
The Gaussian complexity of a set of functions $\mathcal{G}$ is defined as
\begin{equation*}
     G_n(\mathcal{G}) := \mathbb{E}_{\varepsilon, \mathcal{D}_n} \bigg( \sup_{f \in \mathcal{G}} \frac{1}{n}\sum_{i=1}^n \varepsilon_i f(x_i) \bigg),
\end{equation*} 
with $\varepsilon$ a centred Gaussian vector with identity covariance matrix, and $\mathcal{D}_n := (x_1, \ldots, x_n)$ is the data set consisting of i.i.d.~samples drawn from the distribution of the random variable~$X$. Note that it only contains the covariates, not the response.
\end{definition}

We aim to bound $ G_n(\{ f \in \mathcal{F}_\infty \mid \Omega(f) \leq D \}) $ for some $ D > 0 $. The discussion on the Gaussian complexity of the space $\mathcal{F}_\infty$ would yield the same bounds if $\mathcal{F}_m$ were considered instead. However, since we demonstrated  in Section~\ref{sec:optim_guarantees} that optimisation in $\mathcal{F}_m$ and optimisation in $\mathcal{F}_\infty$ are closely related, we focus exclusively on $\mathcal{F}_\infty$ in this section.

First, we note that we can study the Gaussian complexity of a simpler class of functions, as indicated by the following lemma, which allows us to deal with the constant and remove the integral present in the definition of $\mathcal{F}_\infty$. 
\begin{lemma}[Simplification of Gaussian Complexity]
\label{lemma:rademacher_without_integral}
Let $ D > 0 $. Then,
\begin{equation*}
    G_n(\{ f \in \mathcal{F}_\infty \mid \Omega(f) \leq D \}) \leq D\left(\frac{1}{\sqrt{n}} + G_n\right),
\end{equation*}
with $ G_n := G_n(\{ f \mid f(\cdot) = g(w^\top \cdot), \|g\|_\mathcal{H} \leq 1, w \in \mathcal{S}^{d-1} \}) $.
\end{lemma}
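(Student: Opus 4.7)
The plan is to decompose any $f \in \mathcal{F}_\infty$ with $\Omega(f) \leq D$ as $f = c + \tilde f$, where $\tilde f(x) = \int_{\mathcal{S}^{d-1}} g_w(w^\top x)\,{\rm d}\mu(w)$. Because every $g_w \in \mathcal{H}$ satisfies $g_w(0)=0$, we have $f(0)=c$, so the constraint $\Omega(f) = \max(|c|,\Omega_0(f)) \leq D$ factorises into two independent constraints $|c| \leq D$ and $\Omega_0(\tilde f) \leq D$. Inside the definition of $G_n$, the quantity $\frac{1}{n}\sum_i \varepsilon_i f(x_i)$ is linear in $f$, so the supremum splits cleanly as a sum of two suprema, one over the constant part and one over $\tilde f$.

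For the constant part, I bound
\[
\mathbb{E}\Bigl[\sup_{|c|\leq D} c\cdot\tfrac{1}{n}\sum_{i=1}^n \varepsilon_i\Bigr] \;=\; D\cdot \mathbb{E}\Bigl|\tfrac{1}{n}\sum_{i=1}^n \varepsilon_i\Bigr| \;\leq\; D\sqrt{\mathbb{E}\bigl[\bigl(\tfrac{1}{n}\sum_{i=1}^n \varepsilon_i\bigr)^2\bigr]} \;=\; \frac{D}{\sqrt{n}},
\]
using Jensen/Cauchy--Schwarz and the fact that $\varepsilon$ is a standard Gaussian vector.

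The non-constant part is the substantive step, and it is where the positive homogeneity of the construction is exploited. Fix any admissible $\tilde f$ and a near-optimal decomposition $(\mu,(g_w))$ achieving $\int \|g_w\|_\mathcal{H}\,{\rm d}\mu \leq D + \epsilon$. Set $a_w := \|g_w\|_\mathcal{H}$ and $\tilde g_w := g_w/a_w$ (the unit-norm normalisation, defined $\mu$-a.e.\ where $a_w > 0$); then each $h_w(\cdot) := \tilde g_w(w^\top \cdot)$ lies in the class $\mathcal{G}_1 := \{g(w^\top\cdot) : \|g\|_\mathcal{H} \leq 1,\; w\in\mathcal{S}^{d-1}\}$ whose Gaussian complexity is $G_n$. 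Writing $\tilde f = \int a_w h_w\,{\rm d}\mu(w)$, Fubini gives
\[
\tfrac{1}{n}\sum_{i=1}^n \varepsilon_i \tilde f(x_i) \;=\; \int a_w \Bigl[\tfrac{1}{n}\sum_{i=1}^n \varepsilon_i h_w(x_i)\Bigr] {\rm d}\mu(w) \;\leq\; \Bigl(\int a_w\,{\rm d}\mu(w)\Bigr) \cdot \sup_{h\in\mathcal{G}_1} \tfrac{1}{n}\sum_{i=1}^n \varepsilon_i h(x_i),
\]
where the inner supremum is nonnegative since $0 \in \mathcal{G}_1$. Bounding the mass integral by $D+\epsilon$, taking expectation over $\varepsilon$ and $\mathcal{D}_n$, and letting $\epsilon \to 0$ yields the contribution $D \cdot G_n$. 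Summing the two bounds gives the claimed inequality.

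The only mildly delicate point, but essentially routine, is handling the infimum defining $\Omega_0$: one works with a decomposition that is $\epsilon$-close to optimal and passes $\epsilon \to 0$ at the end, and checks that $w \mapsto \frac{1}{n}\sum_i \varepsilon_i \tilde g_w(w^\top x_i)$ is measurable so that Fubini applies. The conceptual core is simply recognising $\{\tilde f : \Omega_0(\tilde f)\leq D\}$ as (contained in) the closed convex hull of positive scalar multiples $a\cdot h$ with $a \geq 0$ and $h \in \mathcal{G}_1$, subject to $\int a\,{\rm d}\mu \leq D$, at which point positive homogeneity transports the supremum directly onto $\mathcal{G}_1$.
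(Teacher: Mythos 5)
Your proof is correct and follows essentially the same two-step decomposition as the paper's: bound the constant term by $D/\sqrt{n}$ using $\mathbb{E}|\frac{1}{n}\sum_i\varepsilon_i|\leq n^{-1/2}$, then exploit positive homogeneity to extract the factor $D$ from the integral part. The one genuine difference is cosmetic: the paper works inside the RKHS, writes each evaluation as $\langle g_w, k^{(B)}_{w^\top x_i}\rangle$, and proceeds through a chain of equalities culminating in the observation that the supremum over $(\mu,(g_w))$ subject to $\int\|g_w\|_{\mathcal{H}}\,{\rm d}\mu\leq D$ is attained by concentrating the measure at a single Dirac (which requires the explicit closed form $\sup_{\|g\|\leq 1}\langle g,S_w\rangle = \|S_w\|_{\mathcal{H}}$); you instead normalise each $g_w$ to unit norm, apply Fubini, and push an inequality through the supremum over the unit class $\mathcal{G}_1$, correctly noting that $0\in\mathcal{G}_1$ makes that supremum nonnegative, so multiplying by the mass $\int a_w\,{\rm d}\mu\leq D+\epsilon$ goes the right way. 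Your version is a touch more elementary (no explicit reproducing-property manipulations) and makes the $\epsilon$-approximate-infimum and Fubini/measurability issues explicit, which the paper brushes past; the paper's version buys tighter equalities at each step, but since the final statement is only an upper bound, nothing is lost either way.
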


The proof can be found in Appendix~\ref{proof:lemma:rademacher_without_integral}. We now need to bound $ G_n $, which we approach in two different ways. First, in Section~\ref{sec:dim_dependant}, we use covering balls on the sphere $\mathcal{S}^{d-1}$, resulting in a dimension-dependent bound. Then, in Section~\ref{sec:dim_independant}, we approximate functions in $\mathcal{F}_\infty$ by Lipschitz functions, before using a covering argument, leading to a dimension-independent bound at the cost of worst dependency in the sample size $ n $. With these bounds on $ G_n $, we will derive results on the expected risk of the \textsc{BKerNN} estimator, providing explicit rates depending on the upper bounds of $ G_n $, without exponential dependence on dimension.

\subsubsection{Dimension-Dependent Bound}
\label{sec:dim_dependant}
First, we note that the supremum over the functions $ g $ with $\|g\|_\mathcal{H} \leq 1$ can be obtained in closed-form (see Lemma~\ref{lemma:best_g} in Appendix~\ref{sec:lemma_best_g}). This reduces the problem to considering the expectation of a supremum over the sphere, which we address using a covering of $\mathcal{S}^{d-1}$.

\begin{theorem}[Dimension-Dependent Bound] 
\label{theo:dim}
We have
\begin{equation*}
G_n \leq 8\sqrt{\frac{d}{n}} \sqrt{\log(n+1)} \sqrt{ \mathbb{E}_X \|X\|^*},
\end{equation*}
where $\|\cdot \|^*$ is the dual norm of $\|\cdot \|$. Recall that $\|\cdot\|$ defines the sphere $\mathcal{S}^{d-1}$.
\end{theorem}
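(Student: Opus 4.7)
The plan is to follow the two-step program hinted at by the section: first eliminate the supremum over $g$ via Lemma~\ref{lemma:best_g}, then control the resulting supremum over $w \in \mathcal{S}^{d-1}$ by a covering argument with an $\epsilon$-net on the unit sphere.

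First I would apply Lemma~\ref{lemma:best_g} together with the reproducing property $g(a) = \langle g, k^{(B)}_a\rangle_\mathcal{H}$, so that
\[
n G_n \;=\; \mathbb{E}_{\mathcal{D}_n,\varepsilon}\sup_{w\in\mathcal{S}^{d-1}} \Bigl\|\sum_{i=1}^n\varepsilon_i k^{(B)}_{w^\top x_i}\Bigr\|_{\mathcal{H}} \;=\; \mathbb{E}_{\mathcal{D}_n,\varepsilon}\sup_{w\in\mathcal{S}^{d-1}} \sqrt{\varepsilon^\top K^{(w)}\varepsilon}.
\]
Denote $h_w := \sum_i \varepsilon_i k^{(B)}_{w^\top x_i}$. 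The key upper-bound ingredients to carry along are the trace bound $\mathrm{tr}(K^{(w)}) = \sum_i |w^\top x_i| \leq \sum_i \|x_i\|^*$ (using $\|w\|=1$ and Hölder) and the isometry-type identity $\|k^{(B)}_a-k^{(B)}_b\|_\mathcal{H}^2 = |a-b|$.

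Next I would fix an $\epsilon$-net $\mathcal{N}$ of $\mathcal{S}^{d-1}$ (for the norm $\|\cdot\|$) of cardinality at most $(3/\epsilon)^d$, and for each $w$ let $\pi(w)$ be its closest point in $\mathcal{N}$. Then
\[
\sup_{w} \|h_w\|_\mathcal{H} \;\leq\; \max_{w'\in\mathcal{N}} \|h_{w'}\|_\mathcal{H} \;+\; \sup_{w} \|h_w - h_{\pi(w)}\|_\mathcal{H}.
\]
For the first term, $\|h_{w'}\|_\mathcal{H} = \|(K^{(w')})^{1/2}\varepsilon\|_2$ is $\sqrt{\lambda_{\max}(K^{(w')})}$-Lipschitz in $\varepsilon$, hence sub-Gaussian about its mean with this parameter, and $\lambda_{\max}(K^{(w')}) \leq \mathrm{tr}(K^{(w')}) \leq \sum_i \|x_i\|^*$. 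Combining the standard maximal inequality for sub-Gaussians with Jensen's inequality $\mathbb{E}_\varepsilon\|h_{w'}\|_\mathcal{H} \leq \sqrt{\mathrm{tr}(K^{(w')})}$ gives
\[
\mathbb{E}_\varepsilon \max_{w'\in\mathcal{N}} \|h_{w'}\|_\mathcal{H} \;\leq\; \bigl(1+\sqrt{2 d \log(3/\epsilon)}\bigr)\sqrt{\textstyle\sum_i \|x_i\|^*}.
\]

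For the remainder term, the triangle inequality in $\mathcal{H}$ together with Cauchy--Schwarz in $\mathbb{R}^n$ yields, uniformly over $\|w-w'\|\leq\epsilon$,
\[
\|h_w - h_{w'}\|_\mathcal{H} \;\leq\; \|\varepsilon\|_2 \sqrt{\textstyle\sum_i |(w-w')^\top x_i|} \;\leq\; \|\varepsilon\|_2 \sqrt{\epsilon \textstyle\sum_i \|x_i\|^*},
\]
using $\|k^{(B)}_a-k^{(B)}_b\|_\mathcal{H}^2 = |a-b|$ and Hölder's inequality. Taking $\mathbb{E}_\varepsilon\|\varepsilon\|_2 \leq \sqrt{n}$ and then $\mathbb{E}_{\mathcal{D}_n}\sqrt{\sum_i \|x_i\|^*} \leq \sqrt{n \mathbb{E}\|X\|^*}$ by Jensen's, I would assemble the two contributions:
\[
n G_n \;\leq\; \sqrt{n\mathbb{E}\|X\|^*}\Bigl(1+\sqrt{2 d \log(3/\epsilon)}+\sqrt{n\epsilon}\Bigr).
\]
Choosing $\epsilon = d/n$ balances the entropy term against the remainder and, after absorbing constants and using $d \geq 1$, yields the stated bound $G_n \leq 8\sqrt{d/n}\sqrt{\log(n+1)}\sqrt{\mathbb{E}\|X\|^*}$.

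The main obstacle is the remainder step: one needs a continuity bound on $w\mapsto \|h_w\|_\mathcal{H}$ that is uniform over a ball around $\pi(w)$, and would a priori only give an annoying $\sqrt{\epsilon}$-rate coming from the $1/2$-Hölder nature of the Brownian kernel. The trick is that the double use of Cauchy--Schwarz (in $\mathcal{H}$ and then in $\mathbb{R}^n$) produces a uniform upper bound depending on $w, w'$ only through $\|w-w'\|$, so no further chaining is required and the $\sqrt{\epsilon}$ loss is exactly cancelled by the choice $\epsilon = d/n$. Tracking constants carefully (in particular bounding $\log(3n/d)$ by $\log(n+1)$ up to a universal factor) gives the displayed constant of~$8$.
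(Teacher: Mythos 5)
Your proof is correct and follows the same overall strategy as the paper (reduce via Lemma~\ref{lemma:best_g}, cover $\mathcal{S}^{d-1}$ with an $\epsilon$-net, bound the maximum over the net and the discretization error separately, then balance by choosing $\epsilon \asymp d/n$), but the concentration machinery you use at both steps is genuinely different. For the maximum over the net, the paper applies a Chernoff-type moment-generating-function bound directly to the generalised $\chi^2$ variable $\varepsilon^\top K^{(\tilde w)}\varepsilon$ (diagonalising $K^{(\tilde w)}$, computing $\mathbb{E}\,e^{t\varepsilon^\top K^{(\tilde w)}\varepsilon}$ and tuning $t$); you instead observe that $\varepsilon \mapsto \|(K^{(w')})^{1/2}\varepsilon\|_2$ is $\sqrt{\lambda_{\max}}$-Lipschitz and invoke Gaussian concentration (Borell--TIS) plus the standard sub-Gaussian maximal inequality, which is cleaner and avoids any eigenvalue computation. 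For the discretization error, the paper bounds $\|h_w - h_{\tilde w}\|_\mathcal{H}$ by the triangle inequality and then $\sum_i|\varepsilon_i|\sqrt{\|x_i\|^*}$ with $\mathbb{E}|\varepsilon_i|\le 1$, whereas you use the Gram-matrix estimate $\|\sum_i\varepsilon_i v_i\|_\mathcal{H}\le\|\varepsilon\|_2\sqrt{\sum_i\|v_i\|_\mathcal{H}^2}$ followed by $\mathbb{E}\|\varepsilon\|_2\le\sqrt{n}$; both yield the same $\sqrt{n\epsilon\sum_i\|x_i\|^*}$ term. The two routes give the same rate with comparable constants; your approach is arguably more modular and relies on textbook Gaussian concentration rather than an explicit MGF calculation, at the (mild) cost of requiring the Borell--TIS inequality.
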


The bound on the Gaussian complexity obtained here is dimension-dependent due to the covering of the unit ball in $\mathbb{R}^d$, but it has a favourable dependency on the sample size. For ease of exposition, we have replaced the original factor $\sqrt{\log(1+ n/(2d)) +1/(2d)} + 1$ with~$8\sqrt{\log(n+1)}$. Recall that $\|\cdot\|^*=\|\cdot\|_2$ for the $\ell_2$ sphere and $\|\cdot\|^*=\|\cdot\|_\infty$ for the $\ell_1$ sphere.  Note that the dependency on the data distribution is explicit and can be easily bounded in different data-generating mechanisms, as discussed in Lemma~\ref{lemma:data_dependent} at the end of Section~\ref{sec:stat_analysis}.
\begin{proof}[Proof of Theorem~\ref{theo:dim}]
\red{First, recall the definition of $G_n$
\begin{equation*}
    G_n = \mathbb{E}_{\varepsilon, \mathcal{D}_n} \bigg( \sup_{f  = g(w^\top \cdot), \|g\|_\mathcal{H} \leq 1, w \in \mathcal{S}^{d -1} } \frac{1}{n}\sum_{i=1}^n \varepsilon_i f(x_i) \bigg),
\end{equation*}
which we can rewrite as
\begin{equation*}
    G_n = \mathbb{E}_{\varepsilon, \mathcal{D}_n} \bigg( \sup_{ \|g\|_\mathcal{H} \leq 1, w \in \mathcal{S}^{d -1} } \frac{1}{n}\sum_{i=1}^n \varepsilon_i g(w^\top x_i) \bigg),
\end{equation*}
and finally by splitting the supremum as everything is finite
\begin{equation*}
    G_n = \mathbb{E}_{\varepsilon, \mathcal{D}_n} \bigg( \sup_{ w \in \mathcal{S}^{d -1} }  \sup_{\|g\|_\mathcal{H} \leq 1}\frac{1}{n}\sum_{i=1}^n \varepsilon_i g(w^\top x_i) \bigg).
\end{equation*}
}
Using Lemma~\ref{lemma:best_g}, we then have
\begin{equation*}
    G_n = \mathbb{E}_{\varepsilon, \mathcal{D}_n} \left( \sup_{w \in \mathcal{S}^{d-1}}\frac{\sqrt{\varepsilon^\top K^{(w)} \varepsilon}}{n}\right),
\end{equation*}
where $ K^{(w)} $ is the kernel matrix for the Brownian kernel with data $(w^\top x_1, \ldots, w^\top x_n)$.

We bound the supremum inside of the expectation using covering balls. Let $ M \in \mathbb{N}^* $ and $ \mathcal{W}^M $ be such that $ \forall w \in \mathcal{S}^{d-1}, \exists \Tilde{w} \in \mathcal{W}^M  \subset \mathcal{S}^{d-1}$  such that $ \|w - \tilde{w}\| \leq \zeta $, i.e., we have a $\zeta$-covering of the sphere with its own norm in $ d $ dimensions, \red{with $M$ the covering number}. Fix $ w \in \mathcal{S}^{d-1} $ and $ \Tilde{w} $ such that $ \|w-\Tilde{w}\| \leq \zeta $. We then have
\begin{align*}
    | \sqrt{\varepsilon^\top K^{(w)} \varepsilon} - \sqrt{\varepsilon^\top K^{(\tilde{w})} \varepsilon} | &= \left| \bigg\|\sum_{i=1}^n \varepsilon_i k_{w^\top x_i} \bigg\|_\mathcal{H} - \bigg\|\sum_{i=1}^n \varepsilon_i k_{\tilde{w}^\top x_i} \bigg\|_\mathcal{H} \right| \\
    &\leq \bigg\|\sum_{i=1}^n \varepsilon_i (k_{w^\top x_i} - k_{\Tilde{w}^\top x_i}) \bigg\|_\mathcal{H}     \leq \sum_{i=1}^n |\varepsilon_i| \cdot \|k_{w^\top x_i} - k_{\Tilde{w}^\top x_i} \|_\mathcal{H} \\
    &= \sum_{i=1}^n |\varepsilon_i|  \sqrt{|w^\top x_i - \Tilde{w}^\top x_i|} \leq \sum_{i=1}^n |\varepsilon_i|  \sqrt{\|w - \Tilde{w}\|\| x_i\|^*} \\
    &\leq \sqrt{\|w - \Tilde{w}\|} \sum_{i=1}^n |\varepsilon_i| \sqrt{\|x_i\|^*} \leq  \zeta^{1/2} \sum_{i=1}^n |\varepsilon_i| \sqrt{\|x_i\|^*}.
\end{align*}
Next, we get
$$
    \sqrt{\varepsilon^\top K^{(w)} \varepsilon} = \sqrt{\varepsilon^\top K^{(\tilde{w})} \varepsilon} + \sqrt{\varepsilon^\top K^{(w)} \varepsilon} - \sqrt{\varepsilon^\top K^{(\tilde{w})} \varepsilon}  
    \leq  \sqrt{\varepsilon^\top K^{(\tilde{w})} \varepsilon} + \zeta^{1/2} \sum_{i=1}^n |\varepsilon_i| \sqrt{\|x_i\|^*}.
$$
Taking the supremum and dividing by the sample size $ n $,
\begin{equation}
\label{eq:2sup}
   \sup_{w \in \mathcal{S}^{d-1}} \frac{\sqrt{\varepsilon^\top K^{(w)} \varepsilon}}{n} \leq \sup_{\tilde{w} \in \mathcal{W}^M } \frac{\sqrt{\varepsilon^\top K^{(\tilde{w})} \varepsilon}}{n} + \zeta^{1/2} \sum_{i=1}^n |\varepsilon_i| \sqrt{\|x_i\|^*}.
\end{equation}
Considering the expectation over $ \varepsilon $ of Equation~\eqref{eq:2sup}, we get
\begin{equation*}
  \mathbb{E}_\varepsilon \left( \sup_{w \in \mathcal{S}^{d-1}} \frac{\sqrt{\varepsilon^\top K^{(w)} \varepsilon}}{n}\right) \leq \mathbb{E}_\varepsilon \left( \sup_{\tilde{w} \in \mathcal{W}^M } \frac{\sqrt{\varepsilon^\top K^{(\tilde{w})} \varepsilon}}{n} \right) + \zeta^{1/2} \mathbb{E}_\varepsilon \left( \frac{1}{n} \sum_{i=1}^n |\varepsilon_i| \sqrt{\|x_i\|^*}\right).
\end{equation*}

We now handle $ \mathbb{E}_\varepsilon \left( \sup_{\tilde{w} \in \mathcal{W}^M } \frac{\sqrt{\varepsilon^\top K^{(\tilde{w})}\varepsilon}}{n}\right) $ using standard concentration tools for supremum of infinitely many random variables. Consider $ t>0 $, then
\begin{align*}
   \mathbb{E}_\varepsilon \left( \sup_{\tilde{w} \in \mathcal{W}^M } \sqrt{\varepsilon^\top  K^{(\tilde{w})}\varepsilon}\right) 
   & \leq  \sqrt{\mathbb{E}_\varepsilon \left( \sup_{\tilde{w} \in \mathcal{W}^M } \varepsilon^\top  K^{(\tilde{w})}\varepsilon \right)} \\
   & \leq \sqrt{\frac{1}{t} \log\bigg(\mathbb{E}_\varepsilon \bigg( e^{t \sup_{\tilde{w} \in \mathcal{W}^M }\varepsilon^\top  K^{(\tilde{w})}\varepsilon}\bigg)\bigg)} \\
   & = \sqrt{\frac{1}{t} \log\left(\mathbb{E}_\varepsilon \left( \sup_{\tilde{w} \in \mathcal{W}^M } e^{t  \varepsilon^\top  K^{(\tilde{w})}\varepsilon}\right)\right)} \\
   & \leq \sqrt{\frac{1}{t} \log\bigg(\mathbb{E}_\varepsilon \bigg( \sum_{\tilde{w} \in \mathcal{W}^M } e^{t \varepsilon^\top  K^{(\tilde{w})}\varepsilon}\bigg)\bigg)} \\
   & =  \sqrt{\frac{1}{t} \log\bigg( \sum_{\tilde{w} \in \mathcal{W}^M } \mathbb{E}_\varepsilon  \left( e^{t \varepsilon^\top K^{(\tilde{w})}\varepsilon}\right)\bigg)}.
\end{align*}
Fix $ \Tilde{w} \in \mathcal{W}^M $ and consider $ \mathbb{E}_\varepsilon \left( e^{t  \varepsilon^\top  K^{(\tilde{w})}\varepsilon}\right) $. Diagonalising $ K^{(\tilde{w})} $ to $ U_{\Tilde{w}}D_{\tilde{w}}U_{\Tilde{w}}^\top $, we have that $ U_{\Tilde{w}}^\top \varepsilon $ is still a Gaussian vector with identity covariance matrix. When $ t $ is small enough, i.e., $ \forall i \in [n], 2t (D_{\Tilde{w}})_i < 1 $, or $ t < \frac{1}{2\max_i (D_{\Tilde{w}})_i} $,
\begin{align*} 
\mathbb{E}_\varepsilon \left( e^{t  \varepsilon^\top K^{(\tilde{w})}\varepsilon }\right) &= \mathbb{E}_\varepsilon \left( e^{t  \sum_{i=1}^n (D_{\tilde{w}})_i\varepsilon_i^2} \right) = \prod_{i=1}^n \mathbb{E}_\varepsilon( e^{t (D_{\Tilde{w}})_i \varepsilon_i^2}) \\
&= \prod_{i=1}^n \int_{\mathbb{R}} \frac{1}{\sqrt{2\pi}}e^{(t (D_{\Tilde{w}})_i -1/2) \varepsilon_i^2 } \, {\rm d}\varepsilon_i \\
& = \prod_{i=1}^n \int_{\mathbb{R}} \frac{1}{\sqrt{2\pi}}e^{(2t (D_{\Tilde{w}})_i -1) \frac{\varepsilon_i^2}{2} } \, {\rm d}\varepsilon_i = \prod_{i=1}^n (1 - 2t(D_{\tilde{w}})_i)^{-1/2}.
\end{align*}
Re-injecting this, we obtain
\begin{align*}
    \log \left(\mathbb{E}_\varepsilon \left( e^{t  \varepsilon^\top  K^{(\tilde{w})}\varepsilon }\right)\right) 
    &= \log\left(\prod_{i=1}^n (1 - 2t(D_{\tilde{w}})_i)^{-1/2}\right) \\
    & \leq \frac{-1}{2}\sum_{i=1}^n \log(1 - 2t(D_{\tilde{w}})_i).
\end{align*}
To bound this further, take $ t \leq \frac{1}{4\max((D_{\tilde{w}})_i)} $, which implies both $ 2t(D_{\tilde{w}})_i <1/2 $ and $ -\log(1 - 2t(D_{\tilde{w}})_i) \leq 4t(D_{\tilde{w}})_i $, leading to
\begin{align*}
    \log \left(\mathbb{E}_\varepsilon \left( e^{t  \varepsilon^\top  K^{(\tilde{w})}\varepsilon }\right)\right) &\leq 2t\sum_{i=1}^n (D_{\tilde{w}})_i  \leq 2 t \trace(K^{(\tilde{w})}) \leq 2 t \sum_{i=1}^n \|x_i\|^*.
\end{align*}
Taking $ t \leq \min_{\tilde{w} \in \mathcal{W}^M } \frac{1}{4\max((D_{\tilde{w}})_i)} $, we obtain
\begin{align*}
     \mathbb{E}_\varepsilon \left( \sup_{\tilde{w} \in \mathcal{W}^M } \frac{\sqrt{\varepsilon^\top K^{(\tilde{w})}\varepsilon}}{n} \right)
   &\leq \frac{1}{n} \sqrt{\frac{1}{t} \log \left(M e^{2t \sum_{i=1}^n \|x_i\|^*}\right)} \\
   & \leq \frac{1}{n} \sqrt{\frac{1}{t} \left( \log M  + 2t \sum_{i=1}^n \|x_i\|^*\right)}.
\end{align*}
\red{ Let us first check that  $ t=\frac{1}{4\sum_{i=1}^n \|x_i\|^*} $, fulfils the previously required conditions. This is equivalent to showing that we have for any $\tilde{w} \in \mathcal{W}^M, \max_{i \in [n]} (D_{\tilde{w}})_i \leq \sum_{i=1}^n \|x_i\|^*$. First, $\max_{i \in [n]} (D_{\tilde{w}})_i$ is the largest eigenvalue of $K^{(\tilde{w})}$, hence it is smaller than the trace of $K^{(\tilde{w})}$. We also know that this trace is equal to $\sum_{i=1}^n |\tilde{w}^\top x_i| $. Now since $\|\tilde{w}\| =1$, this means that $\sum_{i=1}^n |\tilde{w}^\top x_i|  \leq \sum_{i=1}^n \|x_i\|^*$. Thus by taking this $t$, we get}
\begin{equation*}
     \mathbb{E}_\varepsilon \left( \sup_{\tilde{w} \in \mathcal{W}^M } \frac{\sqrt{\varepsilon^\top  K^{(\tilde{w})}\varepsilon}}{n} \right) \leq \frac{1}{\sqrt{n}}\sqrt{\frac{\sum_{i=1}^n \|x_i\|^*}{n}}\sqrt{4\log M + 2}.
\end{equation*}
In the end, we obtain
\begin{align*}
     \mathbb{E}_\varepsilon \left( \sup_{w \in \mathcal{S}^{d-1}} \frac{\sqrt{\varepsilon^\top K^{(w)}\varepsilon}}{n} \right) &\leq \frac{1}{\sqrt{n}}\sqrt{\frac{\sum_{i=1}^n \|x_i\|^*}{n}}\sqrt{4\log M + 2} + \zeta^{1/2} \sqrt{\frac{\sum_{i=1}^n \|x_i\|^*}{n}},
\end{align*}
where we have used $ \mathbb{E}_\varepsilon |\varepsilon_i| \leq \sqrt{\mathbb{E}_\varepsilon (\varepsilon_i)^2} = 1 $ and $ \frac{\sum_{i=1}^n \sqrt{\|x_i\|^*}}{n} \leq \sqrt{\frac{\sum_{i=1}^n \|x_i\|^*}{n}} $.

We know that $ M \leq (1+2/\zeta)^d $ \citep[Lemma 5.7]{Wainwright_2019}, yielding
\begin{align*}
  \mathbb{E}_\varepsilon \left( \sup_{w \in \mathcal{S}^{d-1}} \frac{\sqrt{\varepsilon^\top K^{(w)}\varepsilon}}{n} \right) &\leq \sqrt{\frac{\sum_{i=1}^n \|x_i\|^*}{n}}\left( \frac{\sqrt{4d\log(1+\frac{2}{\zeta}) +2}}{\sqrt{n}} + \zeta^{1/2} \right) \\
  &    \leq \sqrt{\frac{\sum_{i=1}^n \|x_i\|^*}{n}}\left( \frac{\sqrt{4d\log(1+\frac{n}{2d}) +2}}{\sqrt{n}} + \red{\sqrt{\frac{4d}{n}} }\right) \\
 & \leq 2\sqrt{\frac{\sum_{i=1}^n \|x_i\|^*}{n}}\frac{\sqrt{d}}{\sqrt{n}}\left( \sqrt{\log\left(1+\frac{n}{2d}\right) +\frac{1}{2d}} + 1 \right) \\
 & \leq  4\sqrt{\frac{\sum_{i=1}^n \|x_i\|^*}{n}}\frac{\sqrt{d}}{\sqrt{n}}\left( \sqrt{\log\left(1+\frac{n}{2d}\right)} + 1 \right) \\
 & \leq  8\sqrt{\frac{\sum_{i=1}^n \|x_i\|^*}{n}}\frac{\sqrt{d}}{\sqrt{n}}\sqrt{\log (n+1)},
\end{align*}
where to get the second line, we took $ \zeta = 4d/n $. By taking the expectation over the data set~$ \mathcal{D}_n $, since $ \mathbb{E}_{\mathcal{D}_n}\big( \sqrt{n^{-1}\sum_{i=1}^n \|x_i\|^*} \big) \leq \sqrt{\mathbb{E}(\|X\|^*)} $, we have the desired result.
\end{proof}
\vspace{0.0em}
\subsubsection{Dimension-Independent Bound}
\label{sec:dim_independant}
We now bound the Gaussian complexity with a quantity that does not explicitly depend on the dimension of the data. Recall that we aim to bound
\begin{align*}
    G_n = \mathbb{E}_{\varepsilon, \mathcal{D}_n} \left( \sup_{\|g\|_\mathcal{H} \leq 1, w \in \mathcal{S}^{d-1}} \frac{1}{n} \sum_{i=1}^n \varepsilon_i g(w^\top x_i) \right),
\end{align*}

where $\varepsilon$ is a centred Gaussian vector with an identity covariance matrix. First, recall that the functions in $\mathcal{H}$ with norm bounded by 1 are not Lipschitz functions but are instead $1/2$-Hölder functions (Lemma~\ref{lemma:caracterisation_f_infty}). Specifically, let $g\in \mathcal{H}, \|g\|_\mathcal{H} \leq 1$, then for any $a,b \in \mathbb{R}$, we have $|g(a) - g(b)| \leq \|k_a -k_b\|_\mathcal{H} = \sqrt{|a-b|}$. 

An interesting result for a fixed 1-Lipschitz function $h$ is that we can apply the contraction principle \citep[Proposition 4.3]{francis_book} to the Rademacher complexity. Informally, this yields
\begin{equation*}
    \mathbb{E}_{\varepsilon} \left( \sup_{w \in \mathcal{S}^{d-1}} \frac{1}{n} \sum_{i=1}^n \varepsilon_i h(w^\top x_i) \right) \leq \mathbb{E}_\varepsilon \left( \sup_{w \in \mathcal{S}^{d-1}} \frac{1}{n} \sum_{i=1}^n \varepsilon_i w^\top x_i \right),
\end{equation*}
where exceptionally $\varepsilon$ is composed of independent Rademacher variables. The supremum in the second term can then be taken explicitly. We will make use of this idea by first approximating the functions in the unit ball of $\mathcal{H}$ with Lipschitz functions, before using Slepian's lemma \citep[Corollary 3.14]{talagrand} to obtain similar results on the Gaussian complexity.

\begin{lemma}[Lipschitz Approximation]
\label{lemma:delta_lip}
Let $g \in \mathcal{H}$ with $\|g\|_\mathcal{H} \leq 1$, and let $\zeta > 0$. There exists a $(1/\zeta)$-Lipschitz function $g_\zeta:\mathbb{R} \to \mathbb{R}$ with $g_\zeta(0) = 0$ such that $\|g - g_\zeta\|_\infty \leq \zeta$.
\end{lemma}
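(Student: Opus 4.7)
The plan is to take $g_\zeta$ to be the piecewise linear interpolation of $g$ on the uniform grid $\{k\Delta : k \in \mathbb{Z}\}$ with mesh $\Delta = \zeta^2$. Since $0$ is a grid point, $g_\zeta(0) = g(0) = 0$ is automatic.

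The Lipschitz estimate is straightforward: Cauchy-Schwarz applied to $g(a) - g(b) = \int_a^b g'$ yields the $1/2$-Hölder bound $|g(a) - g(b)| \leq \sqrt{|a-b|}\,\|g\|_{\mathcal{H}} \leq \sqrt{|a-b|}$. Hence the slope of $g_\zeta$ on each subinterval $[k\Delta, (k+1)\Delta]$ is at most $\sqrt{\Delta}/\Delta = 1/\zeta$, and $g_\zeta$ is globally $(1/\zeta)$-Lipschitz.

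The main obstacle is the sup-norm bound. A naive two-term triangle inequality $|g(x) - g(k\Delta)| + |g_\zeta(x) - g_\zeta(k\Delta)|$ yields only $2\sqrt{\Delta} = 2\zeta$, losing a critical factor of two that would be incompatible with the required Lipschitz constant. I would instead exploit the fact that linear interpolation is the unique affine function agreeing with $g$ at the two endpoints and represent the pointwise error as an $L^2$-pairing with $g'$. Writing $x = k\Delta + t$ with $t \in [0, \Delta]$, combining $g(x) - g(k\Delta) = \int_{k\Delta}^{x} g'$ and $g((k+1)\Delta) - g(k\Delta) = \int_{k\Delta}^{(k+1)\Delta} g'$ gives
\begin{equation*}
g(x) - g_\zeta(x) = \int_{k\Delta}^{(k+1)\Delta} \Big(\mathds{1}_{s \leq x} - \tfrac{t}{\Delta}\Big)\, g'(s)\,{\rm d}s.
\end{equation*}
Cauchy-Schwarz then bounds $|g(x) - g_\zeta(x)|^2$ by $\big(\int_{k\Delta}^{(k+1)\Delta} (\mathds{1}_{s \leq x} - t/\Delta)^2\,{\rm d}s\big)\cdot \int_{k\Delta}^{(k+1)\Delta} g'(s)^2\,{\rm d}s$. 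A direct splitting into $s \leq x$ and $s > x$ shows the first factor equals $t(\Delta - t)/\Delta$, whose maximum over $t \in [0, \Delta]$ is $\Delta/4$, while the second factor is at most $\|g\|_{\mathcal{H}}^2 \leq 1$. Therefore $|g(x) - g_\zeta(x)| \leq \sqrt{\Delta}/2 = \zeta/2 \leq \zeta$, which concludes the proof. An alternative route would be to mollify $g$ by convolution against a smooth bump of width $\zeta^2$, but the piecewise linear construction is cleaner and delivers the sharp constants directly.
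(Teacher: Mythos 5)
Your proof is correct, and it follows a genuinely different construction from the paper. The paper defines $g_\zeta$ by clipping the derivative: $g_\zeta(0)=0$ and $g_\zeta'(t) = \min(|g'(t)|, 1/\zeta)\sign(g'(t))$. The Lipschitz bound is then immediate, and the sup-norm bound $\|g-g_\zeta\|_\infty \leq \zeta$ comes from writing $g_\zeta(a)-g(a) = -\int_0^a \mathds{1}_{|g'|\geq 1/\zeta}(|g'|-1/\zeta)\sign(g')$, applying Cauchy--Schwarz to factor this as $\sqrt{\big|\{|g'|\geq 1/\zeta\}\big|}\cdot\sqrt{\int(|g'|-1/\zeta)^2}$, and using $\int (g')^2 \leq 1$ twice: once to bound the truncated-tail energy by $1$, and once (Chebyshev-style) to bound the level-set measure by $\zeta^2$.

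Your route instead uses piecewise linear interpolation on a grid of mesh $\Delta = \zeta^2$. The Lipschitz bound follows from the $1/2$-Hölder modulus, and the sup-norm bound from the exact pointwise error representation $g(x)-g_\zeta(x) = \int(\mathds{1}_{s\leq x}-t/\Delta)g'(s)\,{\rm d}s$ followed by Cauchy--Schwarz against the local $L^2$ energy on the cell. Your computation of the interpolation-kernel norm $t(\Delta-t)/\Delta \leq \Delta/4$ is correct and, incidentally, yields the slightly sharper constant $\|g-g_\zeta\|_\infty \leq \zeta/2$. Both arguments are clean; the paper's is entirely local in the derivative (no grid, no global structure), while yours is a standard finite-element error estimate exploiting that the $\mathcal{H}$-norm is exactly an $\dot H^1$ seminorm. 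Either would serve the downstream covering argument equally well. One minor remark: you correctly observe that a naive two-term triangle inequality would give $2\sqrt{\Delta}$, but note that even that crude bound would actually suffice for the lemma as stated (it gives $\|g-g_\zeta\|_\infty \leq 2\zeta$, and the statement only needs $O(\zeta)$ up to redefining the free parameter); the sharper $L^2$ argument is nonetheless the right way to get the clean constants.
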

The proof can be found in Appendix~\ref{proof:lemma:delta_lip}. This lemma indicates that we can approximate functions in the unit ball of the RKHS $\mathcal{H}$ up to any precision in the infinite norm by Lipschitz functions with a Lipschitz constant equal to the inverse of the precision.

\begin{theorem}[Dimension-Independent Bound] 
\label{theo:dim_indep}
If $\mathcal{S}^{d-1}$ is the $\ell_1$ or the $\ell_2$ sphere, then
\begin{equation*}
    G_n \leq \frac{\red{6}}{n^{1/6}}\left((\log 2d)^{1/4} \mathds{1}_{*=\infty} +\mathds{1}_{*=2}\right)\left(\mathbb{E}_{\mathcal{D}_n}\bigg(\max_{i \in [n]} \left(\|X_i\|^*\right)^2\bigg)\right)^{1/4}.
\end{equation*}
\end{theorem}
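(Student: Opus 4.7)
The plan is to bound $G_n$ by combining the Lipschitz approximation from Lemma~\ref{lemma:delta_lip} with a sup-norm covering of the resulting Lipschitz class and a Slepian-type comparison. Starting from the definition, I would first replace every $g\in\mathcal{H}$ with $\|g\|_\mathcal{H}\leq 1$ by a $(1/\zeta)$-Lipschitz $g_\zeta$ satisfying $g_\zeta(0)=0$ and $\|g-g_\zeta\|_\infty\leq\zeta$. The discrepancy term $\mathbb{E}_\varepsilon \sup_{g,w}\tfrac{1}{n}\sum_i \varepsilon_i(g-g_\zeta)(w^\top x_i)$ is controlled pointwise by $\zeta$, contributing at most $\zeta\sqrt{2/\pi}$ after integration, since $\mathbb{E}|\varepsilon_i|=\sqrt{2/\pi}$.

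Next, writing $R:=\max_i\|x_i\|^*$ so that $|w^\top x_i|\leq R$ for all $w\in\mathcal{S}^{d-1}$, I would build a sup-norm $\eta$-net of the class of $(1/\zeta)$-Lipschitz functions on $[-R,R]$ vanishing at $0$, of cardinality $M\leq \exp(CR/(\zeta\eta))$ by the standard Kolmogorov--Tikhomirov bound. Replacing each $g_\zeta$ by its nearest net element $g_k^*$ loses at most $\eta$ (again controlled via $\mathbb{E}|\varepsilon_i|$), so the remaining quantity to bound becomes $\mathbb{E}_\varepsilon \max_{k\leq M}\sup_{w}\tfrac{1}{n}\sum_i\varepsilon_i g_k^*(w^\top x_i)$.

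For each fixed $g_k^*$, the centered Gaussian process $X_w:=\sum_i\varepsilon_i g_k^*(w^\top x_i)$ has incremental variance $\sum_i(g_k^*(w^\top x_i)-g_k^*(w'^\top x_i))^2\leq\zeta^{-2}\sum_i((w-w')^\top x_i)^2$ by the Lipschitz property, which is exactly the incremental variance of the linear comparison process $Y_w:=\zeta^{-1}\sum_i z_i(w^\top x_i)$, with $z_i$ i.i.d.\ standard Gaussians. Slepian/Sudakov--Fernique then gives $\mathbb{E}_\varepsilon \sup_w X_w\leq\zeta^{-1}\mathbb{E}_z\|\sum_i z_i x_i\|^*$, which I would bound by $\sqrt{n}\,R$ in the $\ell_2$ case and by $R\sqrt{2n\log(2d)}$ in the $\ell_\infty$ case via the standard Gaussian maximal inequality over coordinates. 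To promote this to a bound on the maximum over the net, I apply Borell's Gaussian concentration inequality: $Z_k(\varepsilon):=\sup_w \sum_i \varepsilon_i g_k^*(w^\top x_i)$ is $\sqrt{n}R/\zeta$-Lipschitz in $\varepsilon$, so $Z_k-\mathbb{E}Z_k$ is sub-Gaussian with that parameter, yielding $\mathbb{E}\max_k Z_k\leq \max_k\mathbb{E}Z_k+\sqrt{n}R/\zeta\cdot\sqrt{2\log M}$ with $\log M\leq CR/(\zeta\eta)$.

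Combining the four contributions, dividing by $n$, and taking expectation over $\mathcal{D}_n$ via Jensen to replace $\mathbb{E}R$ and $\mathbb{E}R^{3/2}$ by $E_2^{1/2}$ and $E_2^{3/4}$ (where $E_2:=\mathbb{E}_{\mathcal{D}_n}\max_i\|X_i\|^{*2}$), I arrive at a bound of the form $G_n\leq \zeta+\eta+c_*\sqrt{E_2}/(\sqrt{n}\zeta)+C\,E_2^{3/4}/(\sqrt{n}\zeta^{3/2}\sqrt{\eta})$ with $c_*=\sqrt{2\log(2d)}$ for $*=\infty$ and $c_*=1$ for $*=2$. Finally, choosing $\zeta=\eta$ proportional to $E_2^{1/4}((\log(2d))^{1/4}\mathds{1}_{*=\infty}+\mathds{1}_{*=2})/n^{1/6}$ balances the first, second, and fourth terms at the announced $n^{-1/6}$ rate, while the Sudakov term decays faster, as $n^{-1/3}$. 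The main obstacle is exactly this two-scale optimisation: the concentration term scales as $\zeta^{-3/2}\eta^{-1/2}$ while the approximation errors are linear in $\zeta$ and $\eta$, so introducing two resolution parameters rather than one is essential to reach $n^{-1/6}$ instead of the naive $n^{-1/4}$ rate one would get from a single Lipschitz approximation combined only with a Sudakov--Fernique comparison.
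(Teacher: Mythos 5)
Your proposal is correct and reaches the stated $n^{-1/6}$ rate, but it takes a genuinely different route from the paper at the Gaussian-comparison step. Both you and the paper start from Lemma~\ref{lemma:delta_lip} to replace $g$ by a $(1/\zeta)$-Lipschitz approximant, and both then pass to a sup-norm net $\{h_1,\dots,h_M\}$ of that Lipschitz class, with $\log M\lesssim R/(\zeta\eta)$. The divergence is in how the joint supremum over $(h,w)$ is handled.

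The paper applies a single Slepian/Sudakov--Fernique comparison directly to the process indexed by the pair $(h,w)$, using the majorising process $Y_{h,w}=\frac{\sqrt2}{n}\sum_i\varepsilon_i w^\top x_i + \sum_j\mathds 1_{h=h_j}\tilde\varepsilon_j\sqrt{8\sum_i(\|x_i\|^*)^2/n^2}$, where the extra independent Gaussians $\tilde\varepsilon_j$ encode the discrete index $h$. The increment variance check is done once, for a generic pair $(h,w)$ versus $(\tilde h,\tilde w)$, and the two suprema then decouple at the level of $Y$. You instead factor the argument: for each \emph{fixed} $h=g_k^*$ you apply Sudakov--Fernique to the Gaussian process in $w$ (which your Lipschitz bound on the increments makes valid, giving $\mathbb{E}\sup_w\leq\zeta^{-1}\mathbb{E}\|\sum_iz_ix_i\|^*$), and then you promote to the max over $k$ by Borell's concentration inequality, noting that $Z_k=\sup_w\sum_i\varepsilon_i g_k^*(w^\top x_i)$ is $\sqrt n R/\zeta$-Lipschitz in $\varepsilon$. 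Your decomposition yields exactly the same two contributions (a norm term and a $\sqrt{\log M}$ term), so the bookkeeping is equivalent. The paper's version is slicker (one comparison, no explicit concentration step); yours is more modular and may be easier to adapt to other index sets.

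Two small remarks. First, your final balance introduces the $(\log 2d)^{1/4}$ factor into $\zeta,\eta$, but in your decomposition the $\log 2d$ only enters via the Sudakov term, which is subdominant at order $n^{-1/3}$; the leading $n^{-1/6}$ terms have no intrinsic dimension dependence. The paper's theorem also has this slack — it absorbs the $O(n^{-1/2}\sqrt{\log 2d})$ term into the $O(n^{-1/3})$ term by multiplying the latter by $\sqrt{\log 2d}$, so both proofs are (equally) loose here. Second, your heuristic that a single resolution parameter would give only $n^{-1/4}$ is not quite the mechanism in the paper: the paper also effectively uses two scales (it first optimises the Lipschitz scale $\zeta_1$ to convert $G_n\leq\zeta_1 + A/\zeta_1$ into $G_n\lesssim\sqrt A$, then covers at scale $\zeta_2$), so the $n^{-1/6}$ comes from the square root of the $n^{-1/3}$ Lipschitz-class complexity, which is morally the same two-scale structure you describe.
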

Recall that in the $\ell_1$ sphere case,   $\|\cdot \|^* = \|\cdot\|_\infty$, and in the $\ell_2$ case $\|\cdot\|^*=\|\cdot\|_2$. Here, we obtain a bound on the Gaussian complexity that depends only mildly on the data dimension~$d$, either not at all in the case of the $\ell_2$ sphere or logarithmically for the $\ell_1$ sphere. This means that the estimator \textsc{BKerNN} can be effectively used in high-dimensional settings, where the data dimension may be exponentially large relative to the sample size. This improved dependency on the dimension $d$ comes at the cost of a worse dependency on the sample size $n$ compared to Theorem~\ref{theo:dim}. Note also that there can be an implicit dependency on the dimension through the data distribution, which we discuss in Lemma~\ref{lemma:data_dependent} at the end of Section~\ref{sec:stat_analysis} under different data-generating mechanisms.

\begin{proof}[Proof of Theorem~\ref{theo:dim_indep}]
By applying Lemma~\ref{lemma:delta_lip}, we have for any $\zeta_1>0$
\begin{align*}
   \hat{G}_n &:= \mathbb{E}_\varepsilon \left( \sup_{w \in \mathcal{S}^{d-1}, \|g\|_\mathcal{H} \leq 1} \frac{1}{n} \sum_{i=1}^n \varepsilon_i g(w^\top x_i) \right) \\
   &= \mathbb{E}_\varepsilon \left( \sup_{w \in \mathcal{S}^{d-1}, \|g\|_\mathcal{H} \leq 1} \frac{1}{n} \sum_{i=1}^n \varepsilon_i \left( g_{\zeta_1}(w^\top x_i) + g(w^\top x_i) - g_{\zeta_1}(w^\top x_i)\right) \right) \\
      &\leq \mathbb{E}_\varepsilon  \red{\left( \sup_{w \in \mathcal{S}^{d-1}, \|g\|_\mathcal{H} \leq 1} \left( \frac{1}{n} \sum_{i=1}^n \varepsilon_i  g_{\zeta_1}(w^\top x_i)  + \|g - g_{\zeta_1}\|_\infty  \right) \right)}.
\end{align*}
We can then change the supremum over the unit ball of $\mathcal{H}$ to a supremum over Lipschitz functions
\begin{align*}
  \hat{G}_n  &\leq \mathbb{E}_\varepsilon \left( \sup_{w \in \mathcal{S}^{d-1}, \ g_{\zeta_1}  (1/{\zeta_1})-{ \rm Lip}, \ g_{\zeta_1}(0)=0} \frac{1}{n} \sum_{i=1}^n \varepsilon_i g_{\zeta_1}(w^\top x_i) \right) + \zeta_1 \\
   & =  \frac{1}{\zeta_1}\mathbb{E}_\varepsilon \left( \sup_{h \ 1-{ \rm Lip}, \ h(0)=0} \sup_{w \in \mathcal{S}^{d-1}}\frac{1}{n} \sum_{i=1}^n \varepsilon_i h(w^\top x_i) \right) + \zeta_1 \\
   & = \red{2}\sqrt{\mathbb{E}_\varepsilon \left( \sup_{h \ 1-{ \rm Lip}, \ h(0)=0} \sup_{w \in \mathcal{S}^{d-1}}\frac{1}{n} \sum_{i=1}^n \varepsilon_i h(w^\top x_i) \right)} ,
\end{align*}
by choosing the best $\zeta_1$. Technically, we can restrict ourselves to the following class of function: $\mathcal{F}_{1-{ \rm Lip}}:= \{h: [-\max_{i \in [n]} \|x_i\|^*, \max_{i \in [n]}\|x_i\|^*] \to \mathbb{R} \mid h(0)=0, h \ { \rm  is } \ 1-{ \rm Lipschitz} \}$. 

We then use a covering argument. To cover $\mathcal{F}_{1-{ \rm Lip}}$ up to precision $\zeta_2>0$ in $\| \cdot\|_\infty$ norm with $M$ functions from $\mathcal{F}_{1-{ \rm Lip}}$, one needs $M \leq \left(\frac{8\max_{i \in [n]} \|x_i\|^*}{\zeta_2} +1\right)2^{\frac{4\max_{i \in [n]}\|x_i\|^*}{\zeta_2}}$ \cite[Theorem 17]{covering_lip}. Let $h_1, \ldots h_M$ be such a covering. This yields that
\begin{align*}
\hat{G}_n &\leq  \red{2}\sqrt{\mathbb{E}_\varepsilon \left( \sup_{h \in \mathcal{F}_{1-{ \rm Lip}}} \sup_{w \in \mathcal{S}^{d-1}}\frac{1}{n} \sum_{i=1}^n \varepsilon_i h(w^\top x_i) \right)} \\
    & \leq \red{2} \sqrt{\mathbb{E}_\varepsilon \left( \sup_{h \in \{h_1, \ldots, h_M\} } \sup_{w \in \mathcal{S}^{d-1}} \frac{1}{n} \sum_{i} \varepsilon_i h(w^\top x_i) \right) + \zeta_2}, 
\end{align*}
by proceeding as with the covering of the unit ball of $\mathcal{H}$. 

\red{We then use Lemma~\ref{lemma:inspired_by_bartlett} to bound the expectation on the supremum of the finite set of Lipschitz functions, which is inspired by \citet{Bartlett}. This yields }

\begin{align}
\label{eq:after_slepian}
\mathbb{E}_\varepsilon & \bigg( \sup_{h \in \{h_1, \ldots, h_M\}, w \in \mathcal{S}^{d-1}} \frac{1}{n} \sum_{i=1}^n \varepsilon_i h(w^\top x_i) \bigg) \nonumber \\
& \red{ \leq  \mathbb{E}_\varepsilon \left(  \bigg\| \frac{\sqrt{2}}{n} \sum_{i=1}^n \varepsilon_i x_i \bigg\|^* + \sqrt{8 \frac{\sum_{i=1}^n (\|x_i\|^*)^2}{n^2}}\sqrt{2 \log M } \right).}
\end{align}

We then consider each term of Equation~\eqref{eq:after_slepian} separately, while also taking expectation with regards to the data set. For the second term, using the bound on $M$  \citep[Theorem 17]{covering_lip} and basic inequalities to simplify the term, we have
\begin{align*}
\mathbb{E}_{\varepsilon, \mathcal{D}_n} & \left( \sqrt{8 \frac{\sum_{i=1}^n (\|x_i\|^*)^2}{n^2}}\sqrt{2 \log M } \right) \\ & \leq \mathbb{E}_{\mathcal{D}_n} \left( \sqrt{8 \frac{\sum_{i=1}^n (\|x_i\|^*)^2}{n^2}}\sqrt{\frac{4\max_{i \in [n]}\|x_i\|^*}{\zeta_2} \log 2 + \log \left(\frac{8\max_{i \in [n]} \|x_i\|^*}{\zeta_2} +1\right)}\right) \\
& \leq \mathbb{E}_{\mathcal{D}_n} \left( 8\sqrt{ \frac{\sum_{i=1}^n (\|x_i\|^*)^2}{n^2}}\sqrt{\frac{\max_{i \in [n]}\|x_i\|^*}{\zeta_2} }\right) \\
& \leq \frac{8}{\sqrt{n}}\frac{1}{\sqrt{\zeta_2}} \mathbb{E}_{\mathcal{D}_n} \left( \max_{i \in [n]} (\|x_i\|^*)^{3/2}  \right) \leq \frac{8}{\sqrt{n}}\frac{1}{\sqrt{\zeta_2}}\left( \mathbb{E}_{\mathcal{D}_n} \left( \max_{i \in [n]} (\|x_i\|^*)^{2}  \right)\right)^{3/4}.
\end{align*}
We can reinject, yielding
\begin{align*}
\red{\frac{G_n^2}{4}}&\leq   \mathbb{E}_{\varepsilon, \mathcal{D}_n} \left(  \bigg\| \frac{\sqrt{2}}{n} \sum_{i=1}^n \varepsilon_i x_i \bigg\|^*\right) + \frac{8}{\sqrt{n}}\frac{1}{\sqrt{\zeta_2}} \left(\mathbb{E}_{\mathcal{D}_n} \left( \max_{i \in [n]} (\|x_i\|^*)^{2}  \right)\right)^{3/4} + \zeta_2 \\
& \leq \mathbb{E}_{\varepsilon, \mathcal{D}_n} \left(  \bigg\| \frac{\sqrt{2}}{n} \sum_{i=1}^n \varepsilon_i x_i \bigg\|^*\right) + 2\left(\frac{8}{\sqrt{n}} \left(\mathbb{E}_{\mathcal{D}_n} \left( \max_{i \in [n]} (\|x_i\|^*)^{2}  \right)\right)^{3/4}\right)^{2/3} \\
& \leq\mathbb{E}_{\varepsilon, \mathcal{D}_n} \left(  \bigg\| \frac{\sqrt{2}}{n} \sum_{i=1}^n \varepsilon_i x_i \bigg\|^*\right) + 2 \frac{4}{n^{1/3}} \sqrt{\mathbb{E}_{\mathcal{D}_n} \left( \max_{i \in [n]} (\|x_i\|^*)^{2}  \right)},
\end{align*}
by taking $\zeta_2^{3/2} = \frac{8}{\sqrt{n}}\left(\mathbb{E}_{\mathcal{D}_n} \left( \max_{i \in [n]} (\|x_i\|^*)^{2}  \right)\right)^{3/4} $ in the second line.

Now for the first term from Equation~\eqref{eq:after_slepian} which we have to deal with still, consider first the case $\|\cdot \|^* = \|\cdot\|_2$ then,
\begin{align*}
\mathbb{E}_{\varepsilon, \mathcal{D}_n} \left(  \| \frac{\sqrt{2}}{n} \sum_{i=1}^n \varepsilon_i x_i \|_2 \right) &\leq \sqrt{\mathbb{E}_{\varepsilon, \mathcal{D}_n} \left(  \| \frac{\sqrt{2}}{n} \sum_{i=1}^n \varepsilon_i x_i \|_2^2 \right)} \\
& = \frac{\sqrt{2}}{n}\sqrt{\mathbb{E}_{\mathcal{D}_n} \left(  \sum_{i=1}^n \| x_i \|_2^2 \right)}  = \frac{\sqrt{2}}{\sqrt{n}}\sqrt{\mathbb{E}_{X} \left(  \|X \|_2^2 \right)}.
\end{align*}
In the other case where $\|\cdot \|^* = \| \cdot \|_\infty$, we can use \citet[Theorem 2.5]{boucheron2013concentration}, as for a fixed data set $\mathcal{D}_n$, $  \sum_{i=1}^n \varepsilon_i s(x_i)_a $ is a centred Gaussian vector with variance equal to $\sum_{i=1}^n ((x_i)_a)^2$ which is smaller than $\max_{a \in [d]} \sum_{i=1}^n ((x_i)_a)^2$. This yields that
\begin{align*}
\mathbb{E}_{\varepsilon, \mathcal{D}_n} \left(  \| \frac{\sqrt{2}}{n} \sum_{i=1}^n \varepsilon_i x_i \|_{\infty} \right) &= \frac{\sqrt{2}}{n}\mathbb{E}_{\varepsilon, \mathcal{D}_n} \left( \max_{a \in [d]} | \sum_{i=1}^n \varepsilon_i (x_i)_a| \right) \\
& = \frac{\sqrt{2}}{n}\mathbb{E}_{\varepsilon, \mathcal{D}_n} \left( \max_{a \in [d], s \in \{-1, 1\}} \sum_{i=1}^n \varepsilon_i s(x_i)_a \right) \\
& \leq \frac{\sqrt{2}}{n}\mathbb{E}_{\mathcal{D}_n} \left( \max_{a \in [d]}\sqrt{2\sum_{i=1}^n ((x_i)_a)^2 \log(2d)} \right).
\end{align*}
We then have
\begin{align*}
\mathbb{E}_{\varepsilon, \mathcal{D}_n} \left(  \| \frac{\sqrt{2}}{n} \sum_{i=1}^n \varepsilon_i x_i \|_{\infty} \right)  & \leq \frac{2}{n}\sqrt{\log 2d}\mathbb{E}_{\mathcal{D}_n}\left( \sqrt{\max_{a \in [d]} \sum_{i=1}^n ((x_i)_a)^2 } \right) \\
& \leq \frac{2}{n}\sqrt{\log 2d}\mathbb{E}_{\mathcal{D}_n}\left( \sqrt{\sum_{i=1}^n \max_{a \in [d]} ((x_i)_a)^2 } \right) \\
& \leq \frac{2}{n}\sqrt{\log 2d}\mathbb{E}_{\mathcal{D}_n}\left( \sqrt{\sum_{i=1}^n \|x_i\|_\infty^2 } \right) \\
& \leq \frac{2}{\sqrt{n}}\sqrt{\log 2d}\sqrt{\mathbb{E}_{X}\left( \|X\|_\infty^2  \right)}.
\end{align*}
This yields that the last term of Equation~\eqref{eq:after_slepian} can be bounded
\begin{align*}
\red{\frac{G_n^2}{4}}&\leq \left(\frac{2}{\sqrt{n}}\sqrt{\log 2d}\mathds{1}_{*=\infty} + \frac{\sqrt{2}}{\sqrt{n}}\mathds{1}_{*=2} \right)\sqrt{\mathbb{E}_{X} ((\|X\|^*)^2)} + 2 \frac{4}{n^{1/3}} \sqrt{\mathbb{E}_{\mathcal{D}_n} \left( \max_{i \in [n]} (\|x_i\|^*)^{2}  \right)} \\
& \leq  \left( \sqrt{\log 2d}\mathds{1}_{*=\infty} +
 \mathds{1}_{*=2} \right) \frac{8}{n^{1/3}} \sqrt{\mathbb{E}_{\mathcal{D}_n} \left( \max_{i \in [n]} (\|x_i\|^*)^{2}  \right)},
\end{align*}
hence
\begin{align*}
    G_n & \leq \left( \sqrt{\log 2d}\mathds{1}_{*=\infty} +
 \mathds{1}_{*=2} \right)^{1/4} \frac{\red{6}}{n^{1/6}} \left(\mathbb{E}_{\mathcal{D}_n} \left( \max_{i \in [n]} (\|x_i\|^*)^{2}  \right)\right)^{1/4}, 
\end{align*}
which concludes the proof.
\end{proof}
\vspace{0.0em}
\subsection{Bound on Expected Risk of Regularised Estimator}
\label{sec:bound_true_risk}
We now use the bounds on the Gaussian complexity we have obtained in Section~\ref{sec:gaussian_complex} to derive a bound on the expected risk of \textsc{BKerNN}. We show that, with explicit rates, the expected risk of our estimator converges with high-probability to that of the minimiser for data with subgaussian norms, which includes both bounded data and data with subgaussian components. 
First, we provide a definition of subgaussian real variables, as given by \citet{Vershynin_2018}.
\begin{definition}[Subgaussian Variables]
\label{def:subgaussian}
Let $Z$ be a real-valued (not necessarily centred) random variable. $Z$ is subgaussian with variance proxy $\sigma^2$ if and only if 
\begin{equation*}
\forall t > 0, \max\left( \mathbb{P}(Z \geq t), \mathbb{P}(Z \leq -t) \right) \leq e^{-\frac{t^2}{2\sigma^2}}.
\end{equation*}
\end{definition}

We now present the main theoretical result of the paper.
\begin{theorem}[Bound on Expected Risk with High-Probability] 
\label{theo:last}
Let the estimator \linebreak function be $\hat{f}_\lambda := \argmin_{f \in \mathcal{F}_\infty} \widehat{\mathcal{R}}(f) + \lambda \Omega(f)$. Assume the following:
\begin{enumerate}
\item \textbf{Well-specified model}: The minimiser $f^* := \argmin_{f \in \mathcal{F}_\infty} \mathcal{R}(f)$ exists.
    \item \textbf{Convexity of the loss}: For any $(x,y) \in \mathcal{X} \times \mathcal{Y}$, $f \in \mathcal{F}_\infty \to \ell(y, f(x))$ is convex.
    \item \textbf{Lipschitz condition}: The loss $\ell$ is $L$-Lipschitz in its second (bounded) argument, i.e., $\forall y \in \mathcal{Y}, a \in \{ f(x) \mid x \in \mathcal{X}, \ f \in \mathcal{F}_\infty, \ \Omega(f) \leq 2\Omega(f^*) \}, a \to \ell(y, a)$ is $L$-Lipschitz.
    \item \textbf{Data distribution}: The data set $ (x_i,y_i)_{i \in [n]}$ consists of i.i.d.~samples of the random variable $(X,Y)$ where $1 + \sqrt{\|X\|^*}$ is subgaussian with variance proxy $\sigma^2$.
\end{enumerate}

Then, for any $\delta \in (0,1)$, with probability larger than $1-\delta$, for $\lambda = 12L\left(\frac{1}{\sqrt{n}} + G_n\right) + \frac{288L\sigma}{\sqrt{n}} \sqrt{\log \frac{1}{\delta}}$,
\begin{equation*}
\mathcal{R}(\hat{f}_\lambda) \leq \mathcal{R}(f^*) +  24\Omega(f^*)L \left( \frac{1}{\sqrt{n}} + G_n + \frac{24\sigma}{\sqrt{n}} \sqrt{\log \frac{1}{\delta}}\right).
\end{equation*}

With the bounds on $G_n$ from Theorem~\ref{theo:dim} and Theorem~\ref{theo:dim_indep}, recall that if $\|\cdot\|$ is either $\|\cdot\|_2$ or $\|\cdot\|_1$, we have
\begin{align*}
G_n \leq \min\bigg(\frac{\red{6}}{n^{1/6}}\big((\log 2d)^{1/4}\mathds{1}_{*=\infty} + \mathds{1}_{*=2}\big) \bigg(\mathbb{E}_{\mathcal{D}_n}\big(\max_{i \in [n]} (\|X_i\|^*)^2\big)\bigg)^{1/4}&,\\
8\sqrt{\frac{d}{n}} \sqrt{\log(n+1)} \sqrt{ \mathbb{E}_X \|X\|^*}\bigg)&.
\end{align*}
\end{theorem}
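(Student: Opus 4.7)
The plan follows the classical template for regularised empirical risk minimisation: derive a uniform deviation bound over the sublevel ball $\mathcal{F}_{D_0} := \{f \in \mathcal{F}_\infty \mid \Omega(f) \leq D_0\}$ with $D_0 := 2\Omega(f^*)$, and then combine it with the optimality of $\hat{f}_\lambda$ and an intermediate-value argument on a convex interpolation to conclude first that $\hat{f}_\lambda \in \mathcal{F}_{D_0}$ and then the excess-risk bound. For the expectation half of the uniform bound, I would symmetrise $\phi := \sup_{f \in \mathcal{F}_{D_0}}(\mathcal{R}(f)-\widehat{\mathcal{R}}(f))$, apply Talagrand's contraction principle (legitimate because Assumption~3 supplies $L$-Lipschitzness of $\ell$ exactly on the range of values attained by functions in $\mathcal{F}_{D_0}$), pass from Rademacher to Gaussian complexity up to a universal constant, and invoke Lemma~\ref{lemma:rademacher_without_integral} to get $\mathbb{E}[\phi] \leq cLD_0(1/\sqrt{n}+G_n)$. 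For the concentration half, swapping one sample $(x_i,y_i)$ perturbs $\phi$ by at most $n^{-1}\sup_{f \in \mathcal{F}_{D_0}}|\ell(y_i,f(x_i))-\ell(y_i',f(x_i'))|$; using the Lipschitz assumption together with the bound $|f(x)| \leq \Omega(f)(1+\sqrt{\|x\|^*})$, which follows from the $1/2$-Hölder estimate of Lemma~\ref{lemma:caracterisation_f_infty} and the control of $|f(0)|$ by $\Omega(f)$, dominates each such perturbation by a constant multiple of $LD_0(1+\sqrt{\|X_i\|^*})/n$. Because $1+\sqrt{\|X\|^*}$ is subgaussian with proxy $\sigma^2$ by Assumption~4, the Meir--Zhang extension of McDiarmid's inequality yields, with probability at least $1-\delta$,
\begin{equation*}
\sup_{f \in \mathcal{F}_{D_0}} |\widehat{\mathcal{R}}(f) - \mathcal{R}(f)| \;\leq\; \epsilon_0 \;:=\; LD_0\bigl(c_1(n^{-1/2}+G_n) + c_2\,\sigma\sqrt{\log(1/\delta)/n}\bigr),
\end{equation*}
for universal constants $c_1, c_2$ that I would track so that $\lambda > 2\epsilon_0/\Omega(f^*)$.

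Working on that high-probability event, I would rule out $\Omega(\hat{f}_\lambda) > D_0$ by contradiction. Define $\tilde{f}_\theta := (1-\theta)f^* + \theta\hat{f}_\lambda$ for $\theta \in [0,1]$. Convexity of the loss (Assumption~2) and of the norm $\Omega$, combined with the optimality of $\hat{f}_\lambda$, give $\widehat{\mathcal{R}}(\tilde{f}_\theta)+\lambda\Omega(\tilde{f}_\theta) \leq \widehat{\mathcal{R}}(f^*)+\lambda\Omega(f^*)$ for every $\theta$. Under the contradiction hypothesis $\Omega(\hat{f}_\lambda) > D_0$, the map $\theta \mapsto \Omega(\tilde{f}_\theta)$ is continuous (being convex and real-valued) with $\Omega(\tilde{f}_0) = \Omega(f^*) < D_0 < \Omega(\hat{f}_\lambda) = \Omega(\tilde{f}_1)$, so the intermediate-value theorem produces $\theta^* \in (0,1)$ with $\Omega(\tilde{f}_{\theta^*}) = D_0$ and hence $\tilde{f}_{\theta^*} \in \mathcal{F}_{D_0}$. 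Feeding the uniform bound of the previous paragraph into the above inequality at $\theta = \theta^*$ and using $\mathcal{R}(\tilde{f}_{\theta^*}) \geq \mathcal{R}(f^*)$ (Assumption~1) yields $0 \leq 2\epsilon_0 + \lambda(\Omega(f^*) - D_0) = 2\epsilon_0 - \lambda\Omega(f^*)$, i.e., $\lambda \leq 2\epsilon_0/\Omega(f^*)$, which contradicts the prescribed choice of $\lambda$. Therefore $\hat{f}_\lambda \in \mathcal{F}_{D_0}$.

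Once $\hat{f}_\lambda$ is known to lie in $\mathcal{F}_{D_0}$, the uniform bound applies to $\hat{f}_\lambda$ itself; combining it with the optimality inequality and discarding the non-negative $\lambda\Omega(\hat{f}_\lambda)$ term gives
\begin{equation*}
\mathcal{R}(\hat{f}_\lambda) - \mathcal{R}(f^*) \;\leq\; 2\epsilon_0 + \lambda\Omega(f^*) \;\leq\; 2\lambda\Omega(f^*),
\end{equation*}
with the last inequality again using the prescription of $\lambda$. Expanding $2\lambda\Omega(f^*)$ yields exactly the theorem's bound, and plugging in Theorems~\ref{theo:dim} and~\ref{theo:dim_indep} delivers the final form of $G_n$. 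I expect the main obstacle to be the bookkeeping of universal constants in the first paragraph: one has to propagate them through symmetrisation, Talagrand's contraction, the Rademacher-to-Gaussian comparison, Lemma~\ref{lemma:rademacher_without_integral}, and the Meir--Zhang inequality, and arrange for the prescribed $\lambda = 12L(1/\sqrt{n}+G_n) + 288L\sigma\sqrt{\log(1/\delta)}/\sqrt{n}$ to be simultaneously large enough to trigger the intermediate-value contradiction and small enough to yield the clean $2\lambda\Omega(f^*)$ excess-risk form with the announced prefactors $24$ and $576$.
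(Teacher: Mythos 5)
Your proposal is correct and follows the same high-level structure as the paper's proof: a uniform deviation bound over a norm ball derived from Lemma~\ref{lemma:rademacher_without_integral} via symmetrisation and contraction (the content of Lemma~\ref{lemma:constrained_predictor}), concentration via the Meir--Zhang extension of McDiarmid (Lemma~\ref{lemma:mcdiarmidv2}), and a localisation-by-convex-interpolation argument to show $\hat{f}_\lambda$ lives in the ball $\{\Omega \le 2\Omega(f^*)\}$, after which the excess-risk bound drops out. The one place you deviate is the localisation step: you interpolate between $\hat{f}_\lambda$ and $f^*$ and invoke the intermediate-value theorem on $\theta \mapsto \Omega(\tilde{f}_\theta)$ to land on the sphere $\Omega = 2\Omega(f^*)$, whereas the paper first introduces $f^*_\lambda := \arg\min_{\mathcal{F}_\infty} (\mathcal{R}+\lambda\Omega)$, works with the convex sublevel set $\mathcal{C}_\tau = \{f : \mathcal{R}_\lambda(f) \le \mathcal{R}_\lambda(f^*_\lambda)+\tau\}$, shows $\mathcal{C}_\tau$ sits inside the norm ball, and interpolates between $\hat{f}_\lambda$ and $f^*_\lambda$ to find a point on the boundary of $\mathcal{C}_\tau$. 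Your version is slightly more elementary (no need for $f^*_\lambda$ and the auxiliary set $\mathcal{C}_\tau$), while the paper's version makes the inclusion $\hat{f}_\lambda \in \mathcal{C}_\tau$ explicit so the excess-risk bound follows in one line without a second appeal to the uniform bound; both are valid and reach the same constants, so this is cosmetic rather than substantive.
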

\begin{proof}[Proof of Theorem~\ref{theo:last}]
This proof is primarily based on \citet[Proposition~4.7]{francis_book}.

Let $ f_\lambda^* $ be a minimiser of $\mathcal{R}_\lambda :=\mathcal{R} + \lambda \Omega$ over $\mathcal{F}_\infty$. Consider the set $\mathcal{C}_\tau := \{ f \in \mathcal{F}_\infty \mid \mathcal{R}_\lambda(f) - \mathcal{R}_\lambda(f_\lambda^*) \leq \tau \} $ for some $\tau > 0$ that will be chosen later. $\mathcal{C}_\tau$ is a convex set by the convexity assumption on the loss $\ell$.

First, we show that $\mathcal{C}_\tau$ is included in the set $\mathcal{B}_\tau := \{ f \in \mathcal{F}_\infty \mid \Omega(f) \leq \Omega(f^*) + \tau/\lambda \} $. This inclusion follows from the optimality of $ f^* $ and $ f^*_\lambda $. Let $ f \in \mathcal{C}_\tau $, then
\begin{align*}
    \mathcal{R}_\lambda(f) \leq \mathcal{R}_\lambda(f_\lambda^*) + \tau \leq \mathcal{R}_\lambda(f^*) + \tau \leq \mathcal{R}(f) + \lambda\Omega(f^*) + \tau,
\end{align*}
yielding $ f \in \mathcal{B}_\tau $.

Next, set $\tau = \lambda \Omega(f^*)$ with $\lambda$ to be chosen later. We show that $\hat{f}_\lambda$ belongs to $\mathcal{C}_\tau$ with high probability. If $\hat{f}_\lambda \notin \mathcal{C}_\tau$, since $ f^*_\lambda \in \mathcal{C}_\tau $ and $\mathcal{C}_\tau$ is convex, there exists a $\Tilde{f}$ in the segment $[\hat{f}_\lambda, f_\lambda^*]$ and which is on the boundary of $\mathcal{C}_\tau$, i.e. such that $\mathcal{R}_\lambda(\Tilde{f}) = \mathcal{R}_\lambda(f^*_\lambda) + \tau$. Since the empirical risk is convex, we have $\widehat{\mathcal{R}}_\lambda(\Tilde{f}) \leq \max(\widehat{\mathcal{R}}_\lambda(\hat{f}_\lambda), \widehat{\mathcal{R}}_\lambda(f^*_\lambda)) = \widehat{\mathcal{R}}_\lambda(f^*_\lambda)$. Then,
\begin{align}
\label{eq:contradiction}
    \widehat{\mathcal{R}}(f^*_\lambda) - \widehat{\mathcal{R}}(\Tilde{f}) -  \mathcal{R}(f^*_\lambda) + \mathcal{R}(\Tilde{f}) &= \widehat{\mathcal{R}}_\lambda(f^*_\lambda) - \widehat{\mathcal{R}}_\lambda(\Tilde{f}) -  \mathcal{R}_\lambda(f^*_\lambda) + \mathcal{R}_\lambda(\Tilde{f}) \nonumber \\ 
    &\geq -\mathcal{R}_\lambda(f^*_\lambda) + \mathcal{R}_\lambda(\Tilde{f}) = \tau.
\end{align}

Note that $\Omega(\Tilde{f}) \leq 2\Omega(f^*)$ and $\Omega(f_\lambda^*) \leq 2 \Omega(f^*)$. Combining Lemma~\ref{lemma:constrained_predictor} and Lemma~\ref{lemma:mcdiarmidv2}, for $\delta \in (0,1)$, with probability greater than $1-\delta$, we have for all $f \in \mathcal{F}_\infty$ such that $\Omega(f) \leq 2\Omega(f^*)$:
\begin{align*}
    \widehat{\mathcal{R}}(f^*_\lambda) - &\widehat{\mathcal{R}}(f) -  \mathcal{R}(f^*_\lambda) + \mathcal{R}(f) \\
    &\leq \mathbb{E}_{\mathcal{D}_n}\left(\sup_{f \in \mathcal{F}_\infty, \Omega(f) \leq 2\Omega(f^*)} \widehat{\mathcal{R}}(f) - \mathcal{R}(f) + \sup_{f \in \mathcal{F}_\infty, \Omega(f) \leq 2\Omega(f^*)} \mathcal{R}(f) - \widehat{\mathcal{R}}(f) \right) \\
    &\quad + \Omega(f^*)\frac{96\sqrt{2e}L\sigma}{\sqrt{n}} \sqrt{\log \frac{1}{\delta}} \\
    &\leq 12 \Omega(f^*)L\left(\frac{1}{\sqrt{n}} + G_n\right) + \Omega(f^*)\frac{96\sqrt{2e}L\sigma}{\sqrt{n}} \sqrt{\log \frac{1}{\delta}}.
\end{align*}

Now, choose $\lambda $ such that $\tau = \lambda \Omega(f^*) \geq 12\Omega(f^*)L\left(\frac{1}{\sqrt{n}} + G_n\right) +\Omega(f^*)\frac{96\sqrt{2e}L\sigma}{\sqrt{n}} \sqrt{\log \frac{1}{\delta}}$. This yields a contradiction with Equation~\eqref{eq:contradiction}. Thus, with such a $\lambda$, with probability greater than $1-\delta$, we have $\hat{f}_\lambda \in \mathcal{C}_\tau$, hence
\begin{align*}
    \mathcal{R}_\lambda(\hat{f}_\lambda) &\leq \mathcal{R}_\lambda(f_\lambda^*) + \lambda \Omega(f^*), \\
    \mathcal{R}(\hat{f}_\lambda) &\leq \mathcal{R}(f^*) + 2\lambda \Omega(f^*).
\end{align*}

For $\lambda = 12L\left(\frac{1}{\sqrt{n}} + G_n\right) +\frac{288L\sigma}{\sqrt{n}} \sqrt{\log \frac{1}{\delta}}$, this yields
\begin{equation*}
\mathcal{R}(\hat{f}_\lambda) \leq \mathcal{R}(f^*) + \Omega(f^*)\left(24L\left(\frac{1}{\sqrt{n}} + G_n\right) +\frac{576L\sigma}{\sqrt{n}} \sqrt{\log \frac{1}{\delta}}\right).
\end{equation*}
\end{proof}
\vspace{0.0em}
We now provide insightful comments on Theorem~\ref{theo:last}. We first remark that the result could be  proven more directly for bounded data using McDiarmid's inequality, resulting in a better constant.

The chosen $\lambda$ does not depend on unknown quantities such as $\Omega(f^*)$, but only on known quantities such as the Lipschitz constant of the loss, the sample size, or the dimension of the data. This allows $\lambda$ to be explicitly chosen for a fixed probability $\delta$, although it is usually computed through cross-validation.

Classical losses typically satisfy our assumptions. For instance, the square loss is always convex and $L$-Lipschitz if the data and response are bounded, with $L = 2\sup_{y \in \mathcal{Y}} |y| + 4\Omega(f^*)\sup_{x \in \mathcal{X}} \|x\|^*$. Similarly, the logistic loss is always convex and $L$-Lipschitz with $L=1$ (in the context of outputs in $\{-1, 1\})$. \red{While our primary analysis relies on the Lipschitz property, an alternative approach using offset Rademacher complexity \citep{pmlr-v40-Liang15} or the direct computations proposed by \citet[Chapter 8]{francis_book} could accommodate the square loss.}

Our approach stands out by requiring minimal assumptions on the data-generating mechanism, which is less restrictive compared to other methodologies in the multi-index model domain. This emphasis on general applicability is also why we do not include feature recovery results, as such outcomes typically necessitate strong assumptions about the data and often require prior knowledge of the distribution.

The rates obtained depend explicitly on the dimension of the data through the bound on the Gaussian complexity. Considering the first term in the minimum, we observe that the bound is independent (up to logarithmic factors) of the data dimension, making \textsc{BKerNN} suitable for high-dimensional problems. However, this bound has a less favourable dependency on the sample size compared to the dimension-dependent bound, which is the second term in the minimum. We conjecture that the actual rate has the best of both worlds, achieving an explicit dependency on dimension $d$ and sample size $n$ of $n^{-1/2}$ (up to logarithmic factors).

Comparing the rate between \textsc{BKerNN}, neural networks with ReLU activations, and kernel methods, we find that in well-specified settings (where the Bayes estimator belongs to each function space considered), \textsc{KRR} yields a $O(n^{-1/2})$ rate independent of dimension, but require very smooth functions, for example, a Sobolev space of order $s$ (i.e. the derivatives up to order $s$ are square integrable) is only a RKHS if $s>d/2$ \citep[Chapter 7]{francis_book}. Neural networks with ReLU activation achieve a similar rate with fewer constraints, as their function space is typically larger than RKHS spaces \citep[Chapter 9]{francis_book}.

However, in the case of linear latent variables, i.e., under the multiple index model where~$ f^* = g^*(P^\top x) $ with $ P $ a $ d \times k $ matrix with $ k < d $ and orthonormal columns, the RKHS cannot take advantage of this hypothesis and the rates remain unchanged. In contrast, the neural network can, assuming that $ g^* $ has bounded Banach norm, then we only pay the price of the $k$ underlying dimensions and not the full $d$ dimensions \citep[Section~9.4]{francis_book}. \textsc{BKerNN} also has this property, which is visible by using the simple arguments presented in the discussion in \citet[Section 9.3.5]{francis_book}, which show that $\Omega(f^*) \leq \Omega(g^*)$. Moreover, the optimisation process for \textsc{BKerNN} is much easier than that of neural networks, and our function space is larger, underscoring the attractiveness of \textsc{BKerNN}. \red{Our analysis in this section could be complemented by an analysis of the mis-specified case (i.e., $f^\ast$ only assumed to be Lipschitz), where, controlling both approximation and estimation errors, we should expect our method to benefit from the same adaptivity to linear latent variables as neural networks~\citep{bach2017breaking}.}

There is also an implicit dependency on the dimension in Theorem~\ref{theo:last} through data-dependent terms, namely the variance proxy $\sigma^2$ or the expectations in the bound of $G_n$. We now examine these quantities under two data-generating mechanisms: bounded and subgaussian variables.

\begin{lemma}[Analysis of Data-Dependent Terms in Theorem~\ref{theo:last}] \label{lemma:data_dependent}
The following inequalities hold.
\begin{enumerate}
    \item If $ X $ is bounded, i.e., $ \|X\|^* \leq R $ almost surely, then
    \[
    \sqrt{ \mathbb{E}_X \|X\|^*} \leq \sqrt{R}, \quad \left(\mathbb{E}_{\mathcal{D}_n}\left(\max_{i\in [n]} (\|X_i\|^*)^2\right)\right)^{1/4} \leq \sqrt{R}.
    \]
    Moreover, $ 1+\sqrt{\|X\|^*} $ is subgaussian with variance proxy $ \sigma^2 \leq 1+\sqrt{R} $.

    \item If $ X $ is a vector of subgaussian variables (not necessarily centred or independent) with variance proxy $ \sigma_a^2 $ for component $ X_a $, then
    \[
    \sqrt{\mathbb{E}_X(\|X\|_2)} \leq \sqrt{6}\left(\sum_{a=1}^d \sigma_a^2\right)^{1/4}, \quad 
    \sqrt{\mathbb{E}_X(\|X\|_\infty)} \leq 4(\log d)^{1/4} \max_{a \in [d]}\sqrt{\sigma_a},
    \]
    \[
    \mathbb{E}_{\mathcal{D}_n}\left( \max_{i\in [n]}\|X_i\|_2^2 \right)^{1/4} \leq 4(1+\log(n))^{1/4} \left(\sum_{a=1}^d \sigma_a^2\right)^{1/4},
    \]
    \[
    \mathbb{E}_{\mathcal{D}_n}\left( \max_{i\in [n]}\|X_i\|_\infty \right)^{1/4} \leq 4(1+\log(nd))^{1/4} \max_{a \in [d]} \sqrt{\sigma_a}.
    \]
    Furthermore, $ 1+\sqrt{\|X\|_2} $ is subgaussian with variance proxy $ \sigma^2 \leq (1+ \sum_{a=1}^d \sigma_a)^2 $, and $ 1+\sqrt{\|X\|_\infty} $ is subgaussian with variance proxy $ \sigma^2 \leq 2 + \max_{a \in [d]} \sigma_a^2(1+\sqrt{\log(2d)})^2 $.
\end{enumerate}
\end{lemma}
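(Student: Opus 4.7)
The lemma is really a catalogue of routine concentration and moment calculations, so the plan is to dispatch the two data-generating mechanisms separately and, within each, reduce every quantity either to Jensen's inequality (for the expectations of norms) or to a standard maximal inequality (for the maxima over the sample and over the components). The bounded case is essentially immediate: since $\|X\|^{*}\leq R$ almost surely we have $\sqrt{\mathbb{E}\|X\|^{*}}\leq\sqrt{R}$ and $\max_{i}(\|X_i\|^{*})^2\leq R^2$ deterministically, giving the first two bounds; the subgaussianity of $1+\sqrt{\|X\|^{*}}$ then follows because this variable is deterministically bounded in $[1,1+\sqrt{R}]$, and a non-negative, bounded random variable satisfies Definition~\ref{def:subgaussian} with variance proxy at most the upper bound of its support (checking separately the ranges $t\leq 1+\sqrt{R}$ where the trivial bound $\mathbb{P}(Z\geq t)\leq 1 \leq e^{-t^{2}/(2(1+\sqrt{R}))}$ is used and $t>1+\sqrt{R}$ where the probability vanishes).

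For the subgaussian case, I would begin with the two "single-$X$" expectations. By Jensen applied twice, $\sqrt{\mathbb{E}\|X\|_{2}}\leq(\mathbb{E}\|X\|_{2}^{2})^{1/4}=\bigl(\sum_{a}\mathbb{E}X_{a}^{2}\bigr)^{1/4}$, and the standard moment bound $\mathbb{E}Z^{2}\leq C\sigma^{2}$ for a subgaussian $Z$ (with proxy $\sigma^{2}$), obtained by integrating the tail $\mathbb{P}(|Z|\geq t)\leq 2e^{-t^{2}/(2\sigma^{2})}$, yields the desired bound up to a constant that can be tracked to $\sqrt{6}$. For the $\ell_{\infty}$ norm, I would use the classical maximal inequality $\mathbb{E}\max_{a\in[d]}|X_{a}|\leq c\sqrt{\log(2d)}\max_{a}\sigma_{a}$, proved via the exponential moment method $\mathbb{E}\max_{a}|X_{a}|\leq\tfrac{1}{t}\log\sum_{a}\mathbb{E}e^{t|X_{a}|}$ optimised over $t$, and then take a square root to pick up the $(\log d)^{1/4}$ and $\sqrt{\sigma_a}$ factors.

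The two sample-maximum terms are the same maximal inequality applied at a second level. For $\mathbb{E}\max_{i}\|X_i\|_{2}^{2}$, note that each $\|X_i\|_2^{2}=\sum_{a}X_{i,a}^{2}$ is a sum of subexponential variables (squares of subgaussians) with mean at most $C\sum_{a}\sigma_{a}^{2}$, so a Bernstein-type maximal inequality gives $\mathbb{E}\max_{i}\|X_i\|_{2}^{2}\lesssim(1+\log n)\sum_{a}\sigma_{a}^{2}$; taking the fourth root produces the claimed bound. For $\mathbb{E}\max_{i}\|X_i\|_{\infty}$, the key simplification is that the maximum over $i$ of a maximum over $a$ is simply the maximum of the $nd$ subgaussian variables $|X_{i,a}|$, so the same one-step maximal inequality with $nd$ in place of $d$ gives the $(1+\log(nd))^{1/4}$ factor.

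Finally, for the subgaussianity of $1+\sqrt{\|X\|}$, the strategy is to pass through a tail bound on $\|X\|$ itself. For the $\ell_{2}$ norm, $\|X\|_{2}\leq\sum_{a}|X_{a}|$, and since each $|X_{a}|$ is subgaussian with proxy $\sigma_{a}^{2}$, the triangle inequality for the subgaussian Orlicz norm (or equivalently for Orlicz-type Chernoff bounds) shows $\sum_{a}|X_{a}|$ is subgaussian with proxy $(\sum_{a}\sigma_{a})^{2}$; hence $\mathbb{P}(\sqrt{\|X\|_{2}}\geq t-1)=\mathbb{P}(\|X\|_{2}\geq(t-1)^{2})\leq e^{-(t-1)^{4}/(2(\sum\sigma_a)^{2})}$, which one then compares to $e^{-t^{2}/(2(1+\sum\sigma_a)^{2})}$ to conclude. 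The same scheme handles the $\ell_{\infty}$ norm, where the subgaussian parameter of $\max_{a}|X_{a}|$ is controlled by $\max_{a}\sigma_{a}^{2}$ up to a $\log(2d)$ factor. I expect the only real obstacle to be bookkeeping the universal constants and treating the small-$t$ regime in Definition~\ref{def:subgaussian}, where the stated tail bound is vacuous but must still be checked; nothing here requires genuinely new ideas, only careful combination of Jensen's inequality, one-step maximal inequalities, and the subgaussian triangle inequality.
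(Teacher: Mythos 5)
Your argument for the subgaussianity of $1+\sqrt{\|X\|^*}$ in the bounded case contains a genuine flaw: the ``trivial bound'' $\mathbb{P}(Z\geq t)\leq 1\leq e^{-t^{2}/(2(1+\sqrt{R}))}$ is simply false, since the right-hand side is strictly less than $1$ for every $t>0$. In fact, no bound of the form in Definition~\ref{def:subgaussian} can hold verbatim for a random variable that is deterministically $\geq 1$, because $\mathbb{P}(Z\geq t)=1$ for $t\leq 1$. The paper sidesteps this by invoking Vershynin's Proposition~2.5.2(iv), which converts between subgaussian characterizations (tail bound, moment growth, Orlicz norm) up to universal constants; a bounded variable satisfies the Orlicz condition trivially, and the conversion supplies exactly the slack needed in the small-$t$ regime. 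Your later subgaussianity claims suffer from the same issue: the direct tail comparison $e^{-(t-1)^{4}/(2(\sum\sigma_a)^{2})}\leq e^{-t^{2}/(2(1+\sum\sigma_a)^{2})}$ fails for $t$ near $1$, while the paper instead uses the triangle inequality for subgaussian variance proxies (Vershynin 2.5.2(ii)) applied to $1+\sum_a|X_a|$, which is the route that actually closes.

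The remainder of the proposal is broadly aligned with the paper (Jensen for $\sqrt{\mathbb{E}\|X\|}$, one-step Chernoff/log-sum-exp maximal inequalities for the maxima), with one structural difference: for $\mathbb{E}\max_{i}\|X_i\|_2^2$ you propose treating each $\|X_i\|_2^2$ as a subexponential variable and applying a Bernstein-type maximal inequality, whereas the paper first exchanges max and sum, $\max_i\sum_a X_{i,a}^2\leq\sum_a\max_i X_{i,a}^2$, and then applies a one-dimensional maximal inequality in each coordinate. Both give the right $(1+\log n)$ rate, but the paper's decomposition avoids any Bernstein machinery and yields cleaner explicit constants.
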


See the proof in Appendix~\ref{proof:lemma:data_dependent}. Note that $R$ usually does not implicitly depend on the dimension in the $\|\cdot \|^*=\|\cdot\|_\infty$ case, and $R$ can typically be $O(d^{1/2})$ in the $\|\cdot \|_2$ case. For the subgaussian mechanism, each $\sigma_a$ typically does not depend on the dimension.

\section{Numerical Experiments}
\label{sec:experiments}
In this section, we present and analyse the properties of \textsc{BKerNN}. The \textsc{BKerNN} implementation in Python is fully compatible with Scikit-learn \citep{scikit-learn}, ensuring seamless integration with existing machine learning workflows. The source code, along with all necessary scripts to reproduce the experiments, is available at \url{https://github.com/BertilleFollain/BKerNN}. We define the scores and other estimators in the section below.

\subsection{Introduction to Scores and Competitors}
\label{sec:intro_num_exp}
In the experiments below, we use two scores to assess performance. The prediction score is defined as the coefficient of determination, a classical metric in the statistics literature \citep{r2}, $ R^2$, which ranges from $ -\infty $ to 1, where a score of 1 indicates perfect prediction, a score of 0 indicates that the model predicts no better than the mean of the target values, and negative values indicate that the model performs worse than this baseline. Mathematically, the $ R^2 $ score is defined as follows

\begin{equation}
\label{eq:r2}
  R^2 = 1 - \frac{\sum_{i=1}^n (y_i - \hat{y}_i)^2}{\sum_{i=1}^n (y_i - \bar{y})^2},
\end{equation}
where $ y_i $ are the true values, $ \hat{y}_i $ are the predicted values, $ \bar{y} $ is the mean of the true values, and $ n $ is the number of samples. 

The feature learning score measures the model's ability to identify and learn the true feature space (1 being the best, 0 the worst). It is computable only when the underlying feature space (in the form of a matrix $P \in \mathbb{R}^{d \times k}$, with $k$ the number of features) is known and relevant only when features are of similar importance, which we have ensured in the experiments below.

Depending on the regularisation type, the estimated feature matrix $ \hat{P} $ is computed via singular value decomposition (SVD) for $\Omega_{{ \rm feature}}$, $\Omega_{{ \rm concave \ feature}}$ or $\Omega_{{ \rm basic}}$ regularisation, or by selecting the top $ k $ variables for $\Omega_{{ \rm variable}}$ or $\Omega_{{ \rm concave \ variable}}$ regularisation. We then compute the projection matrices $ \pi_{\hat{P}} $ and $ \pi_{P} $ and calculate the feature learning error as the normalised Frobenius norm of their difference
\[
\pi_{\hat{P}} = \hat{P} (\hat{P}^\top \hat{P})^{-1} \hat{P}^\top \quad { \rm and} \quad \pi_{P} = P (P^\top P)^{-1} P^\top,
\]
\begin{equation}
\label{eq:feature_score}
{ \rm score} = \begin{cases}
1 - \frac{\| \pi_{P} - \pi_{\hat{P}} \|_F^2}{2k} & { \rm if } k \leq \frac{n_{{ \rm features}}}{2}, \\
1 - \frac{\| \pi_{P} - \pi_{\hat{P}} \|_F^2}{2n_{{ \rm features}} - 2k} & { \rm if } k > \frac{n_{{ \rm features}}}{2},
\end{cases}
\end{equation}
where the score is 1 if $ k = n_{{ \rm features}} $.

In several experiments, we compare the performance of \textsc{BKerNN} against \textsc{ReLUNN} and \textsc{B\textsc{KRR}}. \textsc{B\textsc{KRR}} refers to Kernel Ridge Regression using the multi-dimensional Brownian kernel $ k^{(mdB)}(x, x^\prime) =  (\|x\| + \|x^\prime\| - \|x - x^\prime\|)/2 $. \textsc{ReLUNN} is a simple one-hidden-layer neural network with ReLU activations, trained using batch stochastic gradient descent.

\subsection{Experiment 1: Optimisation Procedure, Importance of Positive Homogeneous Kernel}
\label{sec:num_exp1}

In this experiment, we compare \textsc{BKerNN} with two methods that differ from \textsc{BKerNN} only through the kernel that is used. We wish to illustrate the importance of the homogeneity assumptions discussed in Section~\ref{sec:optim_guarantees}. Specifically, we consider \textsc{ExpKerNN} with the (rescaled) exponential kernel $ k^{{ \rm exp}}(a,b) = e^{-|a-b|/2} $ and \textsc{GaussianKerNN} with the Gaussian kernel $ k^{{ \rm Gaussian}}(a,b) = e^{-|a-b|^2/2} $. Unlike the Brownian kernel used in \textsc{BKerNN}, the exponential and Gaussian kernels are not positively 1-homogeneous.

We trained all three methods on a simulated data set, using cross-validation to select the regularisation parameter $\lambda$ while keeping other parameters fixed ($m=100$, basic regularisation, more details are provided in Appendix~\ref{app:exp1}). The training set consisted of 214 samples and the test set of 1024. The data had $d=45$ dimensions with $k=5$ relevant features, and Gaussian additive noise with a standard deviation of 0.5. An orthogonal matrix $ P $ of size $ d \times d $ was sampled uniformly from the orthogonal group before being truncated to size $d\times k$.  The covariates were sampled uniformly from $[-1, 1]^d$, and the target variable $ y $ was computed as $y = 2\pi\left| \sum_{a=1}^k (P^\top x)_a \right| + { \rm noise}$.

We displayed the mean squared error (MSE) on both the training and test sets for the selected $\lambda$ for each method in Figure~\ref{fig:exp1}. While all three methods perform very well on the training set, the test set performance of \textsc{ExpKerNN} and \textsc{GaussianKerNN} is significantly worse compared to \textsc{BKerNN}. This discrepancy is not due to suboptimal regularisation choices, as cross-validation was used to select the best $\lambda$ for each method. 

Instead, the superior test performance of \textsc{BKerNN} underscores its effective optimisation process, avoiding the pitfalls of local minima that seem to trap \textsc{ExpKerNN} and \textsc{GaussianKerNN}. Our observations in Figure~\ref{fig:exp1} strongly support our discussion in Section~\ref{sec:optim_guarantees} on the critical role of the positive homogeneity of the kernel in ensuring convergence to a global minimum. 
\label{sec:exp1}
\begin{figure}
    \centering
    \includegraphics[width=\textwidth]{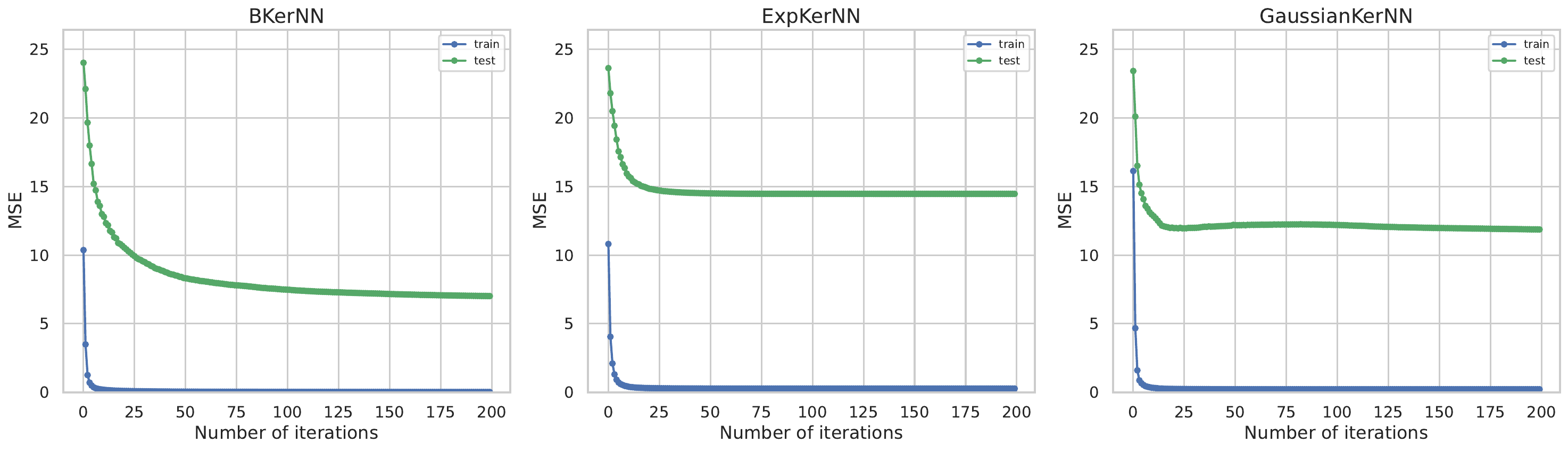}
    \caption{MSE across optimisation procedure for different kernels.}
    \label{fig:exp1}
\end{figure}

\subsection{Experiments 2 \& 3: Influence of Parameters (Number of Particles \texorpdfstring{$m$}{m}, Regularisation Parameter \texorpdfstring{$\lambda$}{lambda}, and Type of Regularisation)}
\label{sec:exp2&3}

In these experiments, we explore the impact of various parameters on the performance of \textsc{BKerNN}. Detailed descriptions can be found in Appendix~\ref{app:num_exp2&3}, and the results are presented in Figure~\ref{fig:exp2&3}. The $R^2$ score used to assess performance is described in Equation~\eqref{eq:r2}.

\subsubsection{Experiment 2}
The first two subplots of Figure~\ref{fig:exp2&3} illustrate the effects of the number of particles $m$ and the regularisation parameter $\lambda$ while keeping the data generation process consistent. The data set is the same for the two subplots. We used 412 training samples and 1024 test samples, with a data dimensionality of $d=20$ and $k=5$ relevant features. The standard deviation of additive Gaussian noise was set to $0.1$. The covariates were sampled uniformly from $[-1, 1]^d$. The target variable $y$  was computed using the formula $y = \sum_{a=1}^{k} \left|2\pi x_a\right| + { \rm noise}$.

Number of Particles ($m$): the first subplot shows that with too few particles, the estimator struggles to fit the training data, leading to poor performance on the test set. However, beyond a certain threshold, increasing the number of particles does not yield significant improvements in performance.
    
Regularisation Parameter ($\lambda$): the second subplot demonstrates the typical behaviour of a regularised estimator. When $\lambda$ is too small, the model overfits the training data, resulting in poor test performance. Conversely, when $\lambda$ is too large, the model underfits, performing poorly on both the training and test sets. Optimal performance on both sets is achieved with an intermediate value of $\lambda$.

\subsubsection{Experiment 3}
The third subplot in Figure~\ref{fig:exp2&3} examines the influence of the type of regularisation across three distinct data-generating mechanisms: (1) without underlying features, i.e., where all of the data is needed, (2) with few relevant variables, (3) with few relevant features. We used 214 training samples and 1024 test samples, with a data dimensionality of $d=20$ and $k=5$ relevant features. The standard deviation of additive Gaussian noise was set to $0.5$, and the data set was generated  20 times with different seeds. The covariates were always sampled uniformly on $[-1,1]^d$ but the response was generated in three different ways. In the ``no underlying structure'' data set, we had $ y = \sum_{a=1}^{d} \sin(X_a) + { \rm noise}$. In the ``few relevant variables'' data set, we had $ y = \sum_{a=1}^{k} \sin(x_a) + { \rm noise}$. In the ``few relevant features data set'', we sampled~$P$ a $d\times d$ matrix from the orthogonal group uniformly, truncated it to size $d \times k$ and the response was generated as $ y = \sum_{a=1}^{k} \sin((P^\top x)_a) + { \rm noise}$. The mean and standard deviation of the~$R^2$ score on the test set are reported.

When there is no underlying structure, all regularisers perform somewhat similarly. However, for data sets featuring relevant variables, the $\Omega_{{ \rm variable}}$ and $\Omega_{{ \rm concave \ variable}}$ regularisations shine, delivering superior performance. Similarly for the $\Omega_{{ \rm feature}}$ and $\Omega_{{ \rm concave \ feature}}$ regularisations on data sets with few relevant features. Remarkably, for data with underlying structure, the concave versions of both $\Omega_{{ \rm variable}}$ and $\Omega_{{ \rm feature}}$ regularisations outperform their non-concave counterparts. This demonstrates their superior ability to effectively select relevant information in the data while maintaining strong predictive power.

\begin{figure}
    \centering
    \includegraphics[width=\textwidth]{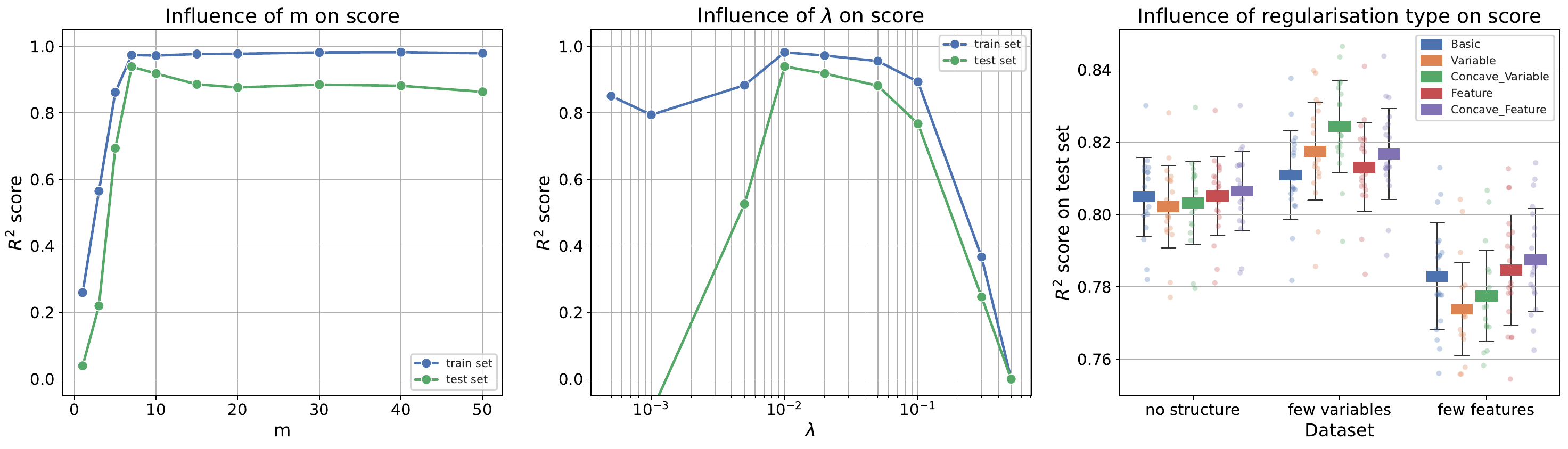}
    \caption{Influence of parameters: left: $m$, middle: $\lambda$, right: type of penalty.}
    \label{fig:exp2&3}
\end{figure}

\subsection{Experiment 4: Comparison to Neural Network on 1D Examples, Influence of Number of Particles/Width of Hidden Layer \texorpdfstring{$m$}{m}}
\label{sec:exp4}

In Experiment 4, we compare the learning capabilities of \textsc{BKerNN} against a simple neural network, \textsc{ReLUNN}. We study three distinct functions, corresponding to each row in Figure~\ref{fig:exp4}. In all rows, the training set is represented by small black crosses, while the target function is shown in blue. The first two columns depict \textsc{BKerNN} using two different numbers of particles: $m=1$ and $m=5$. The last three columns show results for \textsc{ReLUNN} with varying numbers of neurons in the hidden layer: 1, 5, and 32. See Appendix~\ref{app:num_exp4} for more experimental details.

Notably, \textsc{BKerNN} demonstrates great learning capabilities, successfully capturing the functions even with just one particle. Increasing the number of particles (second column) offers minimal additional benefit, underscoring \textsc{BKerNN}'s efficiency. In stark contrast, \textsc{ReLUNN} struggles significantly when limited to the same number of hidden neurons as \textsc{BKerNN}'s particles. However, once the hidden layer is expanded to 32 neurons, \textsc{ReLUNN} begins to show satisfactory learning capabilities. These results highlight \textsc{BKerNN}'s superior efficiency in learning functions with a minimal number of particles, outperforming \textsc{ReLUNN}, which requires a more complex architecture to achieve comparable performance.

\subsection{Experiment 5: Prediction Score and Feature Learning Score Against Growing Dimension and Sample Size, a Comparison of \textsc{BKerNN} with Brownian Kernel Ridge Regression and a ReLU Neural Network}
\label{sec:exp5}

In Experiment 5, we evaluate the performance of \textsc{BKerNN}, \textsc{B\textsc{KRR}}  and \textsc{ReLUNN} across varying sample sizes and dimensions on simulated data sets. The estimators are presented in Section~\ref{sec:intro_num_exp}. The $R^2$ and feature learning score used to assess performance are described in Equations~\eqref{eq:r2} and \eqref{eq:feature_score} respectively. The results are presented in Figure~\ref{fig:exp5}. For more details about the experiment, see Appendix~\ref{app:num_exp5}.

\begin{figure}
    \centering
    \includegraphics[width=\textwidth]{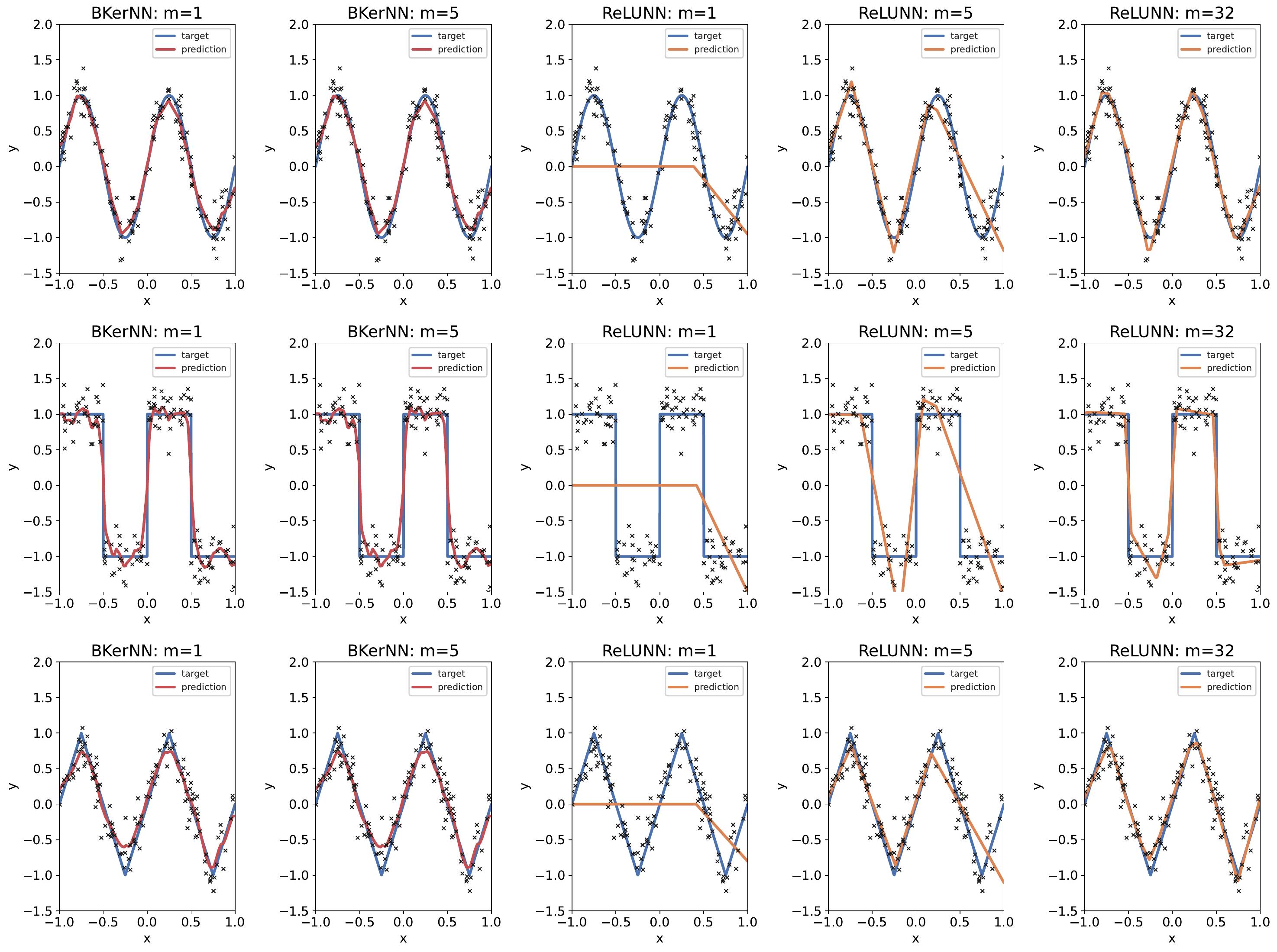}
    \caption{Comparison to neural network on 1D examples.}
    \label{fig:exp4}
\end{figure}

The two subplots on the top row of Figure~\ref{fig:exp5} show the effect of increasing the sample size while keeping the dimension fixed. In the two subplots of the bottom row, the sample size is fixed, and the dimension is increased. For each combination of sample size and dimension, ten data sets were generated. We display the two scores of each method on each data set, as well as the average score across data sets. The feature learning score for \textsc{BKRR} is not defined and, therefore, not displayed. The number of particles (for \textsc{BKerNN}) and hidden neurons (for \textsc{ReLUNN}) is fixed at 50 across all experiments.

For all the data sets, the covariates were uniformly sampled in $[-1,1]^d$, the underlying features matrix $P$ was uniformly sampled from the orthogonal group, then truncated to have~$k=3$ relevant features, and the response was set as $ y = \left|\sum_{a=1}^k \sin\left((P^\top x)_a\right)\right|$.

In the first two subplots, the dimension is fixed at 15. As the sample size increases, we observe improvements in the prediction scores of all three methods. However, the prediction score of \textsc{BKRR} improves at a much slower pace. Both \textsc{BKerNN} and \textsc{ReLUNN} achieve high prediction scores more rapidly, with \textsc{BKerNN} requiring fewer samples to do so. Notably, \textsc{BKerNN} excels in feature learning, effectively capturing the underlying feature space, while \textsc{ReLUNN} fails regardless of the number of samples.

In the last two subplots, where the sample size is fixed at 212, we notice a general decline in performance as the dimension increases. \textsc{B\textsc{KRR}} shows the most rapid deterioration because it cannot learn features, struggling significantly with higher dimensions. In contrast, \textsc{BKerNN} demonstrates remarkable resilience to increasing dimensionality, maintaining better performance compared to the other methods. \textsc{ReLUNN} falls somewhere in between, neither as robust as \textsc{BKerNN} nor as weak as \textsc{B\textsc{KRR}}. Similarly, for the feature learning score, both \textsc{BKerNN} and \textsc{ReLUNN} show decreased performance, but \textsc{BKerNN} is slightly less affected, underscoring its ability to handle high-dimensional data.

\begin{figure}
    \centering
   \includegraphics[width=\textwidth]{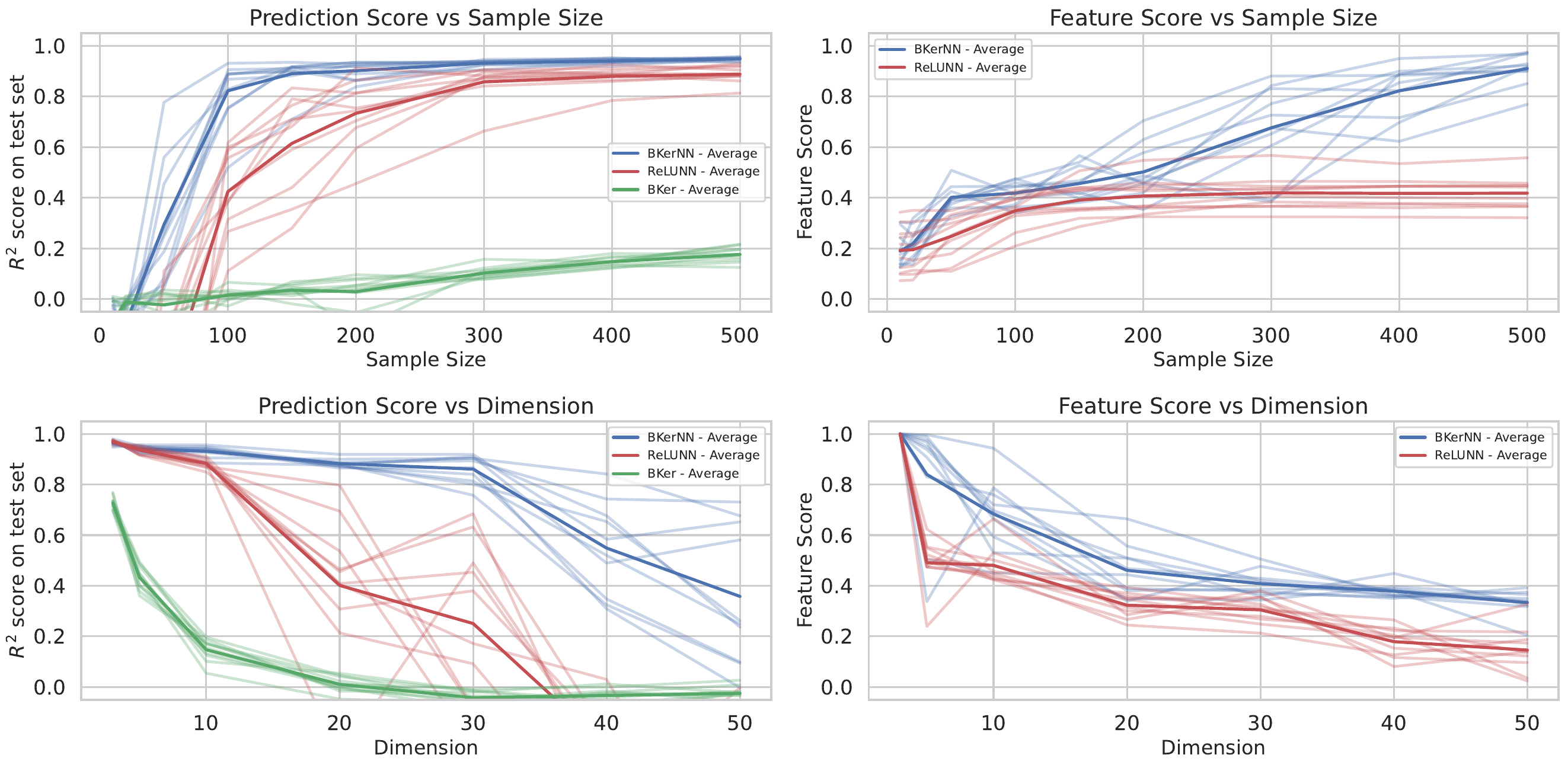}
    \caption{Performance comparison across varying sample sizes and dimensions.}
    \label{fig:exp5}
\end{figure}

\subsection{Experiment 6: Comparison on Real Data Sets Between \textsc{BKerNN}, Brownian Kernel Ridge Regression and a ReLU Neural Network}
\label{sec:numerical_real_data set}

In Experiment 6, we evaluate the $R^2$ scores, defined in Equation~\eqref{eq:r2}, of four methods: \textsc{BKRR}, \textsc{BKerNN} with concave variable regularisation, \textsc{BKerNN} with concave feature regularisation, and \textsc{ReLUNN}, across 17 real-world data sets. These data sets were obtained from the tabular benchmark numerical regression suite via the OpenML platform, as described by \citet{grinch}. Each data set was processed to include only numerical variables and rescaled to have centred covariates with standard deviation equal to one. The data sets were uniformly cropped to contain 400 training samples and 100 testing samples \red{(except for a few datasets, see Appendix~\ref{app:num_exp6})}, with dimensionality varying across data sets as shown in Figure~\ref{fig:exp6}.  For both \textsc{BKerNN} and \textsc{ReLUNN}, the number of particles or hidden neurons was set to twice the dimension of each data set \red{(except for the dataset with the highest dimension, ``semeion'', $d=256$, where it was fixed to a 100)}, while the training parameters were fixed. Details are available in Appendix~\ref{app:num_exp6}.

The results indicate that \textsc{B\textsc{KRR}} often performs the worst among all methods. In contrast, \textsc{BKerNN} with concave feature regularisation and \textsc{ReLUNN} frequently emerge as the best estimators, performing similarly well on average across the various data sets.

\begin{figure}
    \centering
    \includegraphics[width=\textwidth]{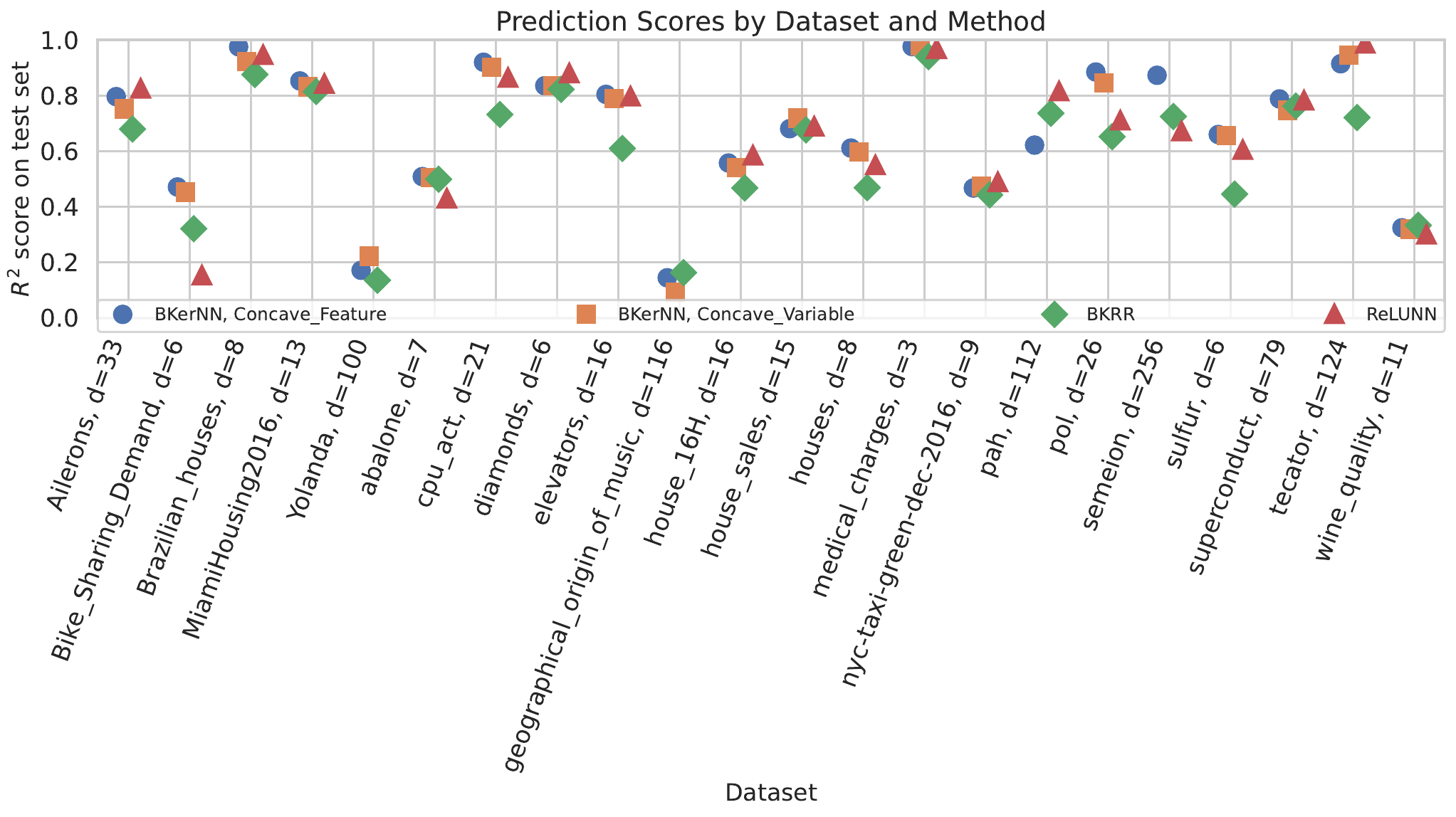}
    \caption{Comparison of $R^2$ scores on real data sets.}
    \label{fig:exp6}
\end{figure}

\section{Conclusion}
\label{sec:conclusion}
To conclude, we have introduced a novel framework for feature learning and function estimation in supervised learning, termed Brownian kernel neural network (\textsc{BKerNN}). By leveraging regularised empirical risk minimisation over averages of Sobolev spaces on one-dimensional projections of the data, we established connections to kernel ridge regression and infinite-width one-hidden layer neural networks. We provide a \red{particle-based} computational method for \textsc{BKerNN}, emphasising the importance of the positive homogeneity of the Brownian kernel. Through rigorous theoretical analysis, we demonstrated that, in the well-specified setting for subgaussian data, \textsc{BKerNN} achieves convergence of its expected risk to the minimal risk with explicit rates, potentially independent of the data dimension, underscoring the efficacy of our approach. We have extensively discussed the relationship between the space of functions we propose and other classical functions spaces. Numerical experiments across simulated scenarios and real data sets confirm \textsc{BKerNN}'s superiority over traditional kernel ridge regression and competitive performance with neural networks employing ReLU activations, achieved with fewer particles or hidden neurons. Future research directions include the development of more efficient algorithms for the computation of the estimator, improved analysis of the Gaussian complexity, and theoretical investigation of other penalties.

% Acknowledgements and Disclosure of Funding should go at the end, before appendices and references

\acks{B. Follain would like to thank Adrien Taylor and David Holzmüller for fruitful discussions regarding this work. We acknowledge support from the French government under the management of the Agence Nationale de
la Recherche as part of the “Investissements d’avenir” program, reference ANR-19-P3IA-0001 (PRAIRIE
3IA Institute). We also acknowledge support from the European Research Council (grant SEQUOIA
724063).}

% Manual newpage inserted to improve layout of sample file - not
% needed in general before appendices/bibliography.

\vskip 0.2in

\appendix
\section{Extra Lemmas and Proofs}
\label{app:proofs}
In this appendix, we present and/or prove some of the results needed in the main text.

\subsection{Proofs of Section~\ref{sec:charact_f_infty} Lemmas}
\label{sec:proof_lemma_caracterisation_f_infty}

Here we give the proofs of the lemmas describing characteristics of the function space $\mathcal{F}_\infty$.

\subsubsection{Proof of Lemma~\ref{lemma:caracterisation_f_infty}}
\label{proof:lemma:caracterisation_f_infty}
\begin{proof}[Proof of Lemma~\ref{lemma:caracterisation_f_infty}]
We first check that $\mathcal{F}_\infty$ is a vector space.

\red{Let $f \in \mathcal{F}_\infty$, a constant $c$ and a sequence of measures $(\mu_i)_{i \in \mathbb{N}}$ defining $f$ (i.e. $\forall i \in \mathbb{N}, f = c + \int_{\mathcal{H}\times \mathcal{S}^{d-1}} g(w^\top \cdot) {\rm d}\mu_i(g,w) )$ such that $\Omega_0(f) = \lim_{i \to \infty} \int_{\mathcal{H}\times \mathcal{S}^{d-1}} \|g\|_\mathcal{H} {\rm d} \mu_i (g,w)$. Recall that $c$ is unique for a given $f$ as it is equal to $f(0)$. }

\red{Then, let $\tau \in \mathbb{R}$. We have $\forall i \in \mathbb{N}, \tau f = \tau c + \int_{\mathcal{H}\times \mathcal{S}^{d-1}} \tau g(w^\top \cdot) {\rm d} \mu_i(g,w) = \tau c + \int_{\mathcal{H}\times \mathcal{S}^{d-1}} \tilde{g}(w^\top \cdot) {\rm d} \tilde{\mu_i}(\tilde{g},w)$, with the change of variable $\tilde{g} = \tau g$ inducing the change on the measure ${\rm d}\tilde{\mu_i}(\tilde{g},w) = {\rm d }\mu_i( \tilde{g}/\tau, w) $ and $\tilde{\mu_i}$ is still a probability measure  (using the pushforward measure technique, see \citet{peyré2019computational}) . We also see that for any $i$, 
\begin{equation*}
  \int_{\mathcal{H}\times \mathcal{S}^{d-1}} \|\tilde{g}\|_\mathcal{H} {\rm d}\tilde{\mu_i}(\tilde{g},w) = |\tau| \int_{\mathcal{H}\times\mathcal{S}^{d-1}} \|g\|_\mathcal{H} {\rm d}\mu_i(g,w), 
\end{equation*}
and hence since $|\tau|\Omega_0(f) = |\tau |\lim_{i \to \infty} \int_{\mathcal{H}\times \mathcal{S}^{d-1}} \|g\|_\mathcal{H} {\rm d} \mu_i (g,w)$, we have 
\begin{equation*}
\lim_{i \to \infty}  \int_{\mathcal{H}\times \mathcal{S}^{d-1}} \|\tilde{g}\|_\mathcal{H} {\rm d}\tilde{\mu_i}(\tilde{g},w) = |\tau| \Omega_0(f)    
\end{equation*}
and therefore $\Omega_0(\tau f) < \infty$ and $\Omega_0(\tau f) \leq |\tau| \Omega_0(f)$, which means that $\tau f$ belongs to $\mathcal{F}_\infty$. Now we can also use this property to obtain that $\Omega_0(f) = \Omega_0(\frac{1}{\tau} \tau f) \leq \frac{1}{|\tau|} \Omega_0(\tau f) \leq \Omega_0(f)$ and hence $\Omega_0(\tau f)$ is actually equal to $|\tau| \Omega_0(f)$.}

\red{Now let $f_1, f_2 \in \mathcal{F}_\infty$, defined respectively using $c_1, c_2 \in \mathbb{R}$ and two sequences of measures $(\mu_i^{(1)})_{i \in \mathbb{N}}, (\mu_i^{(2)})_{i \in \mathbb{N}}$ such that $\Omega_0(f_1) = \lim_{i \to \infty}  \int_{\mathcal{H}\times \mathcal{S}^{d-1}} \|g\|_\mathcal{H} {\rm d} \mu_i^{(1)} (g,w)$ and $\Omega_0(f_2) = \lim_{i \to \infty}  \int_{\mathcal{H}\times \mathcal{S}^{d-1}} \|g\|_\mathcal{H} {\rm d} \mu_i^{(2)} (g,w)$. Now for any $i \in \mathbb{N}$, we have 
\begin{equation*}
    (f_1 + f_2)(\cdot) = c_1 + c_2 + \int_{\mathcal{H}\times \mathcal{S}^{d-1}} g(w^\top \cdot) {\rm d}\mu_i^{(1)}(g,w) + \int_{\mathcal{H}\times \mathcal{S}^{d-1}}g(w^\top \cdot) {\rm d}\mu_i^{(2)}(g,w),
\end{equation*}
and hence   $(f_1 + f_2)(\cdot)  = (c_1 + c_2) +  \int_{\mathcal{H} \times \mathcal{S}^{d-1}} g(w^\top \cdot ) {\rm d}\left(\mu_{i}^{(1)} + \mu_i^{(2)}\right)(g,w)$ and moreover,  $(f_1 + f_2)(\cdot)  = (c_1 + c_2) +  \int_{\mathcal{H} \times \mathcal{S}^{d-1}} 2g(w^\top \cdot ) {\rm d}\left(\frac{\mu_{i}^{(1)} + \mu_i^{(2)}}{2}\right)(g,w)$.}

\red{Using a change of variable as we did for $\tau f$, this means that we can write $f_1+f_2$ using a probability measure and a real constant, in the same way as the functions in $\mathcal{F}_\infty$. For $f_1+f_2$ to belong to $\mathcal{F}_\infty$, we now need to check that $\Omega_0(f_1+f_2)$ is well defined. }

\red{We first consider the function $ \tilde{f} :=\int_{\mathcal{H} \times \mathcal{S}^{d-1}} g(w^\top \cdot ) {\rm d}\left(\frac{\mu_{i}^{(1)} + \mu_i^{(2)}}{2}\right)(g,w)$ (for any $i \in \mathbb{N})$. We have that 
\begin{equation*}
    \lim_{i \to \infty} \int_{\mathcal{H}\times \mathcal{S}^{d-1}} \|g\|_\mathcal{H} {\rm d }\left(\frac{\mu_{i}^{(1)} + \mu_i^{(2)}}{2}\right)(g,w) = \frac{\Omega_0(f_1) + \Omega_0(f_2)}{2},
\end{equation*}
by simply splitting the integral. Therefore the function $\tilde{f}$ is such that $\Omega_0(\tilde{f}) < \infty$ and belongs to $\mathcal{F}_\infty$ with $\Omega_0(\tilde{f}) \leq  \frac{\Omega_0(f_1) + \Omega_0(f_2)}{2}$. It suffices then to notice that $f_1 + f_2 = c_1 + c_2 + 2 \tilde{f}$ (and that the intercept has no impact on the definition of $\Omega_0$) to obtain that $\Omega_0(f_1 + f_2)$ is well defined and that $\Omega_0(f_1 + f_2) \leq \Omega_0(f_1) + \Omega_0(f_2)$, hence $f_1 + f_2$ belongs to $\mathcal{F}_\infty$.}

This yields that $\mathcal{F}_\infty$ is a vector space. We also see that $\Omega(f_1 + f_2) = \max(f_1(0) + f_2(0), \Omega_0(f_1+ f_2)) \leq \Omega(f_1) + \Omega(f_2)$. Since  $\Omega(f) = 0 \iff f=0$, we have that  $\Omega$ is a norm on $\mathcal{F}_\infty$.

\red{We now check the Hölder continuity property. Let $\mu$ be any measure defining $f \in \mathcal{F}_\infty$ and $c = f(0)$, then}   
\begin{align*}
f(x)-f(x^\prime)  &= c +  \int_{\red{\mathcal{H}\times \mathcal{S}^{d-1}}} \red{g}(w^\top x) {\rm d}\mu(\red{g,w}) - c - \int_{\red{\mathcal{H}\times \mathcal{S}^{d-1}}} \red{g}(w^\top x^\prime) {\rm d}\mu(\red{g,w}) \\
  & =  \int_{\red{\mathcal{H}\times \mathcal{S}^{d-1}}} \langle \red{g}, k_{w^\top x} - k_{w^\top x^\prime}\rangle {\rm d}\mu(\red{g,w}) \\
  |f(x)-f(x^\prime)| &\leq \int_{\red{\mathcal{H} \times \mathcal{S}^{d-1}}} \|\red{g}\|_\mathcal{H} \|k_{w^\top x} - k_{w^\top x^\prime}\|_\mathcal{H} {\rm d}\mu(\red{g,w}) \\
  & \leq \int_{\red{\mathcal{H} \times \mathcal{S}^{d-1}}} \|\red{g}\|_\mathcal{H} \sqrt{|w^\top (x-x^\prime)|} {\rm d}\mu(\red{g,w}) \\
  & \leq \int_{\red{\mathcal{H} \times \mathcal{S}^{d-1}}} \|\red{g}\|_\mathcal{H} \sqrt{\|x-x^\prime\|^*} {\rm d}\mu(\red{g,w})  \leq  \Omega_0(f) \sqrt{\|x-x^\prime\|^*}.
  \end{align*}
\end{proof}
\vspace{0.0em}
\subsubsection{Proof of Lemma~\ref{lemma:caracterisation_f_infty_2}}
\label{proof:lemma:caracterisation_f_infty_2}
\begin{proof}[Proof of Lemma~\ref{lemma:caracterisation_f_infty_2}] Let us assume now that we only consider functions $f$ with support on the ball with centre $0$, radius $R$ and norm $\|\cdot\|^*$, which we denote $B(0,R)$. Then we can actually consider the functions \red{$g$} which define $\mathcal{F}_\infty$ to belong to $\mathcal{H}^\prime := \{ g: \mathbb{R} \to \mathbb{R} \dim g(0)=0, \int_{-R}^R (g^\prime(t))^2 {\rm d}t \}$, and it is still a RKHS with the same reproducing kernel. \red{Let $f \in \mathcal{F}_\infty, f = c  + \int_{\mathcal{H} \times \mathcal{S}^{d-1}} g(w^\top \cdot) {\rm d}\mu(g,w)$.} Then, because we have restrained on the ball and we are continuous, $f$ is necessarily in $L_1(B(0,R))$ (set of integrable functions) and in $L_2(B(0,R))$. It has a Fourier decomposition, with 
\begin{align*}
    f(x) = \frac{1}{(2\pi)^d}\int_{\mathbb{R}^d} \hat{f}(\omega) e^{i \omega^\top x} {\rm d } \omega,
\end{align*}
and then we have
\begin{align*}
    \Omega_0(f) \leq \frac{1}{(2\pi)^d} \int_{\mathbb{R}^d} |\hat{f}(\omega)| \Omega_0(e^{i \omega^\top x}) {\rm d }\omega
\end{align*}
and we can then study $\Omega_0(e^{i \omega^\top x})$. 

We have
$ e^{i \omega^\top x} = g_\omega\big( \frac{\omega}{\|\omega\|}^\top x\big)$ with
$g_\omega: t \in [-R, R] \to e^{i t\|\omega\|}$ which belongs to (the complex version of) $\mathcal{H}$, with $\|g_\omega\|_\mathcal{H} = \sqrt{\int_{-R}^R \|\omega\|^2 |e^{i t\|\omega\|}|^2 {\rm d }t} \leq \sqrt{2R}\|\omega\| $.

This yields 
\begin{equation*}
    \Omega_0(f) \leq \frac{\sqrt{2R}}{(2\pi)^d} \int_{\mathbb{R}^d} |\hat{f}(\omega)| \cdot \|\omega\| {\rm d }\omega .
\end{equation*}
\end{proof}
\vspace{0.0em}
\subsection{Proofs of Section~\ref{sec:learning_the_kernel_or_NN} Lemmas}
\label{sec:proof_lemma_rewrite_problem}

In this section we present the proof of lemmas used to transform the optimisation problem defining \textsc{BKerNN}.

\subsubsection{Proof of Lemma~\ref{lemma:rewrite_problem}}
\label{proof:lemma:rewrite_problem}
\begin{proof}[Proof of Lemma~\ref{lemma:rewrite_problem}]
Our goal is to transform Equation~\eqref{eq:original_optim}. We begin with the following trick for the $m$ particles setting
\begin{equation*} 
\frac{1}{m} \sum_{j=1}^m \|g_j\|_{\mathcal{H}} = \inf_{\beta \in \mathbb{R}^m_+} \frac{1}{2m} \sum_{j=1}^m \left( \frac{\|g_j\|_{\mathcal{H}}^2}{\beta_j} + \beta_j \right).
\end{equation*}

\red{This allows us to rewrite Equation~\ref{eq:original_optim} in the following way
\begin{equation*}
    \min_{c \in \mathbb{R}, w_1, \ldots, w_m \in \mathcal{S}^{d-1}, g_1, \ldots, g_m \in \mathcal{H}, \beta \in \mathbb{R}^m_+} \frac{1}{n} \sum_{i=1}^n \ell\bigg(y_i, c + \frac{1}{m} \sum_{j=1}^m g_j(w_j^\top x_i)\bigg) + \lambda\frac{1}{2m} \sum_{j=1}^m \left( \frac{\|g_j\|_{\mathcal{H}}^2}{\beta_j} + \beta_j \right).
\end{equation*}}

\red{Now if we fix $(w_j)_{j \in [m]}$ and $(\beta_j)_{j \in [m]}$ in the equation above, the minimisation problem is equivalent to }
\begin{equation}
\label{eq:optim_only_g}
    \min_{c \in \mathbb{R}, g_1, \ldots, g_m \in \mathcal{H}} \frac{1}{n} \sum_{i=1}^n \ell(y_i, c + \frac{1}{m} \sum_{j=1}^m g_j(w_j^\top x_i)) + \frac{\lambda}{2} \frac{1}{m} \sum_{j=1}^m \frac{\|g_j\|_{\mathcal{H}}^2}{\beta_j}.
\end{equation}

Using the representer theorem \citep{representer_theorem}, we express each $x \to g_j(w_j^\top x)$ as
\begin{equation*}
    x \to \sum_{i=1}^n \alpha_i^{(j)} k^{(B)}(w_j^\top x_i, w_j^\top x),
\end{equation*}
which leads to
\begin{equation*}
    \|g_j\|^2_{\mathcal{H}} = \sum_{i, i^\prime =1}^n \alpha_i^{(j)} \alpha_{i^\prime}^{(j)} k^{(B)}(w_j^\top x_i, w_j^\top x_{i^\prime}).
\end{equation*}

Rewriting the norm and evaluation in kernel form with $K^{(w_j)}_{i, i^\prime} = k^{(B)}(w_j^\top x_i, w_j^\top x_{i^\prime})$, we obtain
\begin{equation*}
    \|g_j\|^2_{\mathcal{H}} = (\alpha^{(j)})^\top K^{(w_j)} \alpha^{(j)},
\end{equation*}
and
\begin{equation*}
    g_j(w_j^\top x_i) = (K^{(w_j)} \alpha^{(j)})_i.
\end{equation*}

Thus, we transform Equation~\eqref{eq:optim_only_g} into
\begin{equation*}
\min_{c \in \mathbb{R}, \red{\alpha^{(1)}, \ldots, \alpha^{(m)} \in \mathbb{R}^{\red{n}}}} \frac{1}{n} \sum_{i=1}^n \ell(y_i, \frac{1}{m} \sum_{j=1}^m (K^{(w_j)} \alpha^{(j)})_i + c) + \frac{\lambda}{2} \frac{1}{m} \sum_{j=1}^m \frac{ (\alpha^{(j)})^\top K^{(w_j)} \alpha^{(j)}}{\beta_j}.
\end{equation*}

We show that minimisation is attained for vectors $\alpha^{(j)}$ equal to $\beta_j \alpha$ for a single vector~$\alpha$. Consider the convex problem
\begin{equation*}
    \min_{\red{\alpha^{(1)}, \ldots, \alpha^{(m)} \in \mathbb{R}^n}} \frac{1}{2} \frac{1}{m} \sum_{j=1}^m \frac{(\alpha^{(j)})^\top K^{(w_j)} \alpha^{(j)}}{\beta_j},
\end{equation*}
subject to $\frac{1}{m} \sum_{j=1}^m K^{(w_j)} \alpha^{(j)} = z$ where $z \in \mathbb{R}^d$. We define the Lagrangian
\begin{equation*}
    \mathcal{L}(\alpha^{(1)}, \cdots, \alpha^{(m)}, \alpha) = \frac{1}{2} \frac{1}{m} \sum_{j=1}^m \frac{(\alpha^{(j)})^\top K^{(w_j)} \alpha^{(j)}}{\beta_j} + \alpha^\top \left(z - \frac{1}{m} \sum_{j} K^{(w_j)} \alpha^{(j)}\right).
\end{equation*}

By taking the differential of $\mathcal{L}$ with respect to $\alpha^{(j)}$ at the optimum, we get
\begin{equation*}
\label{eq:partial_alpha_j}
  \frac{\partial \mathcal{L}}{\partial \alpha^{(j)}} = \frac{1}{m} K^{(w_j)} \left(\frac{\alpha^{(j)}}{\beta_j} - \alpha \right) = 0.
\end{equation*}

The differential with respect to $\alpha$ yields that at the optimum, the constraint is verified, i.e., $z = \frac{1}{m} \sum_{j} K^{(w_j)} \alpha^{(j)}$. We note that for $\alpha^{(j)} = \beta_j \alpha$, all equations are satisfied, yielding the desired result.

We can then write Equation~\eqref{eq:original_optim} as
\begin{equation*}
    \min_{w_1, \ldots, w_m \in \mathbb{R}^d, c \in \mathbb{R}, \beta \in \mathbb{R}^m_+, \alpha \in \mathbb{R}^n} \frac{1}{n} \sum_{i=1}^n \ell(y_i, (K \alpha)_i + c) + \frac{\lambda}{2} \alpha^\top K \alpha + \frac{\lambda}{2} \frac{1}{m} \sum_{j=1}^m \beta_j,
\end{equation*}
with the constraints $\forall j \in [m], w_j \in \mathcal{S}^{d-1}$, and $K = \frac{1}{m} \sum_{j=1}^m \beta_j K^{(w_j)}$.

We notice that $\beta_j K^{(w_j)} = K^{(\beta_j w_j)}$ due to the positive homogeneity of the Brownian kernel. We therefore introduce the change of variable $\beta_j w_j = \tilde{w}_j$
\begin{equation*}
    \min_{\tilde{w}_1, \ldots, \tilde{w}_m \in \mathbb{R}^d, c \in \mathbb{R}, \alpha \in \mathbb{R}^n} \frac{1}{n} \sum_{i=1}^n \ell(y_i, (K \alpha)_i + c) + \frac{\lambda}{2} \alpha^\top K \alpha + \frac{\lambda}{2} \frac{1}{m} \sum_{j=1}^m \|\tilde{w}_j\|,
\end{equation*}
with $K = \frac{1}{m} \sum_{j=1}^m K^{(\tilde{w}_j)}$ and no constraint on the norm of $\tilde{w}_j$. For ease of exposition in the main text, we replace $\tilde{w}$ by $w$.
\end{proof}
\vspace{0.0em}
\subsubsection{Proof of Lemma~\ref{lemma:rewrite_problem_infinite}}
\label{proof:lemma:rewrite_problem_infinite}
\begin{proof}[Proof of Lemma~\ref{lemma:rewrite_problem_infinite}]
The proof follows the same steps as the proof of Lemma~\ref{lemma:rewrite_problem}, systematically replacing any $\frac{1}{m} \sum_{j=1}^m$ with the appropriate integral over $\red{\mathcal{H}\times \mathcal{S}^{d-1}}$ with respect to measure $\mu$. Before the change of variables, the problem is
\begin{equation*}
    \min_{\mu \in \mathcal{P}(\red{\mathcal{H}\times \mathcal{S}^{d-1}}), c \in \mathbb{R}, (\beta_{\red{g}})_{\red{g}} \in \mathbb{R}_+^{\red{\mathcal{H}}}, \alpha \in \mathbb{R}^n} \frac{1}{n} \sum_{i=1}^n \ell(y_i, (K\alpha)_i + c) + \frac{\lambda}{2} \alpha^\top K \alpha + \frac{\lambda}{2} \int_{\red{\mathcal{H}\times \mathcal{S}^{d-1}}} \red{\beta_g} {\rm d}\mu(\red{g,w}),
\end{equation*}
where $K = \int_{\red{\mathcal{H} \times \mathcal{S}^{d-1}}} \beta_{\red{g}} K^{(w)} \, d\mu(\red{g,w}) = \int_{\red{\mathcal{H}\times \mathcal{S}^{d-1}}}K^{(\beta_{\red{g}} w)} \, d\mu(\red{g,w})$. The change of variables $\beta_{\red{g}} w = \tilde{w}$ transforms the problem into
\begin{equation*}
    \min_{(\beta_{\red{g}})_{\red{g}} \in \mathbb{R}_+^{\red{\mathcal{H}}}, \nu \in \mathcal{P}(\{ \beta_{\red{g}} w, \red{g \in \mathcal{H}}, w \in \mathcal{S}^{d-1} \}), c \in \mathbb{R}, \alpha \in \mathbb{R}^n}  \frac{1}{n} \sum_{i=1}^n \ell(y_i, (K\alpha)_i + c) + \frac{\lambda}{2} \alpha^\top K \alpha + \frac{\lambda}{2} \int_{\mathbb{R}^d} \|\tilde{w}\| {\rm d} \nu(\tilde{w}),
\end{equation*}
with $K = \int_{\mathbb{R}^d} K^{(\tilde{w})} {\rm d}\nu(\tilde{w})$. We can consider the integral over $\mathbb{R}^d$ instead of $\{ \beta_{\red{g}} w, \red{g \in \mathcal{H}}, w \in \mathcal{S}^{d-1} \}$ by extending $\nu$ with $\nu(\mathbb{R}^d \setminus \{ \beta_{\red{g}} w, \red{g \in \mathcal{H}}, w \in \mathcal{S}^{d-1} \}) = 0$. This is equivalent to considering the minimum over $\nu \in \mathcal{P}(\mathbb{R}^d)$ instead of the minimum over $(\beta_{\red{g}})_{\red{g \in \mathcal{H}}} \in \mathbb{R}_+^{\red{\mathcal{H}}}$ and $\nu \in \mathcal{P}(\{ \beta_{\red{g}} w, \red{g \in \mathcal{H}}, w \in \mathcal{S}^{d-1} \})$, as we discuss now.

Indeed, the first minimum is smaller as it is considered over a larger space, but they are equal because both the norm $\|\cdot\|$ and the kernel $K$ are positively homogeneous. Hence, the problem finally becomes
\begin{equation*}
    \min_{\nu \in \mathcal{P}(\mathbb{R}^d), c \in \mathbb{R}, \alpha \in \mathbb{R}^n} \frac{1}{n} \sum_{i=1}^n \ell(y_i, (K\alpha)_i + c) + \frac{\lambda}{2} \alpha^\top K \alpha + \frac{\lambda}{2} \int_{\mathbb{R}^d} \|\tilde{w}\| {\rm d}\nu(\tilde{w}),
\end{equation*}
with $K = \int_{\mathbb{R}^d} K^{(\tilde{w})} {\rm d}\nu(\tilde{w})$. 
% Learning an optimal $\nu$ yields an optimal $\mu$ by taking ${\rm d}\mu(w) = {\rm d}\nu(\{ \tilde{w} \in \mathbb{R}^d \mid \tilde{w}/\|\tilde{w}\| = w\})$. 
For ease of exposition in the main text, we replace $\tilde{w}$ by $w$.
\end{proof}
\vspace{0.0em}
\subsection{Proofs of Section~\ref{sec:optim_procedure} Lemmas}
\label{sec:proof_sec_optim_procedure}
In this section, we provide the proofs of the lemmas used to compute the estimator.

\subsubsection{Proof of Lemma~\ref{lemma:fixed_K}}
\label{proof:lemma:fixed_K}

\begin{proof}[Proof of Lemma~\ref{lemma:fixed_K}]
For a fixed $\alpha$, the optimal $c$ is given by $c = \frac{\mathds{1}^\top Y}{n} - \frac{\mathds{1}^\top K \alpha}{n}$. Substituting this back into the objective function, we obtain
\begin{align*}
    \min_{\alpha \in \mathbb{R}^n} \frac{1}{2n} \|\Pi Y - \Pi K \alpha \|_2^2 + \frac{\lambda}{2} \alpha^\top K \alpha,
\end{align*}
which is minimised for $\alpha$ satisfying $(K \Pi K + n \lambda K) \alpha = K \Pi Y$. We can further simplify this by observing that if $(\Pi K + n \lambda I) \alpha = \Pi Y$, then the previous condition is satisfied.

From the equation $n \lambda \alpha = \Pi Y - \Pi K \alpha$, we can deduce that $\Pi \alpha = \alpha$ because $\Pi^2 = \Pi$. Therefore, we can express $\alpha$ as $\alpha = \Pi \tilde{\alpha}$. Substituting this change of variable into the original problem, we define $\tilde{K} := \Pi K \Pi$ and $\tilde{Y} := \Pi Y$, transforming the problem into
\begin{align*}
    \min_{\tilde{\alpha} \in \mathbb{R}^n} \frac{1}{2n} \|\tilde{Y} - \tilde{K} \tilde{\alpha} \|_2^2 + \frac{\lambda}{2} \tilde{\alpha}^\top \tilde{K} \tilde{\alpha}.
\end{align*}

This is a standard kernel ridge regression problem (noting that $\tilde{K}$ is still a valid kernel matrix), for which the solution is known to be $\tilde{\alpha} = (\tilde{K} + n \lambda I)^{-1} \tilde{Y}$. We also have $\Pi \tilde{\alpha} = \tilde{\alpha}$, implying $\alpha = \tilde{\alpha}$ because one can show that $\mathds{1}^\top \tilde{\alpha} = 0$. To see why, note that $\mathds{1}^\top \tilde{\alpha} = \langle \mathds{1}, \tilde{\alpha} \rangle = \langle (\tilde{K} + n \lambda I)^{-1} \mathds{1}, \tilde{Y} \rangle$. Since $(\tilde{K} + n \lambda I)^{-1} \mathds{1}$ is proportional to $\mathds{1}$ (as $\mathds{1}$ is an eigenvector of $\tilde{K} + n \lambda I$ and its inverse), and $\langle \tilde{Y}, \mathds{1} \rangle = 0$, we obtain the desired result.

Finally, we verify the optimal condition $(K \Pi K + n \lambda K) \alpha = K \Pi Y$. Given $(\Pi K \Pi + n \lambda I) \tilde{\alpha} = \Pi Y$ by definition, multiplying by $K$ yields $(K \Pi K \Pi + n \lambda K) \tilde{\alpha} = K \Pi Y$. Since $\tilde{\alpha} = \alpha = \Pi \alpha$, the desired result follows.
\end{proof}
\vspace{0.0em}
\subsubsection{Proof of Lemma~\ref{lemma:derivative_of_G}}
\label{proof:lemma:derivative_of_G}

\begin{proof}[Proof of Lemma~\ref{lemma:derivative_of_G}]
First, we compute the derivative of $G = \frac{\lambda}{2}\tilde{Y}^\top (\tilde{K} + \lambda n I)^{-1} \tilde{Y}$ with respect to $w_j$
\begin{align}
\label{eq:G_derivative}
    \frac{\partial G}{\partial w_j} &= \sum_{i, i^\prime = 1}^n \frac{\partial G}{\partial K_{i, i^\prime}} \frac{\partial K_{i, i^\prime}}{\partial w_j} \nonumber \\
    &= \frac{1}{m} \sum_{i, i^\prime = 1}^n \frac{\partial G}{\partial K_{i, i^\prime}} \frac{ \left( { \rm sign}(w_j^\top x_i) x_i + { \rm sign}(w_j^\top x_{i^\prime}) x_{i^\prime} - { \rm sign}(w_j^\top (x_i - x_{i^\prime}))(x_i - x_{i^\prime}) \right)}{2}.
\end{align}

We know that
\begin{align*}
    \frac{\partial G}{\partial (\tilde{K} + \lambda n I)} &= - \frac{\lambda}{2} (\tilde{K} + \lambda n I)^{-1} \tilde{Y} \tilde{Y}^\top (\tilde{K} + \lambda n I)^{-1},
\end{align*}
thus
\begin{align*}
    \frac{\partial G}{\partial K_{i, i^\prime}} &= \sum_{l, k} \frac{\partial G}{\partial (\tilde{K} + \lambda n I)_{l, k}} \frac{\partial (\Pi K \Pi + \lambda n I)_{l, k}}{\partial K_{i, i^\prime}} \\
    &= \sum_{l, k} - \frac{\lambda}{2} ((\tilde{K} + \lambda n I)^{-1} \tilde{Y} \tilde{Y}^\top (\tilde{K} + \lambda n I)^{-1})_{l, k} \Pi_{l, i} \Pi_{i^\prime, k} \\
    &= - \frac{\lambda}{2} \left( \Pi (\tilde{K} + \lambda n I)^{-1} \tilde{Y} \tilde{Y}^\top (\tilde{K} + \lambda n I)^{-1} \Pi \right)_{i, i^\prime} \\
    &= - \frac{\lambda}{2} (\Pi (\tilde{K} + \lambda n I)^{-1} \tilde{Y})_i (\Pi (\tilde{K} + \lambda n I)^{-1} \tilde{Y})_{i^\prime}.
\end{align*}

Substituting this back into Equation~\eqref{eq:G_derivative} and introducing $S_j \in \mathbb{R}^{n \times n}$ with $(S_j)_{i, i^\prime} = ({ \rm sign}(w_j^\top x_i) x_i + { \rm sign}(w_j^\top x_{i^\prime}) x_{i^\prime} - { \rm sign}(w_j^\top (x_i - x_{i^\prime}))(x_i - x_{i^\prime})) / 2$, we get
\begin{align*}
    \frac{\partial G}{\partial w_j} 
    &= -\frac{\lambda}{2m} \sum_{i, i^\prime = 1}^n (\Pi (\tilde{K} + \lambda n I)^{-1} \tilde{Y})_i (\Pi (\tilde{K} + \lambda n I)^{-1} \tilde{Y})_{i^\prime} (S_j)_{i, i^\prime} \\
    &= -\frac{\lambda}{2m} \trace\left( (\Pi (\tilde{K} + \lambda n I)^{-1} \tilde{Y})^\top S_j (\Pi (\tilde{K} + \lambda n I)^{-1} \tilde{Y}) \right) \\
    &= -\frac{\lambda}{2m} \trace\left( ((\tilde{K} + \lambda n I)^{-1} \tilde{Y})^\top \Pi S_j \Pi ((\tilde{K} + \lambda n I)^{-1} \tilde{Y}) \right).
\end{align*}

This implies that we can replace $(S_j)_{i, i^\prime}$ with the $i$-th, $i^\prime$-th component of any matrix with the same centred version, such as $\tilde{S}_j$ where $(\tilde{S}_j)_{i, i^\prime} = -{ \rm sign}(w_j^\top (x_i - x_i^\prime))(x_i - x_i^\prime)$, yielding the desired result.
\end{proof}
\vspace{0.0em}
\subsubsection{Proof of Lemma~\ref{lemma:prox}}
\label{proof:lemma:prox}

\begin{proof}[Proof of Lemma~\ref{lemma:prox}]
We consider each penalty separately.
\begin{enumerate}
    \item For $\Omega_{{ \rm basic}}(W) = \frac{1}{2m} \sum_{j=1}^m \|w_j\|$, the penalty corresponds to a group Lasso penalty on $W \in \mathbb{R}^{d \times m}$, where the groups are the columns. The proximal operator is given by:
    \[
    \big({\rm prox}_{\lambda \gamma \Omega}(W)\big)_j = \bigg( 1 - \frac{\lambda \gamma}{2m} \frac{1}{\|w_j\|} \bigg)_+ w_j,
    \]
    as detailed in \cite[Section 3.3]{livreviolet}.
    \item For $\Omega_{{ \rm variable}}(W) = \frac{1}{2} \sum_{a=1}^{d} \big( \frac{1}{m} \sum_{j=1}^m |(w_j)_a|^2 \big)^{1/2}$, this is a group Lasso setting where the groups are the rows of $W$. The proximal operator is:
    \[
    ({\rm prox}_{\lambda \gamma \Omega}(w))^{(a)} = \bigg( 1 - \frac{\lambda \gamma}{2\sqrt{m}} \frac{1}{\|W^{(a)}\|_2} \bigg)_+ W^{(a)},
    \]
    also found in \citet[Section 3.3]{livreviolet}.
    \item For $\Omega_{{ \rm feature}}(W) = \frac{1}{2} \trace\big( \big( \frac{1}{m} \sum_{j=1}^m w_j w_j^\top \big)^{1/2} \big)$, this penalty corresponds to a Lasso penalty on the singular values. Given $W = U S V^\top$ (SVD), we have:
    \[
    {\rm prox}_{\lambda \gamma \Omega}(W) = U \tilde{S} V^\top \quad { \rm with} \quad \tilde{S} = \bigg(1 - \frac{\lambda \gamma}{2\sqrt{m} |S|} \bigg)_+ S,
    \]
    using results from \citet[Section 3.3]{livreviolet}.
    \item For $\Omega_{{ \rm concave \ variable}}(W) = \frac{1}{2s} \sum_{a=1}^d \log\big( 1 + s \big( \frac{1}{m} \sum_{j=1}^m |(w_j)_a|^2 \big)^{1/2}\big)$, the loss is separable along the $d$ dimensions. Considering each $W^{(a)}$ separately, we compute the proximal operator:
    \begin{align*}
        {\rm prox}_{\frac{\lambda \gamma}{2s} \log(1 + \frac{s}{\sqrt{m}} \|\cdot\|_2)}(W^{(a)}) &= \min_{u^{(a)} \in \mathbb{R}^m} \frac{1}{2} \|W^{(a)} - u^{(a)}\|_2^2 + \frac{\lambda \gamma}{2s} \log(1 + \frac{s}{\sqrt{m}} \|u^{(a)}\|_2).
    \end{align*}
    The subgradients of $\mathcal{L}(u^{(a)}) := \frac{1}{2} \|W^{(a)} - u^{(a)}\|_2^2 + \frac{\lambda \gamma}{2s} \log(1 + \frac{s}{\sqrt{m}} \|u^{(a)}\|_2)$ are:
    \[
    \frac{\partial \mathcal{L}}{\partial u^{(a)}} = -(W^{(a)} - u^{(a)}) + \frac{\lambda \gamma}{2s} \frac{s}{\sqrt{m}} \frac{1}{1 + \frac{s}{\sqrt{m}} \|u^{(a)}\|_2} v^{(a)},
    \]
    where $\|v^{(a)}\|_2 \leq 1$ if $u^{(a)} = 0$, and otherwise $v^{(a)} = u^{(a)} / \|u^{(a)}\|_2$.

    For $u^{(a)} \neq 0$, there is a scalar $c \in \mathbb{R}^+$ such that $u^{(a)} = c W^{(a)}$, yielding:
    \[
    c \bigg( 1 + \frac{\lambda \gamma}{2 \sqrt{m}} \frac{1}{c \|W^{(a)}\|_2} \frac{1}{1 + \frac{s c}{\sqrt{m}} \|W^{(a)}\|_2} \bigg) = 1.
    \]
    This is a second-order polynomial in $c$ that can be solved explicitly. The determinant $\Delta$ is
    \[
    \Delta = \big( 1 - \frac{s}{\sqrt{m}} \|W^{(a)}\|_2 \big)^2 - 4 \left(\frac{\lambda \gamma}{2 \sqrt{m}} \frac{1}{\|W^{(a)}\|_2} - 1 \right) \frac{s}{\sqrt{m}} \|W^{(a)}\|_2.
    \]
    When $\Delta \leq 0$, the proximal operator is $u^{(a)} = 0$. Otherwise, it suffices to compare the two possible values of $c$ and choose the one for which $\mathcal{L}$ is the smallest.

    \item For $\Omega_{{ \rm concave \ feature}}(W) = \frac{1}{2s} \sum_{a=1}^d \log\big( 1 + \frac{s}{\sqrt{m}} \sigma_a(w_1, \ldots, w_n) \big)$, we combine the results of the third and fourth items above. The proximal operator is
    \[
    {\rm prox}_{\lambda \gamma \Omega}(W) = U \tilde{S} V^\top,
    \]
    where $\tilde{S}$ is obtained by replacing all $\|W^{(a)}\|_2$ by $\sigma_a$ in the computations of the proximal of $\Omega_{{ \rm concave \ variable}}$.
\end{enumerate}
\end{proof}
\vspace{0.0em}
\subsection{Extra Lemma and Proofs Related to Section~\ref{sec:stat_analysis} Except Section~\ref{sec:bound_true_risk}}
\label{sec:proof_sec_gaussian_complex}
Here we provided the proofs of the lemmas used to bound the Gaussian complexity.

\subsubsection{Proof of Lemma~\ref{lemma:rademacher_without_integral}}
\label{proof:lemma:rademacher_without_integral}
\begin{proof}[Proof of Lemma~\ref{lemma:rademacher_without_integral}]
Recall that 
\begin{equation*}
    G_n(\{ f \in \mathcal{F}_\infty, \Omega(f) \leq D \}) =  \mathbb{E}_{\varepsilon, \mathcal{D}_n}   \bigg( \sup_{f \in \mathcal{F}_\infty, \Omega_0(f) \leq D, c \leq D}  \frac{1}{n}\sum_{i=1}^n \varepsilon_i f(x_i) \bigg).
\end{equation*}

\red{We start by considering the quantity without any expectation. Using the definitions, we obtain}
\red{\begin{align*}
     \sup_{f \in \mathcal{F}_\infty, \Omega(f) \leq D}  \frac{1}{n}\sum_{i=1}^n \varepsilon_i f(x_i)  &= \sup_{|c| \leq D, \red{\ \mu \ \text{s.t.}\int_{\mathcal{H} \times \mathcal{S}^{d-1}}\|g\|_\mathcal{H}{\rm d}\mu(g,w)} \leq D} \frac{1}{n}\sum_{i=1}^n \varepsilon_i \left( c+ \red{\int_{\mathcal{H} \times \mathcal{S}^{d-1}} g(w^\top x_i){\rm d}\mu(g,w)} \right) \\
     &=  D\frac{1}{n}\left|\sum_{i=1}^n \varepsilon_i \right| +\sup_{\ \mu \ \text{s.t.} \int_{\red{\mathcal{H} \times \mathcal{S}^{d-1}}}\red{\|g\|_\mathcal{H}{\rm d}\mu(g,w)} \leq D} \frac{1}{n}\sum_{i=1}^n \varepsilon_i \int_{\red{\mathcal{H}\times \mathcal{S}^{d-1}}} \langle \red{g}, k^{(B)}_{w^\top x_i}\rangle{\rm d}\red{\mu(g,w)}  
\end{align*}}
\red{where the last equality is obtained by splitting the sup and taking $c = D {\rm sign  }\left(\sum_{i=1}^n \varepsilon_i \right)$, which is the explicit value attaining the supremum}. For the first term, we can simply bound $ \mathbb{E}_{\varepsilon} \bigg( D \frac{1}{n}\left|\sum_{i=1}^n \varepsilon_i \right| \bigg) \leq  D\frac{1}{\sqrt{n}}$ using the gaussianity of $\varepsilon$. For the second term of the equation right above, we then have equality to

\red{\begin{align*}
&\sup_{\ \mu \ \text{s.t.} \int_{\mathcal{H}\times \mathcal{S}^{d-1}}\|g\|_\mathcal{H}{\rm d}\mu(g,w) \leq D}  \int_{\mathcal{H}\times \mathcal{S}^{d-1}} \big| \frac{1}{n}\sum_{i=1}^n \varepsilon_i \langle g, k^{(B)}_{w^\top x_i}   \rangle \big|  {\rm d}\mu(g, w)   \\
& \leq \sup_{\ \mu \ \text{s.t.} \int_{\mathcal{H}\times \mathcal{S}^{d-1}}\|g\|_\mathcal{H}{\rm d}\mu(g,w) \leq D}  \int_{\mathcal{H}\times \mathcal{S}^{d-1}} \mathds{1}_{g \neq 0}\|g\|_\mathcal{H} \big| \frac{1}{n}\sum_{i=1}^n \varepsilon_i \langle \frac{g}{\|g\|_\mathcal{H}}, k^{(B)}_{w^\top x_i}   \rangle \big|  {\rm d}\mu(g, w)  \\
& \leq \sup_{\ \mu \ \text{s.t.} \int_{\mathcal{H}\times \mathcal{S}^{d-1}}\|g\|_\mathcal{H}{\rm d}\mu(g,w) \leq D}  \int_{\mathcal{H}\times \mathcal{S}^{d-1}} \mathds{1}_{g \neq 0}\|g\|_\mathcal{H} \sup_{w \in \mathcal{S}^{d-1}, \| \tilde{g}\|_\mathcal{H} =1}\big| \frac{1}{n}\sum_{i=1}^n \varepsilon_i \langle \tilde{g}, k^{(B)}_{w^\top x_i}   \rangle \big|  {\rm d}\mu(g, w)   \\
& \leq  \sup_{\ \mu \ \text{s.t.} \int_{\mathcal{H}\times \mathcal{S}^{d-1}}\|g\|_\mathcal{H}{\rm d}\mu(g,w) \leq D}  \sup_{w \in \mathcal{S}^{d-1}, \| \tilde{g}\|_\mathcal{H} =1}\big| \frac{1}{n}\sum_{i=1}^n \varepsilon_i \langle \tilde{g}, k^{(B)}_{w^\top x_i}   \rangle \big| \int_{\mathcal{H}\times \mathcal{S}^{d-1}} \mathds{1}_{g \neq 0}\|g\|_\mathcal{H}  {\rm d}\mu(g, w)  \\
& \leq  D \sup_{w \in \mathcal{S}^{d-1}, \| \tilde{g}\|_\mathcal{H} =1}\big| \frac{1}{n}\sum_{i=1}^n \varepsilon_i \langle \tilde{g}, k^{(B)}_{w^\top x_i}   \rangle \big|   \leq  D \sup_{w \in \mathcal{S}^{d-1}, \| \tilde{g}\|_\mathcal{H} \leq 1}\big| \frac{1}{n}\sum_{i=1}^n \varepsilon_i \langle \tilde{g}, k^{(B)}_{w^\top x_i}   \rangle \big| \\
& \leq D  \sup_{w \in \mathcal{S}^{d-1}, \| \tilde{g}\|_\mathcal{H} \leq 1}  \frac{1}{n}\sum_{i=1}^n \varepsilon_i \langle \tilde{g}, k^{(B)}_{w^\top x_i}   \rangle  = D   \sup_{f = \tilde{g}(w^\top \cdot), w \in \mathcal{S}^{d-1}, \| \tilde{g}\|_\mathcal{H} \leq 1}  \frac{1}{n}\sum_{i=1}^n \varepsilon_i f(x_i).
\end{align*}}

Taking the expectation over the data set and $\varepsilon$ on both sides \red{and renaming $\tilde{g}$ as $g$ for ease of exposition} yields the desired result.
\end{proof}
\vspace{0.0em}
\subsubsection{Lemma~\ref{lemma:best_g} and its Proof}
\label{sec:lemma_best_g}
This lemma provides an explicit formula for computing the supremum over functions within the unit ball of $\mathcal{H}$, which we can then use for the calculation of Gaussian complexity.
\begin{lemma}[Optimal $g$ in Gaussian Complexity]
\label{lemma:best_g}
For any data set $(x_1, \ldots, x_n)$ and $w\in \mathbb{R}^d$, with $K^{(w)} \in \mathbb{R}^{n \times n}$ the kernel matrix of kernel $k^{(B)}$ with data $(w^\top x_1, \ldots, w^\top x_n)$ and $\varepsilon \in \mathbb{R}^n$,
\begin{align*}
\sup_{\|g\|_\mathcal{H} \leq 1} \frac{1}{n} \sum_{i=1}^n \varepsilon_i g(w^\top x_i) = \frac{1}{n} \sqrt{\varepsilon^\top K^{(w)} \varepsilon}
\end{align*}   
\end{lemma}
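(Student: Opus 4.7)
The plan is to recognise this as a standard RKHS Cauchy--Schwarz computation: the linear functional $g \mapsto \frac{1}{n}\sum_i \varepsilon_i g(w^\top x_i)$ on $\mathcal{H}$ is bounded, so its supremum over the unit ball equals its dual norm, which is computed via the reproducing property.

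First, I would invoke the reproducing property stated in Section~2.2, namely $g(a) = \langle g, k^{(B)}_a \rangle_\mathcal{H}$ for all $a \in \mathbb{R}$. Applying this pointwise and using linearity of the inner product, I rewrite
\begin{equation*}
\frac{1}{n}\sum_{i=1}^n \varepsilon_i g(w^\top x_i) = \bigg\langle g,\ \frac{1}{n}\sum_{i=1}^n \varepsilon_i k^{(B)}_{w^\top x_i} \bigg\rangle_\mathcal{H}.
\end{equation*}
Cauchy--Schwarz then gives the upper bound $\|g\|_\mathcal{H} \cdot \big\|\frac{1}{n}\sum_i \varepsilon_i k^{(B)}_{w^\top x_i}\big\|_\mathcal{H} \leq \frac{1}{n}\big\|\sum_i \varepsilon_i k^{(B)}_{w^\top x_i}\big\|_\mathcal{H}$ whenever $\|g\|_\mathcal{H}\leq 1$.

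Next, I expand the squared norm of the representer using bilinearity and the reproducing property again:
\begin{equation*}
\bigg\|\sum_{i=1}^n \varepsilon_i k^{(B)}_{w^\top x_i}\bigg\|_\mathcal{H}^2 = \sum_{i,i'=1}^n \varepsilon_i \varepsilon_{i'} \langle k^{(B)}_{w^\top x_i}, k^{(B)}_{w^\top x_{i'}}\rangle_\mathcal{H} = \sum_{i,i'=1}^n \varepsilon_i \varepsilon_{i'} k^{(B)}(w^\top x_i, w^\top x_{i'}) = \varepsilon^\top K^{(w)} \varepsilon,
\end{equation*}
giving the claimed upper bound $\frac{1}{n}\sqrt{\varepsilon^\top K^{(w)} \varepsilon}$.

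Finally, to show the supremum is attained, I exhibit an explicit maximiser: let $h := \sum_{i=1}^n \varepsilon_i k^{(B)}_{w^\top x_i} \in \mathcal{H}$ and take $g^\star := h/\|h\|_\mathcal{H}$ (with the convention that if $h=0$ the bound is trivially $0$ and any $g$ in the unit ball works). Then $\|g^\star\|_\mathcal{H} = 1$, and plugging into the inner-product form above yields exactly $\frac{1}{n}\|h\|_\mathcal{H} = \frac{1}{n}\sqrt{\varepsilon^\top K^{(w)} \varepsilon}$, matching the upper bound. There is no real obstacle here; the only point to be mildly careful about is the degenerate case $h=0$ (which occurs e.g.\ when all $w^\top x_i$ coincide with $0$), but it is handled by the same convention.
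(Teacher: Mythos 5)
Your proof is correct and follows essentially the same route as the paper's: both rewrite the sum via the reproducing property as an inner product with the representer $\sum_i \varepsilon_i k^{(B)}_{w^\top x_i}$, observe the supremum is achieved by the normalised representer, and expand its norm to get $\sqrt{\varepsilon^\top K^{(w)}\varepsilon}$. Your only addition is the explicit handling of the degenerate case $h=0$, which the paper leaves implicit.
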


\begin{proof}[Proof of Lemma~\ref{lemma:best_g}]
By applying the definitions, we obtain
\begin{align*}
\sup_{\|g\|_\mathcal{H} \leq 1 }\frac{1}{n}\sum_{i=1}^n \varepsilon_i g(w^\top x_i) &= \sup_{\|g\|_\mathcal{H} \leq 1 } \frac{1}{n}\sum_{i=1}^n \varepsilon_i  \langle g, k^{(B)}_{w^\top x_i} \rangle   =  \sup_{\|g\|_\mathcal{H} \leq 1 } \langle g, \frac{1}{n}  \sum_{i=1}^n \varepsilon_i k^{(B)}_{w^\top x_i} \rangle \\
& = \frac{1}{n} \left\langle \frac{\sum_{i=1}^n \varepsilon_i k^{(B)}_{w^\top x_i}}{\|\sum_{i=1}^n \varepsilon_i k^{(B)}_{w^\top x_i}\|_\mathcal{H} } , \sum_{j=1}^n \varepsilon_j k^{(B)}_{w^\top x_j}  \right\rangle \\
& = \frac{1}{n} \|\sum_{i=1}^n \varepsilon_i k^{(B)}_{w^\top x_i}\|_\mathcal{H} = \frac{1}{n} \sqrt{\varepsilon^\top K^{(w)}\varepsilon},
\end{align*}
which is the desired result.
\end{proof}
\vspace{0.0em}
\subsubsection{Proof of Lemma~\ref{lemma:delta_lip}}
\label{proof:lemma:delta_lip}
\begin{proof}[Proof of Lemma~\ref{lemma:delta_lip}]
Define $ g_\zeta $ such that $ g_\zeta(0) = 0 $ and $ g_\zeta^\prime(x) = \min(|g^\prime(x)|, 1/\zeta)\sign(g^\prime(x)) $. Note that $ \|g_\zeta^\prime\|_\infty \leq \frac{1}{\zeta} $, thus $ g_\zeta $ is $ \frac{1}{\zeta} $-Lipschitz. Additionally, for any $ a \in \mathbb{R} $,

\begin{align*}
    |g_\zeta(a) - g(a)| &= \left| \int_{0}^{a} \big( g_\zeta^\prime(t) - g^\prime(t) \big) \, {\rm d}t \right| \\
    &\leq \int_{0}^{a} \left| g_\zeta^\prime(t) - g^\prime(t) \right| \, {\rm d}t  \leq \int_{-\infty}^{+\infty} \mathds{1}_{|g^\prime(t)| \geq 1/\zeta} \big( |g^\prime(t)| - 1/\zeta \big) \, {\rm d}t \\
    & \red{\leq \int_{-\infty}^{+\infty} \mathds{1}_{|g^\prime(t)| \geq 1/\zeta} |g^\prime(t)| \, {\rm d}t \leq \int_{-\infty}^{+\infty} \zeta |g^\prime(t)|^2 \, {\rm d}t }\\ 
    & \red{\leq \zeta \quad { \rm since} \quad \int_{-\infty}^{+\infty} \big(g^\prime(t)\big)^2 \, {\rm d}t \leq 1},
\end{align*}
yielding the desired result.
\end{proof}
\vspace{0.0em}

\red{\subsubsection{Lemma~\ref{lemma:inspired_by_bartlett} and its Proof}}
\red{\begin{lemma}[Gaussian Complexity of Finite Set of Lipschitz Functions]
\label{lemma:inspired_by_bartlett}
\red{Let $h_1, \ldots h_M$ be 1-Lipschitz functions from $\mathbb{R}$ to $\mathbb{R}$ and let $\varepsilon$ be a random centred Gaussian vector with identity covariance matrix. Then}
\begin{align*}
\mathbb{E}_\varepsilon \bigg( \sup_{h \in \{h_1, \ldots, h_M\}, w \in \mathcal{S}^{d-1}} &\frac{1}{n} \sum_{i=1}^n \varepsilon_i h(w^\top x_i) \bigg) \\
&\leq  \mathbb{E}_\varepsilon \left(  \bigg\| \frac{\sqrt{2}}{n} \sum_{i=1}^n \varepsilon_i x_i \bigg\|^* + \sqrt{8 \frac{\sum_{i=1}^n (\|x_i\|^*)^2}{n^2}}\sqrt{2 \log M } \right).
\end{align*}
\end{lemma}}

\red{This lemma is inspired by \citet{Bartlett}.}

\begin{proof}[Proof of Lemma~\ref{lemma:inspired_by_bartlett}]
\red{We use Slepian's lemma \citep[Corollary 3.14]{talagrand}. For $h\in \{h_1, \ldots h_M\}, w \in \mathcal{S}^{d-1}$, let 
\begin{equation*}
    X_{h,w} := \frac{1}{n}\sum_{i=1}^n \varepsilon_i h(w^\top x_i) \text{ and } Y_{h,w} = \frac{\sqrt{2}}{n} \sum_{i=1}^n \varepsilon_i w^\top x_i + \sum_{j=1}^M \mathds{1}_{h=h_j} \tilde{\varepsilon}_j \sqrt{8 \frac{\sum_{i=1}^n (\|x_i\|^*)^2}{n^2}},
\end{equation*}
where $\tilde{\varepsilon}$ is a centred Gaussian vector with identity covariance matrix independent of $\varepsilon$. Notice that for $h, \Tilde{h} \in \{h_1, \ldots h_M\}, w, \Tilde{w} \in \mathcal{S}^{d-1}$, we have
\begin{align*}
    \mathbb{E}_\varepsilon& ((X_{h,w} - X_{\Tilde{h}, \Tilde{w}} )^2) = \frac{1}{n^2} \sum_{i=1}^n (h(w^\top x_i) - \Tilde{h}(\Tilde{w}^\top x_i))^2  \\
    & \leq \frac{1}{n^2} \sum_{i=1}^n  (h(w^\top x_i) -  h(\tilde{w}^\top x_i) + h(\tilde{w}^\top x_i) -\Tilde{h}(\Tilde{w}^\top x_i))^2 \\
    & \leq \frac{2}{n^2} \sum_{i=1}^n(h(w^\top x_i) -  h(\tilde{w}^\top x_i))^2 + (h(\tilde{w}^\top x_i) -\Tilde{h}(\Tilde{w}^\top x_i))^2 .
\end{align*}
We can then deal with the two terms separately. For the left term, the fact that $h$ is 1-Lipschitz yields that 
\begin{equation*}
 \frac{2}{n^2}   \sum_{i=1}^n(h(w^\top x_i) -  h(\tilde{w}^\top x_i))^2  \leq \frac{2}{n^2} \sum_{i=1}^n (w^\top x_i -  \tilde{w}^\top x_i)^2.
\end{equation*}
Then, using the fact that $h-\tilde{h}$ is $2$-Lipschitz and $h(0)=\tilde{h}(0) =0$, we have
\begin{align*}
\frac{2}{n^2} \sum_{i=1}^n (h(\tilde{w}^\top x_i) -\Tilde{h}(\Tilde{w}^\top x_i))^2 & =   \frac{2}{n^2}\sum_{i=1}^n (h(\tilde{w}^\top x_i) -\Tilde{h}(\Tilde{w}^\top x_i) -(h(0) - \tilde{h}(0))^2 \\
& \leq \frac{2}{n^2} \sum_{i=1}^n \mathds{1}_{h\neq h^\prime} 4 (w^\top x_i)^2  \leq   \mathds{1}_{h \neq \Tilde{h}} \frac{8}{n^2} \sum_{i=1}^n (\|x_i\|^*)^2 .
\end{align*}}

\red{All in all $ \mathbb{E}_\varepsilon ((X_{h,w} - X_{\Tilde{h}, \Tilde{w}} )^2) \leq \mathbb{E}_{\varepsilon, \tilde{\varepsilon}}((Y_{h,w} - Y_{\Tilde{h}, \Tilde{w}} )^2) $ therefore we can apply Slepian's lemma and obtain
\begin{align*}
& \mathbb{E}_\varepsilon \left( \sup_{h \in \{h_1, \ldots, h_M\}, w \in \mathcal{S}^{d-1}} \frac{1}{n} \sum_{i=1}^n \varepsilon_i h(w^\top x_i) \right) \nonumber \\
     &\leq \mathbb{E}_{\varepsilon, \tilde{\varepsilon}} \left( \sup_{h \in \{h_1, \ldots, h_M\}, w\in \mathcal{S}^{d-1}} \frac{\sqrt{2}}{n} \sum_{i=1}^n \varepsilon_i w^\top x_i + \sum_{j=1}^M \Tilde{\varepsilon}_j \mathds{1}_{h=h_j} \sqrt{8 \frac{\sum_{i=1}^n (\|x_i\|^*)^2}{n^2}} \right).
\end{align*}}

\red{We then remark that the first term of the expectation does not depend on $h$ and that we can take the supremum over the sphere explicitly, while the second term does not depend on $w$ and we can also take the supremum over $\{h_1, \ldots, h_M\}$ explicitly
\begin{align*}
\mathbb{E}_\varepsilon &\left( \sup_{h \in \{h_1, \ldots, h_M\}, w \in \mathcal{S}^{d-1}} \frac{1}{n} \sum_{i=1}^n \varepsilon_i h(w^\top x_i) \right) \nonumber \\
&\leq \mathbb{E}_{\varepsilon, \tilde{\varepsilon}} \left( \sup_{w \in \mathcal{S}^{d-1}} \frac{\sqrt{2}}{n} \sum_{i=1}^n \varepsilon_i w^\top x_i + \sup_{h \in \{h_1, \ldots, h_M\}} \sum_{j=1}^M \Tilde{\varepsilon}_j \mathds{1}_{h=h_j} \sqrt{8 \frac{\sum_{i=1}^n(\|x_i\|^*)^2}{n^2}} \right) \nonumber \\
& \leq \mathbb{E}_{\varepsilon, \tilde{\varepsilon}} \left( \sup_{w \in \mathcal{S}^{d-1}} \frac{\sqrt{2}}{n} \sum_{i=1}^n \varepsilon_i w^\top x_i + \sup_{j \in [M]}  \Tilde{\varepsilon}_j \sqrt{8 \frac{\sum_{i=1}^n (\|x_i\|^*)^2}{n^2}} \right) \nonumber \\
&\leq  \mathbb{E}_{\varepsilon} \left(  \bigg\| \frac{\sqrt{2}}{n} \sum_{i=1}^n \varepsilon_i x_i \bigg\|^* \right) + \sqrt{8 \frac{\sum_{i=1}^n (\|x_i\|^*)^2}{n^2}}\sqrt{2 \log M } .
\end{align*}}
\end{proof}

\subsubsection{Proof of Lemma~\ref{lemma:data_dependent}}
\label{proof:lemma:data_dependent}
\begin{proof}[Proof of Lemma~\ref{lemma:data_dependent}]
We begin with the bounded case. The bounds on the expectations are clearly valid. Then, since $1 + \sqrt{\|X\|^*}$ is a bounded variable, it is necessarily subgaussian with a variance proxy bounded by $\frac{(1 + \sqrt{R})^2}{2\log(2)} \leq (1 + \sqrt{R})^2$ \citep[Proposition 2.5.2 (iv)]{Vershynin_2018}.

Next, we consider the subgaussian case. Using the Cauchy-Schwarz inequality, we handle the case where $\|\cdot\|^* = \|\cdot\|_2$ using \citet[Proposition 2.5.2]{Vershynin_2018}
\[
\sqrt{\mathbb{E}_X (\|X\|_2)} \leq \big(\mathbb{E}_X (\|X\|_2^2)\big)^{1/4} \leq \sqrt{6} \big(\sum_{a=1}^d \sigma_a^2\big)^{1/4}.
\]

For the $\|\cdot\|_\infty$ case, applying \citet[Exercise 2.5.10]{Vershynin_2018} with the  constant made explicit yields the desired result.

For the second expectation with $\|\cdot \|^* = \|\cdot \|_2$, we have
\begin{align*}
\mathbb{E}_{\mathcal{D}_n}\big(\max_{i\in [n]} \|X_i\|_2^2\big) &= \mathbb{E} \max_{i \in [n]} \sum_{a=1}^d ((X_i)_a)^2 \leq \sum_{a=1}^d \mathbb{E} \max_{i \in [n]} \big((X_i)_a\big)^2 \\
& \leq \sum_{a=1}^d \frac{1}{t} \log\big( \mathbb{E} \big( e^{t \max_{i \in [n]} ((X_i)_a)^2} \big) \big)  \leq \sum_{a=1}^d \frac{1}{t} \log \big( n \mathbb{E} \big( e^{t ((X_i)_a)^2} \big) \big),
\end{align*}
for all $t > 0$. We can then bound this by $\sum_{a=1}^d \frac{1}{t} \log (n e^{t (6\sqrt{2e}\sigma_a)^2})$ for $t < 1/(6\sqrt{2e}\sigma_a)^2$, yielding:
\[
\mathbb{E}_{\mathcal{D}_n}\big(\max_{i\in [n]} \|X_i\|_2^2\big) \leq 72e(1 + \log(n)) \sum_{a=1}^d \sigma_a^2.
\]
The same proof technique applies to $\mathbb{E}_{\mathcal{D}_n}\big(\max_{i\in [n]} \|X_i\|_\infty^2\big)$, yielding the desired result.

Finally, we consider the subgaussianity of $1 + \sqrt{\|X\|^*}$. Note that the sum of two subgaussian variables is subgaussian. Using \citet[Proposition 2.5.2 (ii)]{Vershynin_2018}, for two real random variables $Z$ and $\Tilde{Z}$ with variance proxies $\sigma^2$ and $\Tilde{\sigma}^2$ respectively, we have that $Z + \Tilde{Z}$ is subgaussian with variance proxy $(\sigma + \Tilde{\sigma})^2$. Additionally, the absolute value of a subgaussian variable is also subgaussian with the same variance proxy \citep[Proposition 2.5.2]{Vershynin_2018}.

For $\|\cdot\| = \|\cdot\|_2$, we have $1 + \sqrt{\|X\|_2} \leq 1 + \sum_{a=1}^d |X_a|$. Since 1 and $X_a$ are subgaussian variables, this yields the desired result.

For $\|\cdot \|_\infty$, for all $t > 0$,
\begin{align*}
\mathbb{P}(\|X\|_\infty \geq \sqrt{2\sigma^2 \log(2d)} + t) &\leq 2de^{-\frac{(\sqrt{2\sigma^2 \log(2d)} + t)^2}{2\sigma^2}} \\
& \leq 2e^{-\frac{t^2}{2\sigma^2} - \frac{t\sqrt{\log(2d)}}{\sqrt{2\sigma^2}}}   \leq 2e^{-\frac{t^2}{2\sigma^2}}.
\end{align*}
Thus, $\|X\|_\infty - \sqrt{2\sigma^2\log(2d)}$ is subgaussian with variance proxy $\sigma^2$. Therefore, $\|X\|_\infty$ is subgaussian with variance proxy bounded by $\sigma^2(1 + \sqrt{\log(2d)})^2$. Then, $1 + \sqrt{\|X\|_\infty}$ is subgaussian because it is less than $2 + \|X\|_\infty$, which is subgaussian \citep[Proposition 2.5.2]{Vershynin_2018} with a variance proxy bounded by that of $2 + \|X\|_\infty$, yielding the desired result.
\end{proof}
\vspace{0.0em}
\subsection{Lemmas Needed for Section~\ref{sec:bound_true_risk} and their Proofs}
Here we provide lemmas necessary for the proof of Theorem~\ref{theo:last} and the analysis of its distribution-dependent terms.
\subsubsection{Lemma~\ref{lemma:constrained_predictor} and its Proof}
Lemma~\ref{lemma:constrained_predictor} relates the Gaussian complexity to useful quantities to bound the expected risk.
\begin{lemma}{(Use of Gaussian Complexity)}
\label{lemma:constrained_predictor}
Let $D > 0$ and the data set $\mathcal{D}_n= (x_i,y_i)_{i \in [n]}$ consists of i.i.d.~samples of the random variable $(X,Y) \in \mathcal{X}\times \mathcal{Y}$. Assume that the loss $\ell$ is $L$-Lipschitz in its second (bounded) argument, i.e., $\forall y \in \mathcal{Y}, a \in \{ f(x) \mid x \in \mathcal{X}, \ f \in \mathcal{F}_\infty, \ \Omega(f) \leq D \}, a \to \ell(y, a)$ is $L$-Lipschitz. Then, we have
\begin{equation*}
  \mathbb{E}_{\mathcal{D}_n}\left(\sup_{f \in \mathcal{F}_\infty, \Omega(f) \leq D} \widehat{\mathcal{R}}(f) - \mathcal{R}(f) + \sup_{f \in \mathcal{F}_\infty, \Omega(f) \leq D} \mathcal{R}(f) -  \widehat{\mathcal{R}}(f) \right) \leq 6 DL\left(\frac{1}{\sqrt{n}} + G_n\right).
\end{equation*}
\end{lemma}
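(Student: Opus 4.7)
\textbf{Proof plan for Lemma~\ref{lemma:constrained_predictor}.} The strategy is the standard chain of symmetrization, contraction, and Gaussian comparison, followed by an application of Lemma~\ref{lemma:rademacher_without_integral}. Write $\mathcal{F}_D := \{f \in \mathcal{F}_\infty \mid \Omega(f) \leq D\}$ and note that, by assumption, on the range $\{f(x) : x \in \mathcal{X}, f \in \mathcal{F}_D\}$ the loss $\ell(y,\cdot)$ is $L$-Lipschitz for every $y$. I will bound each of the two suprema inside the expectation by $3LD\bigl(n^{-1/2} + G_n\bigr)$, which yields the claim by adding them.

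First, I introduce an independent ghost sample $(x_i',y_i')_{i\in[n]}$ with the same distribution as the data, together with i.i.d.\ Rademacher variables $\varepsilon_i$. A standard symmetrization argument (see, e.g., Bach's book, Proposition~4.2) gives
\[
\mathbb{E}_{\mathcal{D}_n}\!\bigl[\sup_{f\in\mathcal{F}_D}\bigl(\widehat{\mathcal{R}}(f)-\mathcal{R}(f)\bigr)\bigr]\leq 2\,\mathbb{E}_{\mathcal{D}_n,\varepsilon}\!\Bigl[\sup_{f\in\mathcal{F}_D}\frac{1}{n}\sum_{i=1}^n\varepsilon_i\,\ell(y_i,f(x_i))\Bigr],
\]
and identically for the reversed supremum $\sup_f(\mathcal{R}(f)-\widehat{\mathcal{R}}(f))$.

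Second, I apply the Ledoux--Talagrand contraction principle (for example, Proposition~4.3 in Bach's book) with the $L$-Lipschitz maps $a\mapsto \ell(y_i,a)$, noting that subtracting the constant $\ell(y_i,0)$ leaves the Rademacher average unchanged. This yields
\[
\mathbb{E}_{\varepsilon}\!\Bigl[\sup_{f\in\mathcal{F}_D}\frac{1}{n}\sum_{i=1}^n\varepsilon_i\ell(y_i,f(x_i))\Bigr]\leq L\,\mathbb{E}_{\varepsilon}\!\Bigl[\sup_{f\in\mathcal{F}_D}\frac{1}{n}\sum_{i=1}^n\varepsilon_i f(x_i)\Bigr] = L\,R_n(\mathcal{F}_D),
\]
where $R_n$ denotes the Rademacher complexity. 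Then I use the classical bound $R_n(\mathcal{G})\leq\sqrt{\pi/2}\,G_n(\mathcal{G})$, obtained by writing $\varepsilon_i = \mathrm{sign}(g_i)$ with $g_i\sim\mathcal{N}(0,1)$ and inserting $\mathbb{E}|g_i|=\sqrt{2/\pi}$ via Jensen's inequality inside the supremum.

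Third, I invoke Lemma~\ref{lemma:rademacher_without_integral} to obtain $G_n(\mathcal{F}_D)\leq D(n^{-1/2}+G_n)$. Combining the three steps, each of the two suprema is bounded in expectation by $2L\sqrt{\pi/2}\,D(n^{-1/2}+G_n)\leq 3LD(n^{-1/2}+G_n)$, and summing them gives the claimed factor~$6$. The only genuine subtlety is the contraction step: the standard statement requires the Lipschitz maps to be defined on the whole real line, whereas here $\ell(y_i,\cdot)$ is only assumed Lipschitz on the range $\{f(x):f\in\mathcal{F}_D,\,x\in\mathcal{X}\}$. Since every $f\in\mathcal{F}_D$ takes values in that range by definition, one may replace $\ell(y_i,\cdot)$ by any global $L$-Lipschitz extension without altering the Rademacher average, so this causes no issue; everything else is a routine application of textbook inequalities.
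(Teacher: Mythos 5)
Your proof follows essentially the same route as the paper's: symmetrization (paper's Proposition 4.2 of Bach), contraction (Proposition 4.3), Rademacher-to-Gaussian conversion via the $\sqrt{\pi/2}$ factor (Wainwright, Exercise 5.5), and then the paper's Lemma~\ref{lemma:rademacher_without_integral}, yielding the same constant $6$ after the bound $4\sqrt{\pi/2}\leq 6$. The only superficial difference is that you bound the two suprema separately (a factor $2$ each) rather than the combined supremum with a single factor $4$, and you spell out the minor technical points (recentering so $\phi_i(0)=0$, and extending the loss to an $L$-Lipschitz function on all of $\mathbb{R}$) that the paper leaves implicit — both arguments are correct.
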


\begin{proof}[Proof of Lemma~\ref{lemma:constrained_predictor}]
By \citet[Proposition 4.2]{francis_book}, we have
\begin{align*}
     &\mathbb{E}_{\mathcal{D}_n}\left(\sup_{f \in \mathcal{F}_\infty, \Omega(f) \leq D} \widehat{\mathcal{R}}(f) - \mathcal{R}(f) + \sup_{f \in \mathcal{F}_\infty, \Omega(f) \leq D}  \mathcal{R}(f) - \widehat{\mathcal{R}}(f) \right) \\&\leq 4 \mathbb{E}_{\tilde{\varepsilon}, \mathcal{D}_n} \left( \sup_{f \in \mathcal{F}_\infty, \Omega(f) \leq D} \frac{1}{n}\sum_{i=1}^n \varepsilon_i \ell(y_i, f(x_i)) \right),
\end{align*}
where $\tilde{\varepsilon}$ consists of i.i.d.~Rademacher variables. 

Next, applying the contraction principle from \citet[Proposition 4.3]{francis_book}, we get
\begin{equation*}
    \mathbb{E}_{\tilde{\varepsilon}, \mathcal{D}_n} \left( \sup_{f \in \mathcal{F}_\infty, \Omega(f) \leq D} \frac{1}{n}\sum_{i=1}^n \varepsilon_i \ell(y_i, f(x_i)) \right) \leq  \mathbb{E}_{\tilde{\varepsilon}, \mathcal{D}_n} \left( \sup_{f \in \mathcal{F}_\infty, \Omega(f) \leq D} \frac{1}{n}\sum_{i=1}^n \tilde{\varepsilon}_i f(x_i) \right).
\end{equation*}

Then, using \citet[Exercise 5.5]{Wainwright_2019}, we have
\begin{equation*}
    \mathbb{E}_{\tilde{\varepsilon}, \mathcal{D}_n} \left( \sup_{f \in \mathcal{F}_\infty, \Omega(f) \leq D} \frac{1}{n}\sum_{i=1}^n \tilde{\varepsilon}_i f(x_i) \right) \leq \sqrt{\frac{\pi}{2}} \mathbb{E}_{\varepsilon, \mathcal{D}_n} \left( \sup_{f \in \mathcal{F}_\infty, \Omega(f) \leq D} \frac{1}{n}\sum_{i=1}^n \varepsilon_i f(x_i) \right),
\end{equation*}
where $\varepsilon \sim \mathcal{N}(0, I_d)$.

Finally, by applying Lemma~\ref{lemma:rademacher_without_integral} and combining all these results, we obtain the desired inequality
\begin{equation*}
  \mathbb{E}_{\mathcal{D}_n}\left(\sup_{f \in \mathcal{F}_\infty, \Omega(f) \leq D} \widehat{\mathcal{R}}(f) - \mathcal{R}(f) + \sup_{f \in \mathcal{F}_\infty, \Omega(f) \leq D} \mathcal{R}(f) - \widehat{\mathcal{R}}(f) \right) \leq 6 DL\left(\frac{1}{\sqrt{n}} + G_n\right).
\end{equation*}
\end{proof}
\vspace{0.0em}
\subsubsection{Lemma~\ref{lemma:subgaussian_cosh} and its Proof}
Lemma~\ref{lemma:subgaussian_cosh} describes a useful property on the expectation of the hyperbolic cosine of a subgaussian random variable.
\begin{lemma}{(Technical Lemma on Subgaussian Random Variables)}
\label{lemma:subgaussian_cosh}
Let $Z$ be a real-valued random variable (not necessarily centred) that is subgaussian (see Definition~\ref{def:subgaussian}.)  Then, for all $\lambda \in \mathbb{R}$,
\begin{equation*}
    \mathbb{E}\left( \cosh(\lambda Z) \right) \leq e^{(6\sqrt{2e})^2\sigma^2\lambda^2}.
\end{equation*}
\end{lemma}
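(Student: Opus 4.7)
The plan is to sidestep any Chernoff-style argument (which would require $Z$ to be centred) and instead work directly with the power series of the hyperbolic cosine. Since $\cosh$ is an entire even function, one has
\[
\mathbb{E}(\cosh(\lambda Z)) = \sum_{k=0}^\infty \frac{\lambda^{2k}}{(2k)!}\mathbb{E}(Z^{2k}),
\]
and the even moments $\mathbb{E}(Z^{2k}) = \mathbb{E}(|Z|^{2k})$ are insensitive to the sign of $Z$, so they can be controlled from the two-sided tail bound alone, without centering.

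The first step is a moment bound. From Definition~\ref{def:subgaussian} one gets $\mathbb{P}(|Z| > t) \leq \mathbb{P}(Z \geq t) + \mathbb{P}(Z \leq -t) \leq 2e^{-t^2/(2\sigma^2)}$. Writing $\mathbb{E}(|Z|^{2k}) = 2k\int_0^\infty t^{2k-1}\mathbb{P}(|Z|>t)\,\mathrm{d}t$ and changing variables to $s = t^2/(2\sigma^2)$ turns the integral into a Gamma function, yielding the clean estimate $\mathbb{E}(Z^{2k}) \leq 2\cdot k!\cdot (2\sigma^2)^k$ for every $k \geq 1$.

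The second step is series simplification. Substituting the moment bound gives
\[
\mathbb{E}(\cosh(\lambda Z)) \leq 1 + 2\sum_{k\geq 1}\frac{(2\sigma^2\lambda^2)^k k!}{(2k)!},
\]
and the elementary binomial inequality $\binom{2k}{k} \geq 2^k$ (equivalently $(2k)! \geq 2^k(k!)^2$) collapses each summand to $(\sigma^2\lambda^2)^k/k!$. Summing the resulting exponential series produces the compact bound $\mathbb{E}(\cosh(\lambda Z)) \leq 2e^{\sigma^2\lambda^2} - 1$.

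The final step is a one-line algebraic identity: $(e^t - 1)^2 = e^{2t} - 2e^t + 1 \geq 0$ gives $2e^t - 1 \leq e^{2t}$, so $\mathbb{E}(\cosh(\lambda Z)) \leq e^{2\sigma^2\lambda^2}$. Since $2 \leq 72e = (6\sqrt{2e})^2$, the stated bound follows with substantial slack. The main ``obstacle'' is really just knowing not to attempt the centred-case Chernoff argument; once one commits to going through even moments, the calculation is routine. The wide constant $6\sqrt{2e}$ suggests the authors may instead be using a cruder moment-to-MGF conversion from \citet[Proposition 2.5.2]{Vershynin_2018}, in which factors of $\sqrt{2e}$ appear naturally from Stirling-type bounds on $\Gamma(k)$; either route reaches the same conclusion.
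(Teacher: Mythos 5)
Your proof is correct, and it takes a genuinely different route from the paper's. You work directly from the two-sided tail bound: the change of variables $u = t^2/(2\sigma^2)$ gives $\mathbb{E}(|Z|^{2k}) \leq 2\,k!\,(2\sigma^2)^k$, the series collapses via $(2k)! \geq 2^k(k!)^2$, and the elementary inequality $2e^t - 1 \leq e^{2t}$ closes it out with the far sharper constant $e^{2\sigma^2\lambda^2}$. The paper instead invokes the squared-variable MGF bound $\mathbb{E}(e^{\lambda^2 Z^2}) \leq e^{(6\sqrt{2e})^2\sigma^2\lambda^2}$ for small $\lambda$ (this is where $6\sqrt{2e}$ enters, as you correctly guess — it is the explicit constant in \citet[Proposition 2.5.2]{Vershynin_2018}), couples it with $e^x \leq x + e^{x^2}$ to absorb $\cosh$, and then patches the regime $|\lambda| \geq 1/(6\sqrt{2e}\sigma)$ separately via Young's inequality. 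Your route avoids the case split entirely, needs nothing beyond the tail definition itself, and produces a constant that is smaller by a factor of $36e$. The only thing the paper's approach buys is that it reuses a black-box lemma the authors already cite elsewhere; since they only need \emph{some} exponential bound, they accept the loose constant. Either way the lemma holds.
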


\begin{proof}[Proof of Lemma~\ref{lemma:subgaussian_cosh}]
An equivalent definition of subgaussianity is that for all $\lambda \in \mathbb{R}$, if $6\sqrt{2e} \sigma|\lambda| \leq 1$, then $\mathbb{E}(e^{\lambda^2 Z^2}) \leq e^{(6\sqrt{2e})^2 \sigma^2 \lambda^2}$, see  \citet[Proposition 2.5.2]{Vershynin_2018}.

First, in the case $|\lambda| \leq \frac{1}{6\sqrt{2e}\sigma}$. Using the inequality $e^x \leq x + e^{x^2}$ for all $x \in \mathbb{R}$, we get
\begin{equation*}
    \mathbb{E}\left( \cosh(\lambda Z) \right) \leq \mathbb{E}\left( \frac{\lambda Z + e^{\lambda^2 Z^2} - \lambda Z + e^{\lambda^2 Z^2}}{2} \right) = \mathbb{E}\left( e^{\lambda^2 Z^2} \right) \leq e^{(6\sqrt{2e})^2\sigma^2\lambda^2}.
\end{equation*}

Next, consider the case $|\lambda| \geq \frac{1}{6\sqrt{2e}\sigma}$. We can bound the expectation as follows
\begin{align*}
   \mathbb{E}\left( \cosh(\lambda Z) \right) &\leq \mathbb{E}\left( e^{|\lambda Z|} \right) = \mathbb{E}\left( e^{6\sqrt{2e}\sigma|\lambda| \frac{|Z|}{6\sqrt{2e}\sigma}} \right) \leq \mathbb{E}\left( e^{(6\sqrt{2e})^2\sigma^2\lambda^2/2 +  \frac{Z^2}{2(6\sqrt{2e})^2\sigma^2}} \right) \\
   & \leq e^{(6\sqrt{2e})^2\sigma^2\lambda^2/2} e^{1/2} \leq e^{(6\sqrt{2e})^2\sigma^2\lambda^2},
\end{align*}
where we use the fact that $(6\sqrt{2e})^2\sigma^2\lambda^2 \geq 1$ to justify the final inequality.

Thus, in both cases, we have shown that $\mathbb{E}\left( \cosh(\lambda Z) \right) \leq e^{(6\sqrt{2e})^2\sigma^2\lambda^2}$, proving the lemma.
\end{proof}
\vspace{0.0em}
\subsubsection{Lemma~\ref{lemma:mcdiarmidv2} and its Proof}
Lemma~\ref{lemma:mcdiarmidv2} is an application of McDiarmid's inequality (a specific version by \citet{meir_zhang} for subgaussian random variables) to our learning problem.
\begin{lemma}{(Use of McDiarmid's Inequality)}
\label{lemma:mcdiarmidv2}
Let $D>0$ and $\delta \in (0,1)$. Assume that $1+\sqrt{\|X\|^*}$ is subgaussian with variance proxy $\sigma^2$ and that the  loss $\ell$ is $L$-Lipschitz in its second (bounded) argument, i.e., $\forall y \in \mathcal{Y}, a \in \{ f(x) \mid x \in \mathcal{X}, \ f \in \mathcal{F}_\infty, \ \Omega(f) \leq D \}, a \to \ell(y, a)$ is $L$-Lipschitz. Then, with probability greater than $1-\delta$,
\begin{align*}
\sup_{f \in \mathcal{F}_\infty, \Omega(f) \leq D}&\widehat{\mathcal{R}}(f) - \mathcal{R}(f) + \sup_{f \in \mathcal{F}_\infty, \Omega(f) \leq D} \mathcal{R}(f) - \widehat{\mathcal{R}}(f) \\
&\leq \mathbb{E}_{\mathcal{D}_n} \left( \sup_{f \in \mathcal{F}_\infty, \Omega(f) \leq D}\widehat{\mathcal{R}}(f) - \mathcal{R}(f) + \sup_{f \in \mathcal{F}_\infty, \Omega(f) \leq D} \mathcal{R}(f) - \widehat{\mathcal{R}}(f) \right) \\
&\quad + \frac{48\sqrt{2e}LD\sigma}{\sqrt{n}} \sqrt{\log \frac{1}{\delta}}.
\end{align*}
\end{lemma}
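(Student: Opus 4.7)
The plan is to apply a Rademacher symmetrisation followed by the subgaussian moment-generating-function bound of Lemma~\ref{lemma:subgaussian_cosh} and a Chernoff argument. Set $\Phi(\mathcal{D}_n) := \sup_{f \in \mathcal{F}_\infty,\,\Omega(f)\leq D}[\widehat{\mathcal{R}}(f)-\mathcal{R}(f)] + \sup_{f \in \mathcal{F}_\infty,\,\Omega(f)\leq D}[\mathcal{R}(f)-\widehat{\mathcal{R}}(f)]$; the goal is the one-sided concentration $\Phi \leq \mathbb{E}\Phi + \tfrac{48\sqrt{2e}LD\sigma}{\sqrt{n}}\sqrt{\log(1/\delta)}$ with probability at least $1-\delta$.

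First I would reduce to a loss vanishing at zero. The $f$-independent random offset $\tfrac{1}{n}\sum_i \ell(y_i,0) - \mathbb{E}\ell(Y,0)$ enters the two suprema in $\Phi$ with opposite signs and therefore cancels, so one may replace $\ell$ by $\tilde\ell(y,a) := \ell(y,a) - \ell(y,0)$. This modified loss is still $L$-Lipschitz in its second argument and now satisfies $|\tilde\ell(y, f(x))| \leq L|f(x)|$. Lemma~\ref{lemma:caracterisation_f_infty} together with $\Omega(f) \leq D$ gives $|f(x)| \leq D(1+\sqrt{\|x\|^*})$, so each summand in the empirical average is dominated by $LD(1+\sqrt{\|X\|^*})$, a subgaussian random variable with variance proxy at most $L^2D^2\sigma^2$.

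Next I would introduce an independent ghost sample $\mathcal{D}_n' = (z_1',\ldots,z_n')$ and i.i.d.\ Rademacher variables $\epsilon_1,\ldots,\epsilon_n$ independent of both. By Jensen's inequality $\mathbb{E}[e^{\lambda(\Phi-\mathbb{E}\Phi)}] \leq \mathbb{E}[e^{\lambda(\Phi(\mathcal{D}_n)-\Phi(\mathcal{D}_n'))}]$, and telescoping this difference one coordinate at a time while exploiting the symmetry $z_i \leftrightarrow z_i'$ rewrites the right-hand side as $\mathbb{E}[e^{\lambda\sum_i\epsilon_i\Delta_i}]$, where $\Delta_i$ is the change in $\Phi$ produced by replacing the $i$-th coordinate. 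A bounded-difference calculation using the triangle inequality for the two suprema, combined with the reduction above, gives $|\Delta_i| \leq U_i := \tfrac{2LD}{n}[(1+\sqrt{\|x_i\|^*}) + (1+\sqrt{\|x_i'\|^*})]$, and subadditivity of the subgaussian norm bounds the variance proxy of $U_i$ by $\tau_i^2 := \tfrac{4L^2D^2}{n^2}(2\sigma)^2 = \tfrac{16L^2D^2\sigma^2}{n^2}$.

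Integrating out $\epsilon$ first yields $\mathbb{E}[e^{\lambda\sum_i\epsilon_i\Delta_i}\mid \mathcal{D}_n,\mathcal{D}_n'] = \prod_i\cosh(\lambda\Delta_i) \leq \prod_i\cosh(\lambda U_i)$ since $\cosh$ is even and increasing, and independence of the $U_i$ across $i$ together with Lemma~\ref{lemma:subgaussian_cosh} gives $\mathbb{E}\prod_i\cosh(\lambda U_i) = \prod_i\mathbb{E}\cosh(\lambda U_i) \leq \exp\bigl((6\sqrt{2e})^2\lambda^2\sum_i\tau_i^2\bigr)$, i.e.\ a subgaussian MGF with proxy $\sum_i\tau_i^2 = 16L^2D^2\sigma^2/n$. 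Optimising the Chernoff bound then produces $t = 2\cdot 6\sqrt{2e}\sqrt{\sum_i\tau_i^2}\sqrt{\log(1/\delta)} = 48\sqrt{2e}\,LD\sigma/\sqrt{n}\cdot\sqrt{\log(1/\delta)}$, matching the stated constant. The main obstacle I anticipate is the careful book-keeping of the numerical constant through the symmetrisation step and the two-term subgaussian-sum bound; the remaining arguments — Jensen, Rademacher integration, and Lemma~\ref{lemma:subgaussian_cosh} — are mechanical.
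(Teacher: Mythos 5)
Your preliminary reduction (subtracting $\ell(y,0)$, which cancels between the two suprema and yields $|\tilde\ell(y,f(x))|\leq L|f(x)|\leq LD(1+\sqrt{\|x\|^*})$), your single-coordinate bounded-difference bound $U_i = \tfrac{2LD}{n}[(1+\sqrt{\|x_i\|^*})+(1+\sqrt{\|x_i'\|^*})]$, and the final constant book-keeping through Lemma~\ref{lemma:subgaussian_cosh} and Chernoff are all correct and do arrive at $48\sqrt{2e}LD\sigma/\sqrt{n}\cdot\sqrt{\log(1/\delta)}$. The difference in organization is that the paper does none of this by hand: it defines a class $\tilde H$ of loss differences, verifies the cosh-moment condition $\mathbb{E}\sup_{h,\tilde h}\cosh(2\lambda(h-\tilde h))\leq e^{M^2\lambda^2/2}$ with $M=48\sqrt e\, LD\sigma$ via Lemma~\ref{lemma:subgaussian_cosh}, and then invokes Theorem~3 of Meir--Zhang as a black box. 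You are instead trying to re-derive the Meir--Zhang concentration from scratch, which is a reasonable ambition, but one step in your derivation is not correct.

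The gap is the sentence ``telescoping this difference one coordinate at a time while exploiting the symmetry $z_i\leftrightarrow z_i'$ rewrites the right-hand side as $\mathbb{E}[e^{\lambda\sum_i\epsilon_i\Delta_i}]$.'' If you telescope $\Phi(\mathcal D)-\Phi(\mathcal D')=\sum_i T_i$ with $T_i=\Phi(\mathcal D^{(i)})-\Phi(\mathcal D^{(i-1)})$, each $T_i$ depends on \emph{all} coordinates, not just the $i$-th pair, so swapping $z_i\leftrightarrow z_i'$ does not simply flip the sign of $T_i$ while fixing the others; the distributional identity $\sum_i T_i\stackrel{d}{=}\sum_i\epsilon_i T_i$ fails for a nonlinear $\Phi$. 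A two-coordinate counterexample: take $\Phi(z)=\max(z_1,z_2)$ with $(z_1,z_1',z_2,z_2')=(1,0,1,0)$, giving $T_1=1$, $T_2=0$; swapping pair~1 gives $\sum T_i=0$, whereas $\epsilon_1 T_1+\epsilon_2 T_2$ with $\epsilon=(-1,1)$ equals $-1$. So the ``outer'' Jensen-plus-Rademacher symmetrization does not produce the product $\prod_i\cosh(\lambda\Delta_i)$ that your argument needs.

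The salvageable version of your idea is the Doob martingale route, which is presumably what lies behind the Meir--Zhang theorem the paper cites. Write $\Phi-\mathbb E\Phi=\sum_i M_i$ with $M_i=\mathbb E[\Phi\mid z_{\leq i}]-\mathbb E[\Phi\mid z_{<i}]$, express $M_i=\mathbb E_{z_i'',z_{>i}}[\Phi-\Phi^{(i\to i'')}]$, and apply Jensen to \emph{each increment} to get $\mathbb E_{z_i}[e^{\lambda M_i}\mid z_{<i}]\leq\mathbb E_{z_i,z_i''}[e^{\lambda D_i}]$, where $D_i=\Phi-\Phi^{(i\to i'')}$ is anti-symmetric in $(z_i,z_i'')$; anti-symmetry then gives $\mathbb E[e^{\lambda D_i}]=\mathbb E[\cosh(\lambda D_i)]\leq\mathbb E[\cosh(\lambda U_i)]$, and because $U_i$ depends on the $i$-th pair only, the conditional bound is uniform over $z_{<i}$ and the product $\prod_i\mathbb E[\cosh(\lambda U_i)]$ that your calculation needs is recovered. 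In short: your bounded-difference estimates, your use of Lemma~\ref{lemma:subgaussian_cosh}, and your constants are all right, but the Rademacher symmetrization should be replaced by the martingale decomposition; the cosh/anti-symmetry trick enters at the level of individual increments, not through a global telescoping of $\Phi(\mathcal D)-\Phi(\mathcal D')$.
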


\begin{proof}[Proof of Lemma~\ref{lemma:mcdiarmidv2}]
We use a specific version of McDiarmid's inequality \citep[Theorem 3]{meir_zhang}. First, we show that the conditions for applying the theorem are met. Let
$\tilde{H} := \{ h: (x,y) \in \mathcal{X} \times \mathcal{Y} \to \ell(y, f(x)) - \ell(y, \tilde{f}(x)) \mid \Omega(f) \leq D, \Omega(\tilde{f}) \leq D \}$.
For any $\lambda > 0$, we have
\begin{align*}
  \mathbb{E}_{X,Y}&\left(\sup_{h, \tilde{h} \in \tilde{H}} \cosh(2\lambda (h(X,Y) - \tilde{h}(X,Y)))\right) \\
    & = \mathbb{E}_{X,Y}\left(\sup_{f, \Omega(f) \leq D, \tilde{f}, \Omega(\tilde{f}) \leq D} \cosh(2\lambda (\ell(Y, f(X)) - \ell(Y, \tilde{f}(X))))\right) \\
    &\leq \mathbb{E}_{X,Y}\left(\sup_{f, \Omega(f) \leq D, \tilde{f}, \Omega(\tilde{f}) \leq D} \cosh(2\lambda L |f(X) - \tilde{f}(X)|)\right) \\
    &\leq \mathbb{E}_{X,Y}\left(\sup_{f, \Omega(f) \leq D, \tilde{f}, \Omega(\tilde{f}) \leq D} \cosh(4\lambda LD (1 + \sqrt{\|X\|^*}))\right) \\
    &= \mathbb{E}_{X,Y}\left(\cosh(4\lambda LD (1 + \sqrt{\|X\|^*}))\right)  \leq e^{(48\sqrt{e})^2 L^2 D^2 \sigma^2 \lambda^2 / 2},
\end{align*}
where the last inequality follows from Lemma~\ref{lemma:subgaussian_cosh}. Hence, the condition is verified with $ M = 48\sqrt{e}LD\sigma $ and applying  \citet[Theorem 3]{meir_zhang} yields the desired result.
\end{proof}
\vspace{0.0em}
\section{Numerical Experiments}
In this section, we detail the parameters and methodology used in the different experiments. The code needed to run the experiments can be found at \url{https://github.com/BertilleFollain/BKerNN}.

\subsection{Experiment 1: Optimisation procedure, Importance of Positive Homogeneous Kernel}
\label{app:exp1}
Each method was tuned using 5-fold cross-validation with grid search, using negative mean squared error as the scoring metric. The training was set for 20 iterations and the step-size parameter ($\gamma$) was set to 500, with backtracking enabled. Regularisation parameter candidates were $\lambda = \{ 0.05, 0.1, 0.5, 1, 1.5 \} \times 2 \max_{i \in [n]} \|x_i\|_2/n$. Once the regularisation parameters had been selected, we trained from scratch for 200 iterations, with the other parameters kept as before.

\subsection{Experiments 2 \& 3: Influence of Parameters (Number of Particles \texorpdfstring{$m$}{m}, Regularisation Parameter \texorpdfstring{$\lambda$}{lambda}, and Type of Regularisation)}
\label{app:num_exp2&3}
For Experiment 2, in the first subplot, we set the step-size parameter $\gamma$ to 500 and the number of iterations to 50. The regularisation type was set to $\Omega_{{ \rm basic}}$ and the regularisation parameter to $\lambda =0.02$. The tested values of $m$ were 1, 3, 5, 7, 10, 15, 20, 30, 40, and 50.

In the second subplot, we varied the regularisation parameter $\lambda$ in 0.0005, 0.001, 0.005, 0.01, 0.02, 0.05, 0.1, 0.3, and 0.5,  while keeping the number of particles fixed at $m = 10$.

In Experiment 3, the \textsc{BKerNN} model was instantiated with a fixed number of particles $m = 20$, step-size parameter $\gamma = 500$, and number of iterations $25$. The regularisation parameter $\lambda$ was set as $2 \max_{i \in n} \|x_i\|_2 / n$.

\subsection{Experiment 4: Comparison to Neural Network on 1D Examples, Influence of Number of Particles/Width of Hidden Layer \texorpdfstring{$m$}{m}}
\label{app:num_exp4}

In Experiment 4, we investigated the performance of two learning methods, \textsc{BKerNN} and \textsc{ReLUNN}, on three different 1D functions. The training set always consists of 128 samples, with $x$ sampled uniformly between -1 and 1, while the target function/test set without noise consists of 1024 equally spread out points. The response was then generated as follows. For the first function, $ y = \sin(2\pi x) + { \rm noise}$, for the second $ y = { \rm sign}(\sin(2\pi x)) + { \rm noise} $, for the third  $ y = 4|x + 1 - 0.25 - \lfloor x + 1 - 0.25 \rfloor - 0.5| - 1 + { \rm noise} $, where the noise is always normal, centred and with standard deviation equal to 0.2. For \textsc{BKerNN}, the regularisation parameter $ \lambda $ was selected from [0.005, 0.01, 0.02, 0.05] using 5-fold cross-validation and the negative mean squared error score. \textsc{ReLUNN} was trained using a batch size of 16, a number of iterations equal to 400,000 and a step-size of 0.005.

\subsection{Experiment 5: Prediction Score and Feature Learning Score Against Growing Dimension and Sample Size, a Comparison of \textsc{BKerNN} with Kernel Ridge Regression and a ReLU Neural Network}
\label{app:num_exp5}

In Experiment 5, data sets were generated with input data uniformly sampled within the hypercube $[-1, 1]^d$. The feature matrix $P$ was generated from the orthogonal group. For each configuration, training and test sets of sizes $n$ and $n_{{ \rm test}} = 201$, respectively, were created. Output labels $y$ were computed as $y_i = |\sum_{a=1}^k (\sin((P^\top x_i)_a))|$, with $k = 3$ relevant features.

The first two plots fixed the dimension at 15 and varied sample sizes across [10, 20, 50, 100, 150, 200, 300, 400, 500]. The last two plots fixed the sample size at 212 and varied dimensions across [3, 5, 10, 20, 30, 40, 50]. Each configuration was repeated 10 times with different random seeds.

For \textsc{BKerNN}: $\lambda$ was set to $2 \max_{i \in [n]}(\|x_i\|_2) / n$, the number of particles was $m = 50$, the regularisation type $\Omega_{{ \rm feature}}$, the number of iterations 20, and step-size $\gamma = 500$ with backtracking line search. For \textsc{B\textsc{KRR}}, $\lambda$ was chosen similarly to \textsc{BKerNN}. For \textsc{ReLUNN}, the number of neurons was set to 50, learning rate to 0.05, batch size to 16, and number of iterations to 1500.

\subsection{Experiment 6: Comparison on Real Data Sets Between \textsc{BKerNN}, Kernel Ridge Regression and a ReLU Neural Network}
\label{app:num_exp6}

In Experiment 6, \textsc{B\textsc{KRR}} and both versions of \textsc{BKerNN} had regularisation parameter fixed equal to $\max_{i \in [n]}(\|x_i\|_2)/n$, where $n$ is the number of training samples (i.e. 400 \red{except for ``tecator'' where it was 192, ``semeion'' where it was 80 and ``pah'' where it was 64. The ratio of samples in the test set compared to the total dataset was nonetheless constant equal to 1/5, as in the other datasets}). Backtracking line search was used for \textsc{BKerNN} and the starting step-size was 500, while the number of iterations was 40 \red{(except for ``semeion'' where it was 4}. For \textsc{ReLUNN}, the batch size was 16, while the number of iterations was 2500 which corresponds to 100 epochs \red{(except for ``semeion'' where it was 250}, and the step-size was set to 0.01.

\bibliography{main}

\begin{thebibliography}{59}
\providecommand{\natexlab}[1]{#1}
\providecommand{\url}[1]{\texttt{#1}}
\expandafter\ifx\csname urlstyle\endcsname\relax
  \providecommand{\doi}[1]{doi: #1}\else
  \providecommand{\doi}{doi: \begingroup \urlstyle{rm}\Url}\fi

\bibitem[Argyriou et~al.(2008)Argyriou, Evgeniou, and Pontil]{argyriou2008convex}
Andreas Argyriou, Theodoros Evgeniou, and Massimiliano Pontil.
\newblock Convex multi-task feature learning.
\newblock \emph{Machine Learning}, 73\penalty0 (3):\penalty0 243--272, 2008.

\bibitem[Aronszajn(1950)]{aronszajn50reproducing}
Nachman Aronszajn.
\newblock Theory of reproducing kernels.
\newblock \emph{Transactions of the American Mathematical Society}, 68\penalty0 (3):\penalty0 337--404, 1950.

\bibitem[Bach(2017)]{bach2017breaking}
Francis Bach.
\newblock Breaking the curse of dimensionality with convex neural networks.
\newblock \emph{Journal of Machine Learning Research}, 18\penalty0 (19):\penalty0 1--53, 2017.

\bibitem[Bach(2024)]{francis_book}
Francis Bach.
\newblock \emph{Learning Theory from First Principles}.
\newblock MIT Press, 2024.
\newblock URL \url{https://www.di.ens.fr/~fbach/ltfp_book.pdf}.

\bibitem[Bach et~al.(2004)Bach, Lanckriet, and Jordan]{bach2004multiple}
Francis Bach, Gert Lanckriet, and Michael Jordan.
\newblock Multiple kernel learning, conic duality, and the {SMO} algorithm.
\newblock In \emph{International Conference on Machine Learning (ICML)}, 2004.

\bibitem[Bach et~al.(2012)Bach, Jenatton, Mairal, and Obozinski]{livreviolet}
Francis Bach, Rodolphe Jenatton, Julien Mairal, and Guillaume Obozinski.
\newblock Optimization with sparsity-inducing penalties.
\newblock \emph{Foundations and Trends in Machine Learning}, 4\penalty0 (1):\penalty0 1--106, 2012.

\bibitem[Bartlett and Mendelson(2002)]{Bartlett}
Peter Bartlett and Shahar Mendelson.
\newblock Rademacher and {G}aussian complexities: risk bounds and structural results.
\newblock \emph{Journal of Machine Learning Research}, 3\penalty0 (Nov):\penalty0 463--482, 2002.

\bibitem[Beck(2017)]{beck2017first}
Amir Beck.
\newblock \emph{First-Order Methods in Optimization}.
\newblock MOS-SIAM Series on Optimization. Society for Industrial and Applied Mathematics, 2017.

\bibitem[Berlinet and Thomas-Agnan(2011)]{berlinet2011reproducing}
Alain Berlinet and Christine Thomas-Agnan.
\newblock \emph{Reproducing kernel {H}ilbert spaces in probability and statistics}.
\newblock Springer Science \& Business Media, 2011.

\bibitem[Bietti et~al.(2023)Bietti, Bruna, and Pillaud-Vivien]{loucas}
Alberto Bietti, Joan Bruna, and Loucas Pillaud-Vivien.
\newblock On learning gaussian multi-index models with gradient flow, 2023.
\newblock URL \url{https://arxiv.org/abs/2310.19793}.

\bibitem[Boucheron et~al.(2013)Boucheron, Lugosi, and Massart]{boucheron2013concentration}
Stéphane Boucheron, G{\'a}bor Lugosi, and Pascal Massart.
\newblock \emph{Concentration Inequalities: A Nonasymptotic Theory of Independence}.
\newblock Oxford University Press, 2013.

\bibitem[Brillinger(2012)]{brillinger}
David Brillinger.
\newblock A generalized linear model with ``{G}aussian'' regressor variables.
\newblock In \emph{Selected Works of David Brillinger}, pages 589--606. Springer, 2012.

\bibitem[Chizat and Bach(2018)]{chizat2018global}
L\'{e}na\"{\i}c Chizat and Francis Bach.
\newblock On the global convergence of gradient descent for over-parameterized models using optimal transport.
\newblock In \emph{Advances in Neural Information Processing Systems (NeurIPS)}, volume~31, 2018.

\bibitem[Chizat and Bach(2022)]{chizat_bach_optim}
L{\'e}na{\"i}c Chizat and Francis Bach.
\newblock Gradient descent on infinitely wide neural networks: global convergence and generalization.
\newblock In \emph{Proceedings of the International Congress of Mathematicians 2022}, pages 5398--5419. EMS Press, 2022.

\bibitem[Dalalyan et~al.(2008)Dalalyan, Juditsky, and Spokoiny]{dalalyan:hal-00128129}
Arnak Dalalyan, Anatoli Juditsky, and Vladimir Spokoiny.
\newblock A new algorithm for estimating the effective dimension-reduction subspace.
\newblock \emph{Journal of Machine Learning Research}, 9\penalty0 (53):\penalty0 1647--1678, 2008.

\bibitem[Drineas and Mahoney(2005)]{nystrom}
Petros Drineas and Michael Mahoney.
\newblock On the {N}yström method for approximating a {G}ram matrix for improved kernel-based learning.
\newblock \emph{Journal of Machine Learning Research}, 6\penalty0 (72):\penalty0 2153--2175, 2005.

\bibitem[Fan and Li(2001)]{fan2001variable}
Jianqing Fan and Runze Li.
\newblock Variable selection via nonconcave penalized likelihood and its oracle properties.
\newblock \emph{Journal of the American statistical Association}, 96\penalty0 (456):\penalty0 1348--1360, 2001.

\bibitem[Follain and Bach(2024)]{follain2024nonparametriclinearfeaturelearning}
Bertille Follain and Francis Bach.
\newblock {Nonparametric linear feature learning in regression through regularisation}.
\newblock \emph{Electronic Journal of Statistics}, 18\penalty0 (2):\penalty0 4075--4118, 2024.

\bibitem[Fukumizu et~al.(2009)Fukumizu, Bach, and Jordan]{kernel_dimension_reduction_francis}
Kenji Fukumizu, Francis Bach, and Michael Jordan.
\newblock Kernel dimension reduction in regression.
\newblock \emph{The Annals of Statistics}, 37\penalty0 (4):\penalty0 1871--1905, 2009.

\bibitem[Galanti et~al.(2023)Galanti, Xu, Galanti, and Poggio]{norm-based_bounds}
Tomer Galanti, Mengjia Xu, Liane Galanti, and Tomaso Poggio.
\newblock Norm-based generalization bounds for sparse neural networks.
\newblock In \emph{Advances in Neural Information Processing Systems (NeurIPS)}, volume~36, pages 42482--42501, 2023.

\bibitem[Giraud(2014)]{christophe_giraud}
Christophe Giraud.
\newblock \emph{Introduction to High-Dimensional Statistics}.
\newblock CRC Press, 2014.

\bibitem[G{\"o}nen and Alpayd{\i}n(2011)]{gonen2011multiple}
Mehmet G{\"o}nen and Ethem Alpayd{\i}n.
\newblock Multiple kernel learning algorithms.
\newblock \emph{Journal of Machine Learning Research}, 12\penalty0 (64):\penalty0 2211--2268, 2011.

\bibitem[Grinsztajn et~al.(2022)Grinsztajn, Oyallon, and Varoquaux]{grinch}
Léo Grinsztajn, Edouard Oyallon, and Gaël Varoquaux.
\newblock Why do tree-based models still outperform deep learning on typical tabular data?
\newblock In \emph{Advances in Neural Information Processing Systems (NeurIPS)}, volume~35, pages 507--520, 2022.

\bibitem[Heeringa et~al.(2024)Heeringa, Spek, Schwenninger, and Brune]{heeringa2024embeddings}
Tjeerd~Jan Heeringa, Len Spek, Felix~L Schwenninger, and Christoph Brune.
\newblock Embeddings between {B}arron spaces with higher-order activation functions.
\newblock \emph{Applied and Computational Harmonic Analysis}, 73:\penalty0 101691, 2024.

\bibitem[Koltchinskii et~al.(2011)Koltchinskii, Lounici, and Tsybakov]{koltchinskii2011nuclear}
Vladimir Koltchinskii, Karim Lounici, and Alexandre Tsybakov.
\newblock Nuclear-norm penalization and optimal rates for noisy low-rank matrix completion.
\newblock \emph{The Annals of Statistics}, 39\penalty0 (5):\penalty0 2302--2329, 2011.

\bibitem[Kurkova and Sanguineti(2001)]{kurkova}
Vera Kurkova and Marcello Sanguineti.
\newblock Bounds on rates of variable-basis and neural-network approximation.
\newblock \emph{IEEE Transactions on Information Theory}, 47\penalty0 (6):\penalty0 2659--2665, 2001.

\bibitem[Lanckriet et~al.(2004)Lanckriet, Cristianini, Bartlett, Ghaoui, and Jordan]{lanckriet}
Gert Lanckriet, Nello Cristianini, Peter Bartlett, Laurent Ghaoui, and Michael Jordan.
\newblock Learning the kernel matrix with semi-definite programming.
\newblock \emph{Journal of Machine Learning Research}, 5\penalty0 (Jan):\penalty0 27--72, 2004.

\bibitem[Ledoux and Talagrand(1991)]{talagrand}
Michel Ledoux and Michel Talagrand.
\newblock \emph{Probability in Banach Spaces: Isoperimetry and Processes}.
\newblock A Series of Modern Surveys in Mathematics Series. Springer, 1991.

\bibitem[Li(1991)]{Li1991}
Ker-Chau Li.
\newblock Sliced inverse regression for dimension reduction.
\newblock \emph{Journal of the American Statistical Association}, 86\penalty0 (414):\penalty0 316--327, 1991.

\bibitem[Liang et~al.(2015)Liang, Rakhlin, and Sridharan]{pmlr-v40-Liang15}
Tengyuan Liang, Alexander Rakhlin, and Karthik Sridharan.
\newblock Learning with square loss: Localization through offset {R}ademacher complexity.
\newblock In \emph{Conference on Learning Theory (COLT)}, volume~40, pages 1260--1285, 2015.

\bibitem[Liu et~al.(2024)Liu, Dadi, and Cevher]{liu}
Fanghui Liu, Leello Dadi, and Volkan Cevher.
\newblock Learning with norm constrained, over-parameterized, two-layer neural networks.
\newblock \emph{Journal of Machine Learning Research}, 25\penalty0 (138):\penalty0 1--42, 2024.

\bibitem[Luxburg and Bousquet(2004)]{covering_lip}
Ulrike Luxburg and Olivier Bousquet.
\newblock Distance-based classification with {L}ipschitz functions.
\newblock \emph{Journal of Machine Learning Research}, 5\penalty0 (Jun):\penalty0 669--695, 2004.

\bibitem[Marion and Berthier(2023)]{marion2023leveraging}
Pierre Marion and Rapha\"{e}l Berthier.
\newblock Leveraging the two-timescale regime to demonstrate convergence of neural networks.
\newblock In \emph{Advances in Neural Information Processing Systems (NeurIPS)}, volume~36, pages 64996--65029, 2023.

\bibitem[Mei et~al.(2019)Mei, Misiakiewicz, and Montanari]{mei2019mean}
Song Mei, Theodor Misiakiewicz, and Andrea Montanari.
\newblock Mean-field theory of two-layer neural networks: Dimension-free bounds and kernel limit.
\newblock In \emph{Conference on Learning Theory (COLT)}, volume~99, pages 2388--2464, 2019.

\bibitem[Meir and Zhang(2003)]{meir_zhang}
Ron Meir and Tong Zhang.
\newblock Generalization error bounds for {B}ayesian mixture algorithms.
\newblock \emph{Journal of Machine Learning Research}, 4\penalty0 (Oct):\penalty0 839–860, 2003.

\bibitem[Mishura and Shevchenko(2017)]{Brownian_motion}
Yuliya Mishura and Georgiy Shevchenko.
\newblock Gaussian processes. integration with respect to gaussian processes.
\newblock In \emph{Theory and Statistical Applications of Stochastic Processes}, chapter~3, pages 39--65. John Wiley \& Sons, Ltd, 2017.

\bibitem[Moniri et~al.(2024)Moniri, Lee, Hassani, and Dobriban]{two_layer_feature_learning}
Behrad Moniri, Donghwan Lee, Hamed Hassani, and Edgar Dobriban.
\newblock A theory of non-linear feature learning with one gradient step in two-layer neural networks.
\newblock In \emph{International Conference on Machine Learning (ICML)}, volume 235, pages 36106--36159, 2024.

\bibitem[Mousavi-Hosseini et~al.(2024)Mousavi-Hosseini, Wu, and Erdogdu]{mousavihosseini2024learningmultiindexmodelsneural}
Alireza Mousavi-Hosseini, Denny Wu, and Murat~A. Erdogdu.
\newblock Learning multi-index models with neural networks via mean-field langevin dynamics, 2024.
\newblock URL \url{https://arxiv.org/abs/2408.07254}.

\bibitem[Nitanda and Suzuki(2017)]{nitanda2017stochastic}
Atsushi Nitanda and Taiji Suzuki.
\newblock Stochastic particle gradient descent for infinite ensembles, 2017.
\newblock URL \url{https://arxiv.org/abs/1712.05438}.

\bibitem[Parhi and Nowak(2020)]{parhi2020banach}
Rahul Parhi and Robert Nowak.
\newblock Banach space representer theorems for neural networks and ridge splines.
\newblock \emph{Journal of Machine Learning Research}, 22:\penalty0 43:1--43:40, 2020.

\bibitem[Pedregosa et~al.(2011)Pedregosa, Varoquaux, Gramfort, Michel, Thirion, Grisel, Blondel, Prettenhofer, Weiss, Dubourg, Vanderplas, Passos, Cournapeau, Brucher, Perrot, and {{\'E}}douard Duchesnay]{scikit-learn}
Fabian Pedregosa, Gaël Varoquaux, Alexandre Gramfort, Vincent Michel, Bertrand Thirion, Olivier Grisel, Mathieu Blondel, Peter Prettenhofer, Ron Weiss, Vincent Dubourg, Jake Vanderplas, Alexandre Passos, David Cournapeau, Matthieu Brucher, Matthieu Perrot, and {{\'E}}douard Duchesnay.
\newblock Scikit-learn: Machine learning in {P}ython.
\newblock \emph{Journal of Machine Learning Research}, 12\penalty0 (85):\penalty0 2825--2830, 2011.

\bibitem[Peyr{\'e} and Cuturi(2019)]{peyré2019computational}
Gabriel Peyr{\'e} and Marco Cuturi.
\newblock Computational optimal transport: With applications to data science.
\newblock \emph{Foundations and Trends in Machine Learning Series}, 11\penalty0 (5-6):\penalty0 355--607, 2019.

\bibitem[Rosasco et~al.(2013)Rosasco, Villa, Mosci, Santoro, and Verri]{JMLR:v14:rosasco13a}
Lorenzo Rosasco, Silvia Villa, Sofia Mosci, Matteo Santoro, and Alessandro Verri.
\newblock Nonparametric sparsity and regularization.
\newblock \emph{Journal of Machine Learning Research}, 14\penalty0 (52):\penalty0 1665--1714, 2013.

\bibitem[Sch{\"o}lkopf et~al.(2001)Sch{\"o}lkopf, Herbrich, and Smola]{representer_theorem}
Bernhard Sch{\"o}lkopf, Ralf Herbrich, and Alex Smola.
\newblock A generalized representer theorem.
\newblock In \emph{Computational Learning Theory (COLT)}, pages 416--426, 2001.

\bibitem[Sirignano and Spiliopoulos(2020)]{sirignano2020mean}
Justin Sirignano and Konstantinos Spiliopoulos.
\newblock Mean field analysis of neural networks: A law of large numbers.
\newblock \emph{SIAM Journal on Applied Mathematics}, 80\penalty0 (2):\penalty0 725--752, 2020.

\bibitem[Süli and Mayers(2003)]{Süli_Mayers_2003}
Endre Süli and David Mayers.
\newblock \emph{An Introduction to Numerical Analysis}.
\newblock Cambridge University Press, 2003.

\bibitem[Vershynin(2018)]{Vershynin_2018}
Roman Vershynin.
\newblock \emph{High-Dimensional Probability: An Introduction with Applications in Data Science}.
\newblock Cambridge Series in Statistical and Probabilistic Mathematics. Cambridge University Press, 2018.

\bibitem[Vovk(2013)]{kernelridge}
Vladimir Vovk.
\newblock Kernel ridge regression.
\newblock In \emph{Empirical inference}, pages 105--116. Springer, 2013.

\bibitem[Wahba(1990)]{wahba}
Grace Wahba.
\newblock \emph{Spline Models for Observational Data}.
\newblock Society for Industrial and Applied Mathematics, 1990.

\bibitem[Wainwright(2019)]{Wainwright_2019}
Martin Wainwright.
\newblock \emph{High-Dimensional Statistics: A Non-Asymptotic Viewpoint}.
\newblock Cambridge Series in Statistical and Probabilistic Mathematics. Cambridge University Press, 2019.

\bibitem[Williams and Rasmussen(2006)]{williams2006gaussian}
Christopher Williams and Carl~Edward Rasmussen.
\newblock \emph{Gaussian Processes for Machine Learning}.
\newblock MIT Press, 2006.

\bibitem[Wright(1921)]{r2}
Sewall Wright.
\newblock Correlation and causation.
\newblock \emph{Journal of Agricultural Research}, 20\penalty0 (3):\penalty0 557--585, 1921.

\bibitem[Xia(2008)]{multi_index_model}
Yingcun Xia.
\newblock A multiple-index model and dimension reduction.
\newblock \emph{Journal of the American Statistical Association}, 103\penalty0 (484):\penalty0 1631--1640, 2008.

\bibitem[Xia et~al.(2002)Xia, Tong, Li, and Zhu]{Xia2002}
Yingcun Xia, Howell Tong, Wai~Keung Li, and Li-Xing Zhu.
\newblock An adaptive estimation of dimension reduction space.
\newblock \emph{Journal of the Royal Statistical Society. Series B (Statistical Methodology)}, 64\penalty0 (3):\penalty0 363--410, 2002.

\bibitem[Yang et~al.(2017)Yang, Balasubramanian, Wang, and Liu]{balasubramanian}
Zhuoran Yang, Krishnakumar Balasubramanian, Zhaoran Wang, and Han Liu.
\newblock Estimating high-dimensional non-gaussian multiple index models via stein’s lemma.
\newblock In \emph{Advances in Neural Information Processing Systems (NeurIPS)}, volume~30, 2017.

\bibitem[Ying and Campbell(2010)]{rademacher_chaos}
Yiming Ying and Colin Campbell.
\newblock {Rademacher Chaos Complexities for Learning the Kernel Problem}.
\newblock \emph{Neural Computation}, 22\penalty0 (11):\penalty0 2858--2886, 11 2010.

\bibitem[Yuan and Lin(2006)]{yuan2006model}
Ming Yuan and Yi~Lin.
\newblock Model selection and estimation in regression with grouped variables.
\newblock \emph{Journal of the Royal Statistical Society: Series B (Statistical Methodology)}, 68\penalty0 (1):\penalty0 49--67, 2006.

\bibitem[Zhang(2010)]{zhang2010nearly}
Cun-Hui Zhang.
\newblock Nearly unbiased variable selection under minimax concave penalty.
\newblock \emph{The Annals of Statistics}, 38\penalty0 (2):\penalty0 894, 2010.

\bibitem[Zheng et~al.(2018)Zheng, Meng, Zhang, Chen, Yu, and Liu]{zheng2018capacity}
Shuxin Zheng, Qi~Meng, Huishuai Zhang, Wei Chen, Nenghai Yu, and Tie-Yan Liu.
\newblock Capacity control of {ReLU} neural networks by basis-path norm.
\newblock \emph{AAAI}, pages 5925--5932, 2018.

\end{thebibliography}
\end{document}